\newtheorem{theorem}{Theorem}
\newtheorem{definition}{Definition}
\newtheorem{lemma}{Lemma}
\newtheorem{proposition}{Proposition}
\newtheorem{corollary}{Corollary}
\newtheorem{property}{Property}
\newcommand{\first}[1]{\textcolor{red}{\textbf{#1}}}
\newcommand{\second}[1]{\textbf{#1}}
\title{How Powerful are $K$-hop Message Passing Graph Neural Networks}
\author{
    Jiarui Feng$^{1,2}$~~~Yixin Chen$^{1}$~~~Fuhai Li$^{2}$~~~Anindya Sarkar$^{1}$~~~Muhan Zhang$^{3,4}$\\
    \texttt{\{feng.jiarui, fuhai.li, anindya\}@wustl.edu,}\\
    \texttt{chen@cse.wustl.edu,~muhan@pku.edu.cn}\\${}^1$Department of CSE, Washington University in St. Louis\\
${}^2$Institute for Informatics, Washington University School of Medicine\\
${}^3$Institute for Artificial Intelligence, Peking University\\
${}^4$Beijing Institute for General Artificial Intelligence\\}
\begin{document}
\maketitle

\begin{abstract}
The most popular design paradigm for Graph Neural Networks (GNNs) is 1-hop message passing---aggregating information from 1-hop neighbors repeatedly. However, the expressive power of 1-hop message passing is bounded by the Weisfeiler-Lehman (1-WL) test. Recently, researchers extended 1-hop message passing to $K$-hop message passing by aggregating information from $K$-hop neighbors of nodes simultaneously. However, there is no work on analyzing the expressive power of $K$-hop message passing. In this work, we theoretically characterize the expressive power of $K$-hop message passing. Specifically, we first formally differentiate two different kernels of $K$-hop message passing which are often misused in previous works. We then characterize the expressive power of $K$-hop message passing by showing that it is more powerful than 1-WL and can distinguish almost all regular graphs. Despite the higher expressive power, we show that $K$-hop message passing still cannot distinguish some simple regular graphs and its expressive power is bounded by 3-WL. To further enhance its expressive power, we introduce a KP-GNN framework, which improves $K$-hop message passing by leveraging the peripheral subgraph information in each hop. We show that KP-GNN can distinguish many distance regular graphs which could not be distinguished by previous distance encoding or 3-WL methods. Experimental results verify the expressive power and effectiveness of KP-GNN. KP-GNN achieves competitive results across all benchmark datasets.
\end{abstract}

\section{Introduction}
Currently, most existing graph neural networks (GNNs) follow the \textit{message passing} framework, which iteratively aggregates information from the neighbors and updates the representations of nodes. It has shown superior performance on graph-related tasks~\citep{kipf2017semisupervised,duvenaud2015convolutional,hamilton2017inductive,veličković2018graph,li2015gated,zhang2018end,xu2018powerful} comparing to traditional graph embedding techniques~\citep{grover2016node2vec,perozzi2014deepwalk}. However, as the procedure of message passing is similar to the 1-dimensional Weisfeiler-Lehman (1-WL) test~\citep{weisfeiler1968reduction}, the expressive power of message passing GNNs is also bounded by the 1-WL test~\citep{xu2018powerful, morris2019weisfeiler}. Namely, GNNs cannot distinguish two non-isomorphic graph structures if the 1-WL test fails.  

In normal message passing GNNs, the node representation is updated by the direct neighbors of the node, which are called 1-hop neighbors. Recently, some works extend the notion of message passing into $K$-hop message passing~\citep{abu2019mixhop,nikolentzos2020k,wang2021multihop,chien2021adaptive,brossard2020graph}. \textbf{$K$-hop message passing} is a type of message passing where the node representation is updated by aggregating information from not only 1st hop but all the neighbors within $K$ hops of the node. However, there is no work on theoretically characterizing the expressive power of GNNs with $K$-hop message passing, e.g., whether it can improve the 1-hop message passing or not and to what extent it can.

In this work, we theoretically characterize the expressive power of $K$-hop message passing GNNs. Specifically, 1) we formally distinguish two different kernels of the $K$-hop neighbors, which are often misused in previous works. The first kernel is based on whether the node can be reached within $k$ steps of the graph diffusion process, which is used in GPR-GNN~\citep{chien2021adaptive} and MixHop~\citep{abu2019mixhop}. The second one is based on the shortest path distance of $k$, which is used in GINE+~\citep{brossard2020graph} and Graphormer~\cite{ying2021do}. Further, we show that different kernels of $K$-hop neighbors will result in \textbf{different expressive power} of $K$-hop message passing. 2) We show that \textbf{$K$-hop message passing is strictly more powerful than 1-hop message passing and can distinguish almost all regular graphs. 3) However, it still failed in distinguishing some simple regular graphs, no matter which kernel is used, and its expressive power is bounded by 3-WL}. This motivates us to improve $K$-hop message passing further.

Here, we introduce \textbf{KP-GNN}, a new GNN framework with $K$-hop message passing, which significantly improves the expressive power of standard $K$-hop message passing GNNs. In particular, during the aggregation of neighbors in each hop, KP-GNN not only aggregates neighboring nodes in that hop but also aggregates the \textbf{peripheral subgraph} (subgraph induced by the neighbors in that hop). This additional information helps the KP-GNN to learn more expressive local structural features around the node. We further show that KP-GNN can distinguish many distance regular graphs with a proper encoder for the peripheral subgraph. The proposed KP-GNN has several additional advantages. First, it can be applied to most existing $K$-hop message-passing GNNs with only slight modification. Second, it only adds little computational complexity to standard $K$-hop message passing. We demonstrate the effectiveness of the KP-GNN framework through extensive experiments on both simulation and real-world datasets.

\section{$K$-hop message passing and its expressive power}
\subsection{Notations}
Denote a graph as $G=(V,E)$, where $V=\{1,2,...,n\}$ is the node set and $E\subseteq V\times V$ is the edge set. Meanwhile, denote $A\in \{0,1\}^{n\times n}$ as the adjacency matrix of graph $G$. Denote $x_v$ as the feature vector of node $v$ and denote $e_{uv}$ as the feature vector of the edge from $u$ to $v$. Finally, we denote $Q^{1}_{v,G}$ as the set of 1-hop neighbors of node $v$ in graph $G$ and $\mathcal{N}^{1}_{v,G}$=$Q^{1}_{v,G} \cup \{v\}$. Note that when we say $K$-hop neighbors of node $v$, we mean \textbf{all} the neighbors that have distance from node $v$ less than or equal to $K$. In contrast, $k$-th hop neighbors mean the neighbors with \textbf{exactly} distance $k$ from node $v$. The definition of distance will be discussed in section \ref{sec:khop}.

\subsection{1-hop message passing framework}
Currently, most existing GNNs are designed based on 1-hop message passing framework \citep{gilmer2017neural}. Denote $h^{l}_{v}$ as the output representation of node $v$ at layer $l$ and $h^{0}_{v}=x_v$. Briefly, given a graph $G$ and a 1-hop message passing GNN, at layer $l$ of the GNN, $h^{l}_{v}$ is computed by $h^{l-1}_{v}$ and $\{\!\!\{h^{l-1}_{u} ~|~ u \in  Q^{1}_{v,G}\}\!\!\}$:
\begin{align}
\label{eq:1hop_mp}
\begin{aligned}
m^{l}_{v}=\text{MES}^{l}(\{\!\!\{(h^{l-1}_{u},e_{uv})|u\in Q^{1}_{v,G}\}\!\!\}),~~
h^{l}_{v}=\text{UPD}^{l}(m^{l}_{v},h^{l-1}_{v}),
\end{aligned}
\end{align}
where $m^{l}_{v}$ is the message to node $v$ at layer $l$, $\text{MES}^l$ and $\text{UPD}^l$ are message and update functions at layer $l$ respectively. After $L$ layers of message passing, $h^{L}_{v}$ is used as the final representation of node $v$. Such a representation can be used to conduct node-level tasks like node classification and node regression. To get the graph representation, a readout function is used:
\begin{align}
    h_{G}=\text{READOUT}(\{\!\!\{h^{L}_{v}|v\in V\}\!\!\}),
\end{align}
where $\text{READOUT}$ is the readout function for computing the final graph representation. Then $h_{G}$ can be used to conduct graph-level tasks like graph classification and graph regression.

\subsection{$K$-hop message passing framework}\label{sec:khop}
The 1-hop message passing framework can be directly generalized to $K$-hop message passing, as it shares the same message and update mechanism. The difference is that independent message and update functions can be employed for each hop. Meanwhile, a combination function is needed to combine the results from different hops into the final node representation at this layer. First, we differentiate two different kernels of $K$-hop neighbors, which are interchanged and misused in previous research. 

The first kernel of $K$-hop neighbors is \textit{shortest path distance (spd) kernel}. Namely, the $k$-th hop neighbors of node $v$ in graph $G$ is the set of nodes with the shortest path distance of $k$ from $v$. 
\begin{definition}
\label{def:spd}
 \textit{For a node $v$ in graph $G$, the $K$-hop neighbors $\mathcal{N}^{K,spd}_{v,G}$ of $v$ based on shortest path distance kernel is the set of nodes that have the shortest path distance from node $v$ \textbf{less than or equal to $K$}. We further denote $Q^{k,spd}_{v,G}$ as the set of nodes in $G$ that are \textbf{exactly} the $k$-th hop neighbors (with shortest path distance of exactly $k$) and $\mathcal{N}^{0,spd}_{v,G}=Q^{0,spd}_{v,G}=\{v\}$ is the node itself. }
\end{definition}
The second kernel of the $K$-hop neighbors is based on \textit{graph diffusion (gd)}. 
\begin{definition}
\label{def:gd}
\textit{For a node $v$ in graph $G$, the $K$-hop neighbors $\mathcal{N}^{K,gd}_{v,G}$ of $v$ based on graph diffusion kernel is the set of nodes that can diffuse information to node $v$ \textbf{within the number of random walk diffusion steps} $K$ and the diffusion kernel $A$ (adjacency matrix). We further denote $Q^{k,gd}_{v,G}$ as the set of nodes in $G$ that are \textbf{exactly} the $k$-th hop neighbors (nodes that can diffuse information to node $v$ with $k$ diffusion steps) and $\mathcal{N}^{0,gd}_{v,G}=Q^{0,gd}_{v,G}=\{v\}$ is the node itself.}
\end{definition}
Note that a node can be a $k$-th hop neighbor of $v$ for multiple $k$ based on the graph diffusion kernel, but it can only appear in one hop for the shortest path distance kernel. We include more discussions of $K$-hop kernels in Appendix~\ref{app:khop_more}. Next, we define the $K$-hop message passing framework as follows:
\begin{align}
\label{eq:khop_mp}
\begin{aligned}
m^{l,k}_{v}=\ \text{MES}^{l}_{k}(\{\!\!\{(h^{l-1}_{u},e_{uv})|u\in Q^{k,t}_{v,G}&)\}\!\!\}),~~
h^{l,k}_{v}=\ \text{UPD}^{l}_{k}(m^{l,k}_{v},h^{l-1}_{v}),\\
h^{l}_{v}=\ \text{COMBINE}^l(\{\!\!\{h^{l,k}_{v}&|k=1,2,...,K\}\!\!\}),
\end{aligned}
\end{align}
where $t \in\{spd,gd\}$, $spd$ is the shortest path distance kernel and $gd$ is the graph diffusion kernel. Here, for each hop, we can apply unique $\text{MES}$ and $\text{UPD}$ functions. Note that for $k>1$, there may not exist the edge feature $e_{uv}$ as nodes are not directly connected. But we leave it here since we can use other types of features to replace it like path encoding. We further discuss it in Appendix~\ref{app:implementation}. Compared to the 1-hop message passing framework described in Equation~(\ref{eq:1hop_mp}), the $\text{COMBINE}$ function is introduced to combine the representations of node $v$ at different hops. It is easy to see that a $L$ layer 1-hop message passing GNNs is actually a $L$ layer $K$-hop message passing GNNs with $K=1$. We include more discussions of $K$-hop message passing GNNs in Appendix~\ref{app:khop_more}. To aid further analysis, we also prove that $K$-hop message passing can injectively encode the neighbor representations at different hops into $h_v^l$ in Appendix~\ref{app:injective}.


\subsection{Expressive power of $K$-hop message passing framework}
\begin{figure*}[t]
\centering
\includegraphics[width=0.95\textwidth]{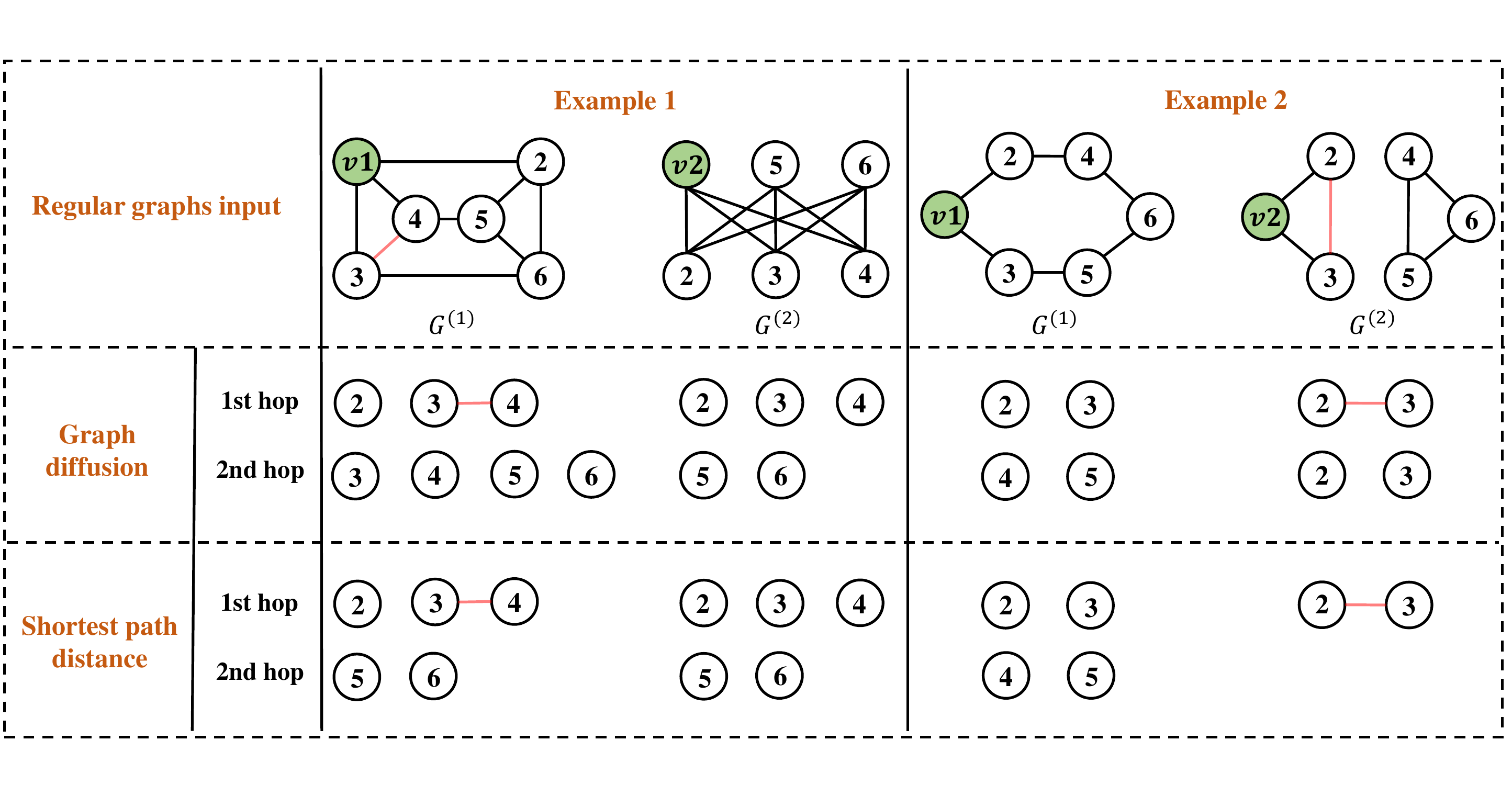}
\vspace{-5pt}
\caption{Here are two pairs of non-isomorphic regular graphs. With 2-hop message passing, example 1 can be distinguished by the graph diffusion kernel, and example 2 can be distinguished by the shortest path distance kernel. However, both two examples become indistinguishable if we switch the kernel. Finally, both two examples can be distinguished by adding peripheral subgraph information.}
 \vspace{-15pt}
\label{fig:regular1}
\end{figure*}

 In this section, we theoretically analyze the expressive power of $K$-hop message passing. We assume there is no edge feature and all nodes in the graph have the same feature, which means that GNNs can only distinguish two nodes with the local structure of nodes. Note that including node features only increases the expressive power of GNNs as nodes/graphs are more easily to be discriminated. It has been proved that the expressive power of 1-hop message passing is bounded by the 1-WL test on discriminating non-isomorphic graphs~\citep{xu2018powerful,morris2019weisfeiler}. In this section, We show that the $K$-hop message passing is strictly more powerful than the 1-WL test when $K>1$. Across the analysis, we utilize regular graphs as examples to illustrate our theorems since they cannot be distinguished using either 1-hop message passing or the 1-WL test. To begin the analysis, we first define \textit{proper $K$-hop message passing GNNs}.

\begin{definition}
\textit{A proper $K$-hop message passing GNN is a GNN model where the message, update, and combine functions are all injective given the input from a countable space.}
\end{definition}
A proper $K$-hop message passing GNN is easy to find due to the universal approximation theorem~\citep{Cybenko1989ApproximationBS} of neural network and the Deep Set for set operation~\citep{zaheer2017deep}. In the latter sections, by default, all mentioned $K$-hop message passing GNNs are proper. Next, we introduce \textit{node configuration}: 

\begin{definition}
\textit{The node configuration of node $v$ in graph $G$ within $K$ hops under $t$ kernel is a list $A^{K,t}_{v,G}=(a^{1,t}_{v,G},a^{2,t}_{v,G},...,a^{K,t}_{v,G})$, where $a^{i,t}_{v,G}=|Q^{i,t}_{v,G}|$ is the number of $i$-th hop neighbors of node $v$.}
\end{definition}
When we say two node configurations $A^{K,t}_{v_1,G^{(1)}}$ and $A^{K,t}_{v_2,G^{(2)}}$ are equal, we mean that these two lists are component-wise equal to each other. Now, we state the first proposition:
\begin{proposition}
\label{pro:khop_power}
\textit{A proper $K$-hop message passing GNN is strictly more powerful than 1-hop message passing GNNs when $K>1$.}
\end{proposition}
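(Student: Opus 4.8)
The plan is to establish the two standard ingredients of a ``strictly more powerful'' claim: first that a proper $K$-hop GNN distinguishes every pair of graphs that some 1-hop GNN can, and second that there is a pair of graphs separated by $K$-hop message passing but provably not by any 1-hop GNN.

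For the first (containment) direction, I would show by induction on the layer that the combined representation $h^l_v$ of a proper $K$-hop GNN is always at least as fine as the 1-WL color of $v$. The key observation is that the $k=1$ branch of Equation~(\ref{eq:khop_mp}) applies exactly the 1-hop message and update rules of Equation~(\ref{eq:1hop_mp}) to the neighbor multiset $\{\!\!\{h^{l-1}_u \mid u\in Q^{1}_{v,G}\}\!\!\}$, since $Q^{1,t}_{v,G}=Q^{1}_{v,G}$ for both kernels. Because $\text{MES}^l_1$, $\text{UPD}^l_1$, and COMBINE are all injective, $h^l_v$ injectively encodes $h^{l,1}_v$, which in turn injectively encodes $h^{l-1}_v$ together with that multiset. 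Under the inductive hypothesis that $\{h^{l-1}_v\}$ already refines the 1-WL colors, this embedded 1-hop update forces $h^l_v$ to refine the 1-WL color after one more round. As the best 1-hop GNN is exactly as powerful as 1-WL~\citep{xu2018powerful,morris2019weisfeiler}, the proper $K$-hop family distinguishes everything a 1-hop GNN can.

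For the second (strictness) direction, I would exhibit a pair of non-isomorphic regular graphs---such as one of the pairs in Figure~\ref{fig:regular1}---and fix the kernel $t$ under which they differ. Because both graphs are $r$-regular, 1-WL color refinement assigns every node the same stable color, so no 1-hop GNN separates them. To show a proper $K$-hop GNN with $K\ge 2$ does separate them, note that since all node features coincide, at the first layer the injective $\text{MES}^1_k$ and $\text{UPD}^1_k$ make $h^{1,k}_v$ an injective function of the hop size $|Q^{k,t}_{v,G}|$ alone; injectivity of COMBINE then makes $h^1_v$ an injective function of the node configuration $A^{K,t}_{v,G}$. If the two graphs have distinct multisets of node configurations, the bags $\{\!\!\{h^1_v \mid v\in V\}\!\!\}$ differ and the READOUT produces distinct graph representations.

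The crux, and the step I expect to require the most care, is verifying the strictness example: I must confirm that the chosen regular pair, though indistinguishable by 1-WL, actually has different node configurations at some hop $k\le K$ under the selected kernel, and that this difference survives the injective encoding rather than being washed out by the very regularity that defeats 1-hop message passing. The containment direction is routine once the injectivity of COMBINE and the embedded 1-hop update are invoked; the real content is that passing from the degree sequence (the hop-1 counts) to the full hop-count profile $A^{K,t}_{v,G}$ strictly enlarges what a proper model can see, which Figure~\ref{fig:regular1} witnesses concretely.
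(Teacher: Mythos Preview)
Your proposal is correct and mirrors the paper's own argument: the paper establishes containment by observing that the $K$-hop update subsumes the $1$-hop information (your inductive version of this is simply a more careful rendering of their one-line remark that ``node configuration up to $K$ hop includes all the information the $1$ hop has''), and it establishes strictness via exactly the Figure~\ref{fig:regular1} regular-graph pairs you invoke, noting that the first layer of a proper $K$-hop GNN injectively encodes the node configuration $A^{K,t}_{v,G}$, which differs at hop~$2$ for the chosen kernel. The only place to be slightly careful is your claim that injectivity of COMBINE on the multiset $\{\!\!\{h^{l,k}_v\}\!\!\}$ lets you recover $h^{l,1}_v$ specifically---this needs the hop index $k$ to be encoded in $h^{l,k}_v$, which the paper supplies in Appendix~\ref{app:injective} rather than from the bare ``proper'' definition.
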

To see why this is true, we first discuss how node configuration relates to the first layer of message passing. In the first layer of $K$-hop message passing, each node aggregates neighbors from up to $K$ hops. As each node has the same node label, an injective message function can only know how many neighbors at each hop, which is exactly the node configuration. In other words, \textbf{The first layer of $K$-hop message passing is equivalent to inject node configuration to each node label}. When $K=1$, the node configuration of $v_1$ and $v_2$ are $d_{v_1,G^{(1)}}$ and $d_{v_2,G^{(2)}}$, where $d_{v,G}$ is the node degree of $v$. After $L$ layers, GNNs can only get the node degree information of each node within $L$ hops of node $v$. Then, it is straightforward to see why these GNNs cannot distinguish any $n$-sized $r$-regular graph, as each node in the regular graph has the same degree.

Next, when $K>1$, the $K$-hop message passing is at least equally powerful as 1-hop message passing since node configuration up to $K$ hop includes all the information the 1 hop has. To see why it is more powerful, we use two examples to illustrate it. The first example is shown in the left part of Figure~\ref{fig:regular1}. Suppose here we use graph diffusion kernel and we want to learn the representation of node $v_1$ and node $v_2$ in the two graphs. We know that the 1-hop GNNs produce the same representation for two nodes as they are both nodes in 6-sized 3-regular graphs. However, it is easy to see that $v_1$ and $v_2$ have different local structures and should have different representations. Instead, if we use the 2-hop message passing with the graph diffusion kernel, we can easily distinguish two nodes by checking the 2nd hop neighbors of the node, as node $v_1$ has four 2nd hop neighbors but node $v_2$ only has two 2nd hop neighbors. The second example is shown in the right part of Figure~\ref{fig:regular1}. Two graphs in the example are still regular graphs. Suppose here we use shortest path distance kernel, node $v_1$ and $v_2$ have different numbers of 2nd hop neighbors and thus will have different representations by performing 2-hop message passing. These two examples convincingly demonstrate that the $K$-hop message passing with $K>1$ can have better expressive power than $K=1$. To further study the expressive power of $K$-hop message passing on regular graphs, we show the following result:
\begin{theorem}
\label{thm:khop_regular}
Consider all pairs of $n$-sized $r$-regular graphs, let $ 3 \leq r < (2log2n)^{1/2}$ and $\epsilon$ be a fixed constant. With at most $K=\lfloor(\frac{1}{2}+\epsilon)\frac{\log{2n}}{\log{(r-1)}} \rfloor$, there exists a 1 layer $K$-hop message passing GNN using the shortest path distance kernel that distinguishes almost all $1-o(n^{-1/2})$ such pairs of graphs.
\end{theorem}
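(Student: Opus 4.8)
The plan is to reduce the statement about GNNs to a purely combinatorial statement about a graph invariant, and then carry out a probabilistic argument on random $r$-regular graphs. For the reduction, note that since all nodes share the same feature and there are no edge features, the only thing an injective $\text{MES}^{1}_{k}$ can extract from hop $k$ in the first layer is the cardinality $a^{k,spd}_{v,G}=|Q^{k,spd}_{v,G}|$. By the paper's earlier observation this means one layer injectively encodes the node configuration $A^{K,spd}_{v,G}$ into $h^{1}_{v}$, so after an injective $\text{READOUT}$ the graph embedding is an injective function of the multiset $S_{G}=\{\!\!\{A^{K,spd}_{v,G}\mid v\in V\}\!\!\}$. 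Hence the GNN distinguishes $G_{1}$ and $G_{2}$ exactly when $S_{G_{1}}\neq S_{G_{2}}$, and it suffices to show that all but an $o(n^{-1/2})$ fraction of pairs of $n$-sized $r$-regular graphs have distinct configuration multisets.

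Next I would interpret what $S_{G}$ actually measures. In an $r$-regular graph one has $a^{k,spd}_{v,G}=r(r-1)^{k-1}$ exactly when the radius-$k$ ball around $v$ is a tree, and the count drops strictly below this tree value precisely when a short cycle appears inside that ball. Thus $S_{G}$ is a coarse summary of the local cycle structure of $G$. The choice $K=\lfloor(\tfrac{1}{2}+\epsilon)\tfrac{\log 2n}{\log(r-1)}\rfloor$ is exactly the birthday-paradox threshold: the ball has size $\approx(r-1)^{K}\approx(2n)^{1/2+\epsilon}$, just past $\sqrt{n}$, so BFS branches from a typical vertex start to collide and the shell sizes begin to carry genuinely graph-dependent information (below $\sqrt{n}$ the balls are trees and $S_{G}$ is the constant tree sequence, carrying nothing). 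The hypothesis $r<(2\log 2n)^{1/2}$ keeps $r$ small enough that $K\to\infty$, which is what forces the encoded cycle information to grow with $n$.

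The core of the argument is an anti-concentration estimate in the configuration (pairing) model for uniform random $r$-regular graphs. I would extract from $S_{G}$ a cycle-counting functional $\Phi$, for instance the total deficit $\Phi(S_{G})=\sum_{v}\sum_{k\le K}\bigl(r(r-1)^{k-1}-a^{k,spd}_{v,G}\bigr)$, which is manifestly a function of the multiset $S_{G}$ and counts the excess, cycle-creating edges inside radius-$K$ balls. Using the classical fact that the number of cycles of length $\ell$ in a random $r$-regular graph is asymptotically Poisson with mean $\approx(r-1)^{\ell}/(2\ell)$, the longest visible cycles (length $\approx 2K$) dominate and supply a standard deviation $\gtrsim n^{1/2+\epsilon}$ for $\Phi$. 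Because equal multisets give equal $\Phi$, we have $\{S_{G_{1}}=S_{G_{2}}\}\subseteq\{\Phi(S_{G_{1}})=\Phi(S_{G_{2}})\}$, so for two independent random graphs a local-limit bound gives $\Pr[S_{G_{1}}=S_{G_{2}}]\le\max_{m}\Pr[\Phi=m]=o(n^{-1/2})$. Transferring this from the pairing model to the uniform distribution on simple $r$-regular graphs (equivalent up to a bounded factor) and reading "probability over two independent graphs" as "fraction of pairs" then yields the theorem.

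The main obstacle I expect is this probabilistic step: making the anti-concentration of $\Phi$ rigorous with an error that is genuinely $o(n^{-1/2})$ rather than merely small. Two points need care. First, $\Phi$ is only a coarse projection of the full local structure, so I must certify that it still fluctuates enough; this is exactly where going to depth $(\tfrac{1}{2}+\epsilon)\tfrac{\log 2n}{\log(r-1)}$ rather than exactly $\tfrac{1}{2}$ matters, since the extra $n^{\epsilon}$ room is what pushes the standard deviation above $n^{1/2}$ and absorbs the $\sqrt{\log n}$ factors coming from the Poisson local limit. Second, the cycle counts are only approximately Poisson and only approximately independent across lengths, so controlling the joint law of the shell deficits to the required precision, and ruling out arithmetic concentration of $\Phi$ on a sparse lattice, is the technically delicate part of the argument.
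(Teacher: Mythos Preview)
Your reduction to the multiset $S_G=\{\!\!\{A^{K,spd}_{v,G}\mid v\in V\}\!\!\}$ is exactly the paper's first step. The probabilistic argument, however, takes a genuinely different route. Rather than collapsing $S_G$ to a scalar deficit functional $\Phi$ and seeking anti-concentration of $\Phi$ via cycle-count Poisson heuristics and a local limit theorem, the paper works at the \emph{node} level and proves a sharper lemma: for $v_1\in G^{(1)}$ and $v_2\in G^{(2)}$ chosen from two independent uniform $n$-vertex $r$-regular graphs, the full vector equality $A^{K,spd}_{v_1,G^{(1)}}=A^{K,spd}_{v_2,G^{(2)}}$ occurs with probability $o(n^{-3/2})$. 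The theorem then follows by fixing one $v_1$ and union-bounding over the $n$ choices of $v_2$, giving $1-o(n\cdot n^{-3/2})=1-o(n^{-1/2})$. The node-level lemma is essentially lifted from Bollob\'as (1982): one merges the two graphs into a $2n$-vertex instance, grows the balls around $v_1$ and $v_2$ simultaneously in the configuration model via a layered BFS, and at each depth conditions on $|Q^{k+1,spd}_{v_1}|$ being already revealed. Then $|Q^{k+1,spd}_{v_2}|$ is the number of distinct fresh vertices hit when matching $t_k\ge (r-1)^{k-3}$ remaining half-edges into $s_k\ge n/2$ unused degree-$r$ stubs, and a direct occupancy bound gives $\max_{l}\Pr[|Q^{k+1,spd}_{v_2}|=l]\le c_0\,s_k^{1/2}/t_k$. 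Multiplying these over $k$ from roughly $\tfrac12\log n/\log(r-1)$ up to $K$ yields $o(n^{-2})$ in the pairing model, hence $o(n^{-3/2})$ after the standard $\Omega(n^{-1/2})$ transfer to simple $r$-regular graphs.

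This route sidesteps both obstacles you flagged. There is no Poisson approximation at all, so its breakdown for cycles of length $\approx 2K$ (where the mean $(r-1)^{2K}/(4K)\approx n^{1+2\epsilon}$ is superlinear and the ``classical fact'' you invoke no longer applies) never enters. And the anti-concentration is a hop-by-hop combinatorial occupancy estimate rather than a global local-limit theorem for a scalar $\Phi$. Your approach is not wrong in spirit, but projecting to $\Phi$ discards exactly the shell-by-shell structure that makes the paper's argument clean; the paper instead exploits that the configuration \emph{vector} has many nearly independent coordinates past the $\sqrt{n}$ threshold, each contributing a multiplicative $O(n^{1/2}/(r-1)^k)$ factor to the coincidence probability.
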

We include the proof and simulation results in Appendix~\ref{app:khop_regular}. Theorem~\ref{thm:khop_regular} shows that even with 1 layer and a modest $K$, $K$-hop GNNs are powerful enough to distinguish almost all regular graphs.

Finally, we characterize the existing $K$-hop methods with the proposed $K$-hop message passing framework. Specifically, we show that 1) the expressive power of $K$ layer GINE~\citep{brossard2020graph} is bounded by $K$ layer $K$-hop message passing with the shortest path distance kernel. 2) The expressive power of the Graphormer~\citep{ying2021do} is equal to $K$-hop GNNs with the shortest path distance kernel and infinity $K$. 3) For spectral GNNs and existing $K$-hop GNNs with the graph diffusion kernel like MixHop~\citep{abu2019mixhop} and MAGNA~\citep{wang2021multihop}, we find they actually use a \textbf{weak version of $K$-hop than the definition of us}. Specifically, it is shown that the expressive power of spectral GNNs is also bounded by 1-WL test~\cite{wang2022powerful}, which contradicts our result as graph diffusion can be viewed as a special case of spectral GNN. However, we show that our definition of $K$-hop message passing with graph diffusion kernel actually injects a non-linear function on the spectral basis, thus achieving superior expressive power. We leave the detailed discussion in Appendix~\ref{app:khop_compare}. Further, Distance Encoding~\citep{li2020distance} also uses the shortest path distance information to augment the 1-hop message passing, which is similar to $K$-hop GNNs with the shortest path distance kernel. However, we find the expressive power of the two frameworks differs from each other. We leave the detailed discussion in Appendix~\ref{app:khop_de}.

\subsection{Limitation of $K$-hop message passing framework}
Although we show that $K$-hop GNNs with $K>1$ are better at distinguishing non-isomorphic structures than 1-hop GNNs, there are still limitations. In this section, we discuss the limitation of $K$-hop message passing. Specifically, we show that \textbf{the choice of the kernel can affect the expressive power of $K$-hop message passing}. Furthermore, even with $K$-hop message passing, we still cannot distinguish some simple non-isomorphic structures and \textbf{the expressive power of $K$-hop message passing is bounded by 3-WL}.

Continue looking at the provided examples in Figure~\ref{fig:regular1}. In example 1, if we use the shortest path distance kernel instead of the graph diffusion kernel, two nodes have the same number of neighbors in the 2nd hop, which means that we cannot distinguish two nodes this time. Similarly, in example 2, two nodes have the same number of neighbors in both 1st and 2nd hops using graph diffusion kernel. These results highlight that the choice of the kernel can affect the expressive power of $K$-hop message passing, and none of them can distinguish both two examples with 2-hop message passing. 

Recently,~\citet{Frasca2022UnderstandingAE} show that any subgraph-based GNNs with node-based selection policy can be implemented by 3-IGN~\citep{maron2019provably,maron2018invariant} and thus their expressive power is bounded by 3-WL test. Here, we show that the $K$-hop message passing GNNs can also be implemented by 3-IGN for both two kernels and thus:
\begin{theorem}
\label{thm:khop_limitation}
The expressive power of a proper $K$-hop message passing GNN of any kernel is bounded by the 3-WL test.
\end{theorem}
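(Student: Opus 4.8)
The plan is to implement any proper $K$-hop message passing GNN inside a 3-IGN and then invoke the known fact that the distinguishing power of 3-IGN is bounded by the 3-WL test \citep{maron2019provably}; following the template of \citet{Frasca2022UnderstandingAE}, this yields Theorem~\ref{thm:khop_limitation}. Equivalently, and more transparently, I would argue at the level of colorings. Let $c^L_v$ denote the stable color that a proper $K$-hop GNN assigns to node $v$ after $L$ layers, and let $\chi_{v,u}$ denote the stable 3-WL (2-FWL) color of the ordered pair $(v,u)$. I would prove by induction on $L$ that $\chi$ refines $c^L$, i.e. $\chi_{v_1,v_1}=\chi_{v_2,v_2}$ implies $c^L_{v_1}=c^L_{v_2}$. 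Pooling over the diagonal then shows that two 3-WL-indistinguishable graphs receive identical readouts, which is exactly the claimed upper bound.

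The crux is a single lemma: for every fixed $k\le K$ and every kernel $t\in\{spd,gd\}$, the hop-$k$ membership relation $\mathbb{1}[u\in Q^{k,t}_{v,G}]$ is determined by $\chi_{v,u}$. For the gd kernel this is immediate because $u\in Q^{k,gd}_{v,G}$ iff $(A^k)_{vu}>0$, and the 3-WL update on pairs $\chi_{v,u}\leftarrow \mathrm{hash}\big(\chi_{v,u},\{\!\!\{(\chi_{v,w},\chi_{w,u}) : w\in V\}\!\!\}\big)$ is matrix-product-like, so its stable coloring determines all walk counts $(A^k)_{vu}$ and hence hop-$k$ membership. For the spd kernel I would recover the exact distance as $\mathrm{spd}(v,u)=\min\{k : (A^k)_{vu}>0\}$ and read off $Q^{k,spd}_{v,G}$ by comparison across the finitely many powers $k=1,\dots,K$; both the minimum and the comparisons are functions of the walk-count data already encoded in $\chi_{v,u}$. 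Thus under either kernel the information a single $K$-hop layer consumes, namely which nodes lie in each hop of $v$, is already present in the 3-WL pair coloring.

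Granting the lemma, the inductive step closes cleanly. A single $K$-hop layer forms, for each node $v$ and hop $k$, the multiset $\{\!\!\{c^{L}_u : u\in Q^{k,t}_{v,G}\}\!\!\}$, feeds it through the injective $\mathrm{MES}^l_k,\mathrm{UPD}^l_k$, and merges the results over $k$ with the injective $\mathrm{COMBINE}^l$. By the inductive hypothesis $\chi$ already refines $c^L$, and by the lemma $\chi_{v,v}$ determines the hop-$k$ partition of the neighbors of $v$; together these imply that $\chi_{v,v}$ determines each multiset and therefore the layer output $c^{L+1}_v$, giving $\chi\succeq c^{L+1}$. The final READOUT is a multiset pooling over the diagonal, which is likewise refined by pooling the stable 3-WL colors, completing the reduction.

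I expect the main obstacle to be the injectivity and realizability bookkeeping rather than the combinatorial core. The combinatorial content, that 3-WL recovers pairwise walk counts and shortest-path distances and hence both hop kernels, is clean; the delicate part is turning ``the GNN uses arbitrary injective multiset functions on a countable space'' into an honest 3-IGN construction. One must check that the continuous, sum-based contractions of a 3-IGN can match an injective multiset encoding up to 3-WL granularity (the standard GIN-style argument), and one must handle the spd kernel's $\min$ over powers and the gd kernel's possible multiplicity of a node across hops without breaking equivariance. Verifying that these reductions stay within the equivariant-plus-pointwise vocabulary of 3-IGN, uniformly in the bounded choice of $K$, is the step I would treat most carefully.
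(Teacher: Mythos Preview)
Your proposal is correct, and both you and the paper ultimately ground the bound in the 3-IGN/3-WL equivalence of \citet{maron2019provably} and the machinery of \citet{Frasca2022UnderstandingAE}. However, the routes diverge in execution. The paper carries out an \emph{explicit} 3-IGN construction: it writes down, orbit by orbit and channel by channel, the broadcasting, pooling and pointwise operations that (i) compute the $k$-hop indicator of each pair $(i,j)$ for both kernels and (ii) simulate one $K$-hop layer (message broadcast, per-hop sparsification, aggregation, update, combine), then cites \citet{Geerts2020TheEP} to conclude the 3-WL bound. Your preferred path is the direct coloring-refinement argument: show that the stable 2-FWL pair color $\chi_{v,u}$ already determines $(A^k)_{vu}$ and hence both hop kernels, and induct on layers to get $\chi\succeq c^L$.

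What each buys: the paper's construction is concrete and mechanical but heavy on tensor bookkeeping; your argument is shorter and more conceptual, and makes the combinatorial core---that 3-WL encodes walk counts and shortest-path distances---visible. One step you glossed over and should make explicit in the inductive closure: to conclude that $\chi_{v,v}$ determines the multiset $\{\!\!\{c^L_u:u\in Q^{k,t}_{v,G}\}\!\!\}$, you need two standard 2-FWL facts beyond your lemma, namely that at stability $\chi_{v,v}$ determines the row multiset $\{\!\!\{\chi_{v,u}:u\in V\}\!\!\}$ and that each off-diagonal color $\chi_{v,u}$ determines the diagonal color $\chi_{u,u}$ (so that the inductive hypothesis $\chi_{u,u}\mapsto c^L_u$ can be invoked from $\chi_{v,u}$). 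Both are immediate from the 2-FWL update, but stating them closes the loop you left implicit.
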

We include the proof in Appendix~\ref{app:khop_limitation}. Given all these observations, we may wonder if there is a way to further improve the expressive power of $K$-hop message passing?

\section{KP-GNN: improving the power of $K$-hop message passing by peripheral subgraph}
In this section, we describe how to improve the expressive power of $K$-hop message passing by adding additional information to the message passing framework. Specifically, by adding \textit{peripheral subgraph} information, we can improve the expressive power of the $K$-hop message passing by a large margin. 
\subsection{Peripheral edge and peripheral subgraph}
First, we define \textit{peripheral edge} and \textit{peripheral subgraph}.
\begin{definition}
\textit{The peripheral edge $E(Q^{k,t}_{v,G})$ is defined as the set of edges that connect nodes within set $Q^{k,t}_{v,G}$. We further denote $|E(Q^{k,t}_{v,G})|$ as the number of peripheral edge in $E(Q^{k,t}_{v,G})$. The peripheral subgraph $G^{k,t}_{v,G}=(Q^{k,t}_{v,G},E(Q^{k,t}_{v,G}))$ is defined as the subgraph induced by $Q^{k,t}_{v,G}$ from the whole graph $G$}.
\end{definition}
Briefly speaking, the peripheral edge $E(Q^{k,t}_{v,G})$ record all the edges whose two ends are both from $Q^{k,t}_{v,G}$ and the peripheral subgraph is a graph constituted by peripheral edges. It is easy to see that the peripheral subgraph $G^{k,t}_{v,G}$ automatically contains all the information of peripheral edge $E(Q^{k,t}_{v,G})$. Next, we show that the power of $K$-hop message passing can be improved by leveraging the information of peripheral edges and peripheral subgraphs. We again refer to the examples in Figure~\ref{fig:regular1}. Here we only consider the peripheral edge information. In example 1, we notice that at the 1st hop, there is an edge between node $3$ and node $4$ in the left graph. More specifically, $E(Q^{1,t}_{v_1,G^{(1)}})=\{(3,4)\}$. In contrast, we have $E(Q^{1,t}_{v_2,G^{(2)}})=\{\}$ in the right graph, which means there is no edge between the 1st hop neighbors of $v_2$. Therefore, we can successfully distinguish these two nodes by adding this information to the message passing. Similarly, in example 2, there is one edge between the 1st hop neighbors of node $v_2$, but no such edge exists for node $v_1$. By leveraging peripheral edge information, we can also distinguish the two nodes. The above examples demonstrate the effectiveness of the peripheral edge and peripheral subgraph information.
\subsection{$K$-hop peripheral-subgraph-enhanced graph neural network}
In this section, we propose \textbf{K}-hop \textbf{P}eripheral-subgraph-enhanced  \textbf{G}raph \textbf{N}eural \textbf{N}etwork (KP-GNN), which equips $K$-hop message passing GNNs with peripheral subgraph information for more powerful GNN design. Recall the $K$-hop message passing defined in Equation~(\ref{eq:khop_mp}). The only difference between KP-GNN and original $K$-hop GNNs is that we revise the message function as follows:
\begin{align}
\label{eq:KP-GNN}
m^{l,k}_{v}=\ &\text{MES}^{l}_{k}(\{\!\!\{(h^{l-1}_{u},~e_{uv})|u\in Q^{k,t}_{v,G}\}\!\!\}, ~G^{k,t}_{v,G}).
\end{align}
Briefly speaking, in the message step at the $k$-th hop, we not only aggregate information of the neighbors but also the peripheral subgraph at that hop. The implementation of KP-GNN can be very flexible, as any graph encoding function can be used.
To maximize the information the model can encode while keeping it simple, we implement the message function as:

\begin{equation}
\begin{split}
\label{eq:KP-GNN_imp}
\text{MES}^{l}_{k}=\ \text{MES}^{l,normal}_{k}&(\{\!\!\{(h^{l-1}_{u},e_{uv})|u\in Q^{k,t}_{v,G}\}\!\!\})+f(G^{k,t}_{v,G}),\\
f(G^{k,t}_{v,G})=\ \text{EMB}&((E(Q^{k,t}_{v,G}),C^{k^{\prime}}_k))~,
\end{split}
\end{equation}

where $\text{MES}^{l,normal}_{k}$ denotes the message function in the original GNN model, $C^{k^{\prime}}_k$ is the $k^{\prime}$-configuration, which encode both node configuration and the number of the peripheral edge of all nodes in $G^{k,t}_{v,G}$ up to $k^{\prime}$ hops. It can be regarded as running another 1 layer KP-GNN and readout function on each peripheral subgraph. $\text{EMB}$ is a learnable embedding function. With this implementation, any base GNN model can be incorporated into and be enhanced by the KP-GNN framework by replacing $\text{MES}^{l,normal}_{k}$ and $\text{UPD}^l_k$ with the corresponding functions for each hop $k$. We leave the detailed implementation in Appendix~\ref{app:implementation}.

\subsection{The expressive power of KP-GNN and comparison with existing methods}
In this section, we theoretically characterize the expressive power of KP-GNN and compare it with the original $K$-hop message passing framework. The key insight is that, according to Equation~(\ref{eq:KP-GNN}), the message function at the $k$-th hop additionally encodes $G^{k,t}_{v,G}$ compared to normal $K$-hop message passing. As we have already shown in the last section, $K$-hop GNNs are bounded by 3-WL and thus cannot distinguish any non-isomorphic distance regular graphs, as well as Distance Encoding~\cite{li2020distance}. Let $C^{k^{\prime}}_{j,G}$ be the $k^{\prime}$-configuration of peripheral subgraph at $j$-th hop of nodes in distance regular graph $G$. Here we show that with the aid of peripheral subgraphs, KP-GNN is able to distinguish distance regular graphs:
\begin{proposition}
\label{pro:kp_drg}
\textit{For two non-isomorphic distance regular graphs $G^{(1)}=(V^{(1)},E^{(1)})$ and $G^{(2)}=(V^{(2)},E^{(2)})$ with the same diameter $d$ and intersection array $(b_0,b_1,...,b_{d-1};c_1,c_2,...,c_d)$. Given a proper 1-layer $d$-hop KP-GNN with message functions defined in Equation~(\ref{eq:KP-GNN_imp}), it can distinguish $G^{(1)}$ and $G^{(2)}$ if $C^{k^{\prime}}_{j,G^{(1)}} \neq C^{k^{\prime}}_{j,G^{(2)}}$ for some $0<j \leq d$.}
\end{proposition}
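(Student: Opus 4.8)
The plan is to track exactly what a single layer of $d$-hop KP-GNN can see at a node and then argue that the hypothesized difference in peripheral $k'$-configurations is an injective component of that view. Under the standing assumption that all nodes carry the same feature, the only information entering the message at the $k$-th hop of Equation~(\ref{eq:KP-GNN_imp}) is (i) the number of $k$-th hop neighbors $|Q^{k,t}_{v,G}|$, since the multiset of identical neighbor features collapses to its cardinality, and (ii) the peripheral term $f(G^{k,t}_{v,G}) = \text{EMB}(E(Q^{k,t}_{v,G}), C^{k'}_k)$, which records the number of peripheral edges together with the $k'$-configuration of the peripheral subgraph at that hop. Because the GNN is proper, the message, update, and combine functions are injective, so the representation $h_v^1$ produced by the layer is an injective function of the list, over $k = 1, \ldots, d$, of these per-hop quantities. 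First I would make this precise using the injective-encoding property already established for $K$-hop message passing, so that $h_v^1$ faithfully encodes the tuple $\big(|Q^{k,t}_{v,G}|,\, |E(Q^{k,t}_{v,G})|,\, C^{k'}_k\big)_{k=1}^{d}$.

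Next I would show that for two distance regular graphs sharing the intersection array $(b_0,\ldots,b_{d-1};c_1,\ldots,c_d)$, the first two coordinates of this tuple are forced to agree and hence carry no discriminating signal. Fixing any vertex $u$ and partitioning the vertices into distance layers, distance regularity with the shortest path distance kernel gives $|Q^{i,spd}_{u,G}| = k_i$, a quantity determined by the $b_i$ and $c_i$, and every vertex of the $i$-th layer has exactly $a_i = b_0 - b_i - c_i$ neighbors inside that layer, so the peripheral edge count satisfies $|E(Q^{i,spd}_{u,G})| = \tfrac12 k_i a_i$. Both are functions of the intersection array alone and are therefore identical for every node of $G^{(1)}$ and $G^{(2)}$. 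This is precisely why ordinary $K$-hop message passing, bounded by $3$-WL in Theorem~\ref{thm:khop_limitation}, and distance encoding cannot separate such graphs; the only coordinate of the encoded tuple not pinned down by the intersection array is the recursive $k'$-configuration $C^{k'}_j$.

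I would then finish by propagating the hypothesized difference through the injective pipeline. By assumption $C^{k'}_{j,G^{(1)}} \neq C^{k'}_{j,G^{(2)}}$ for some hop $j$; since $h_v^1$ injectively encodes $C^{k'}_j$ in its $j$-th coordinate while the other distinguishable coordinates coincide, the multiset of node representations $\{\!\!\{h_v^1 \mid v \in V^{(1)}\}\!\!\}$ cannot equal $\{\!\!\{h_v^1 \mid v \in V^{(2)}\}\!\!\}$. An injective $\text{READOUT}$ then yields $h_{G^{(1)}} \neq h_{G^{(2)}}$, so KP-GNN distinguishes the two graphs. Along the way I would also verify the embedding details: that $\text{EMB}$ and $\text{COMBINE}$ can be chosen injective on the relevant countable domain, via Deep Sets plus the universal approximation theorem, guaranteeing that no collision erases the difference when the per-hop terms are combined.

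The hard part will be justifying that the difference in $C^{k'}_j$ actually survives as an isolated, non-cancelling component of $h_v^1$, rather than being masked once it is combined with the (identical) neighbor counts and peripheral-edge counts across all $d$ hops. Concretely, I must argue that $C^{k'}_{j,G}$ is well-defined at the graph level — either node-independent by distance regularity, or compared as a multiset over nodes — and that a mismatch at a single hop cannot be compensated by agreement elsewhere; injectivity of the combine step is what forces this, but stating it cleanly, and defining the $k'$-configuration so that it is genuinely finer than the intersection-array invariants $k_i$ and $\tfrac12 k_i a_i$, is the delicate structural point of the argument.
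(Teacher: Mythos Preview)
Your proposal is correct and follows essentially the same route as the paper: the paper also proves (as its Lemma~\ref{lem:drg_sg}) that each peripheral subgraph $G^{j,spd}_{v,G}$ of a distance regular graph is itself regular with degree $b_0-b_j-c_j$, concludes that the peripheral edge count is fixed by the intersection array, and then observes that only the $k'$-configuration $C^{k'}_j$ can carry the discriminating signal through the injective pipeline. Your write-up is in fact more explicit than the paper's about tracking the tuple $\big(|Q^{k,t}_{v,G}|,\,|E(Q^{k,t}_{v,G})|,\,C^{k'}_k\big)_{k=1}^d$ and about the ``well-defined at graph level'' issue, which the paper handles only by the remark that every node of a distance regular graph has the same local structure.
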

We include the proof in Appendix~\ref{app:kp_expressive}. Here we leverage an example in Figure~\ref{fig:DRG} to briefly show why KP-GNN is able to distinguish distance regular graphs. Figure~\ref{fig:DRG} displays two distance regular graphs with an intersection array of $(6,3;1,2)$. The left one is Shrikhande graph and the right one is 4$\times$4 Rook's graph. Now, let's look at the 1-hop peripheral subgraph of the green node. In the Shrikhande graph, there are 6 peripheral edges marked with red. Further, 6 edges constitute a circle. In the 4$\times$4 Rook's graph, there are still 6 peripheral edges. However, 6 edges constitute two circles with 3 edges in each circle, which is different from the Shrikhande graph. Then, any peripheral subgraph encoder that can distinguish these two graphs like node configuration enables the corresponding KP-GNN to distinguish the example. Proposition~\ref{pro:kp_drg} shows that the KP-GNN is capable of distinguishing distance regular graphs, which further distinguishes KP-GNN from DE-1~\citep{li2020distance} as it cannot distinguish any two connected distance regular graphs with the same intersection arrays according to Theorem 3.7 in~\cite{li2020distance}. However, it is currently unknown whether can KP-GNN with Equation~\ref{eq:KP-GNN_imp} distinguish all distance regular graphs. We leave the detailed discussion in Appendix~\ref{app:kp_expressive}.

\begin{figure*}[t]
\centering
\vspace{-10pt}
\includegraphics[width=0.8\textwidth]{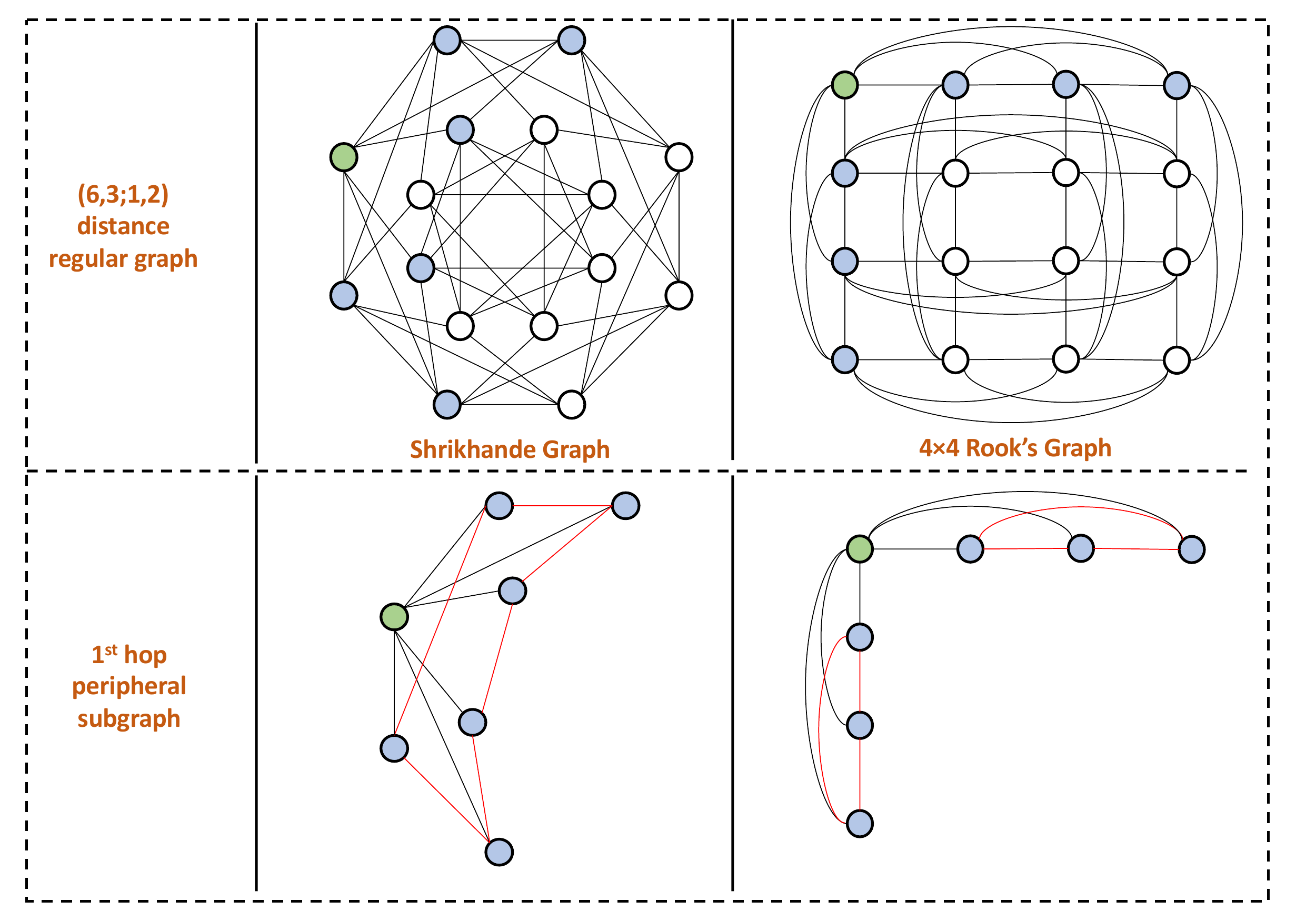}
\vspace{-5pt}
\caption{An example of two non-isomorphic distance regular graph with intersection array $(6,3;1,2)$.}
 \vspace{-15pt}
\label{fig:DRG}
\end{figure*}

Moreover, both the subgraph-based GNNs like NGNN~\citep{zhang2021nested}, GNN-AK~\citep{zhao2022from}, ESAN~\citep{bevilacqua2022equivariant}, and KP-GNN leverage the information in the subgraph to enhance the power of message passing. However, KP-GNN is intrinsically different from them. Firstly, in KP-GNN, the message passing is performed on the whole graph instead of the subgraphs. This means that for each node, there is only one representation to be learned. Instead, for subgraph-based GNNs, the message passing is performed separately for each subgraph and each node could have multiple representations depending on which subgraph it is in. Secondly, in subgraph-based GNNs, they consider the subgraph as a whole without distinguishing nodes at different hops. Instead, KP-GNN takes one step further by dividing the subgraph into two parts. The first part is the hierarchy of neighbors at each hop. The second part is the connection structure between nodes in each hop. This gives us a better point of view to design a more powerful learning method. From the Corollary 7 in~\cite{Frasca2022UnderstandingAE}, we know that all subgraph-based GNNs with node selection as subgraph policy is bounded by 3-WL, which means they cannot distinguish any distance regular graph and KP-GNN is better at it.

\subsection{Time, space complexity, and limitation}
In this section, we discuss the time and space complexity of $K$-hop message passing GNN and KP-GNN. Suppose a graph has $n$ nodes and $m$ edges. Then, the $K$-hop message passing and KP-GNN have both the space complexity of $O(n)$ and the time complexity of $O(n^2)$ for the shortest path distance kernel. Note that the complexity of graph diffusion is no less than the shortest path distance kernel. \textbf{We can see that KP-GNN only requires the same space complexity as vanilla GNNs and much less time complexity than the subgraph-based GNNs, which are at least $O(nm)$}. However, $K$-hop message passing including KP-GNN still have intrinsic limitation. We leave a detailed discussion on the complexity and limitation of KP-GNN in Appendix~\ref{app:complexity}.

\section{Related Work}
\textbf{Expressive power of GNN.} Analyzing the expressive power of GNNs is a crucial problem as it can serve as a guide on how to improve GNNs. \citet{xu2018powerful} and \citet{morris2019weisfeiler} first proved that the power of 1-hop message passing is bounded by the 1-WL test. In other words, 1-hop message passing cannot distinguish any non-isomorphic graphs that the 1-WL test fails to. In recent years, many efforts have been put into increasing the expressive power of 1-hop messaging passing. The first line of research tries to mimic the higher-order WL tests, like 1-2-3 GNN~\citep{morris2019weisfeiler}, PPGN~\citep{maron2019provably}, ring-GNN~\citep{chen2019equivalence}. However, they require exponentially increasing space and time complexity w.r.t. node number and cannot be generalized to large-scale graphs. The second line of research tries to enhance the rooted subtree of 1-WL with additional features. Some works~\citep{sato2020random,abboud2020surprising,loukas2019graph} add one-hot or random features into nodes. Although they achieve good results in some settings, they deteriorate the generalization ability as such features produce different representations for nodes even with the same local graph structure. Some works like Distance Encoding~\citep{li2020distance}, SEAL~\citep{zhang2018link}, labeling trick~\citep{zhang2021labeling} and GLASS~\citep{wang2022glass} introduce node labeling based on either distance or distinguishing target node set. On the other hand, GraphSNN~\citep{wijesinghe2022a} introduces a hierarchy of local isomorphism and proposes structural coefficients as additional features to identify such local isomorphism. However, the function designed to approximate the structural coefficient cannot fully achieve its theoretical power. The third line of research resorts to subgraph representation. Specifically, ID-GNN~\citep{you2021identity} extracts ego-netwok for each node and labels the root node with a different color. NGNN~\citep{zhang2021nested} encodes a rooted subgraph instead of a rooted subtree by subgraph pooling thus achieving superior expressive power on distinguishing regular graphs. GNN-AK~\citep{zhao2022from} applies a similar idea as NGNN. The only difference lies in how to compute the node representation from the local subgraph. However, such methods need to run an inner GNN on every node of the graph thus introducing much more computation overhead. Meanwhile, the expressive power of subgraph GNNs are bounded by 3-WL~\cite{Frasca2022UnderstandingAE}.

\textbf{$K$-hop message passing GNN.}
There are some existing works that instantiate the $K$-hop message passing framework. For example, MixHop~\citep{abu2019mixhop} performs message passing on each hop with graph diffusion kernel and concatenates the representation on each hop as the final representation. K-hop~\citep{nikolentzos2020k} sequentially performs the message passing from hop K to hop 1 to compute the representation of the center node. However, it is not parallelizable due to its computational procedure. MAGNA~\citep{wang2021multihop} introduces an attention mechanism to $K$-hop message passing. GPR-GNN~\citep{chien2021adaptive} use graph diffusion kernel to perform graph convolution on $K$-hop and aggregate them with learnable parameters. However, none of them give a formal definition of $K$-hop message passing and theoretically analyze its representation power and limitations.  

\section{Experiments}
In this section, we conduct extensive experiments to evaluate the performance of KP-GNN. Specifically, we 1) empirically verify the expressive power of KP-GNN on 3 simulation datasets and demonstrate the benefits of KP-GNN compared to normal $K$-hop message passing GNNs; 2) demonstrate the effectiveness of KP-GNN on identifying various node properties, graph properties, and substructures with 3 simulation datasets; 3) show that the KP-GNN can achieve state-of-the-art performance on multiple real-world datasets; 4) analyze the running time of KP-GNN. The detail of each variant of KP-GNN is described in Appendix~\ref{app:implementation} and the detailed experimental setting is described in Appendix~\ref{app:experiential}. We implement the KP-GNN with PyTorch Geometric package~\cite{fey2019fast}. Our code is available at \url{https://github.com/JiaruiFeng/KP-GNN}.

\textbf{Datasets}: To evaluate the expressive power of KP-GNN, we choose: 1) EXP dataset~\citep{abboud2020surprising}, which contains 600 pairs of non-isomorphic graphs (1-WL failed). The goal is to map these graphs to two different classes. 2) SR25 dataset~\citep{balcilar2021breaking}, which contains 15 non-isomorphic strongly regular graphs (3-WL failed) with each graph of 25 nodes. The dataset is translated to a 15-way classification problem with the goal of mapping each graph into different classes. 3) CSL dataset~\citep{murphy2019relational}, which contains 150 4-regular graphs (1-WL failed) divided into 10 isomorphism classes. The goal of the task is to classify them into corresponding isomorphism classes. To demonstrate the capacity of KP-GNN on counting node/graph properties and substructures, we pick 1) Graph property regression (connectedness, diameter, radius) and node property regression (single source shortest path, eccentricity, Laplacian feature) task on random graph dataset~\citep{corso2020principal}. 2) Graph substructure counting (triangle, tailed triangle, star, and 4-cycle) tasks on random graph dataset~\citep{chen2020can}. To evaluate the performance of KP-GNN on real-world datasets, we select 1) MUTAG~\citep{debnath1991structure}, D\&D~\citep{dobson2003distinguishing}, PROTEINS~\citep{dobson2003distinguishing}, PTC-MR~\citep{toivonen2003statistical}, and IMDB-B~\citep{yanardag2015deep} from TU database. 2) QM9~\citep{ramakrishnan2014quantum,wu2018moleculenet} and ZINC~\citep{dwivedi2020benchmarkgnns} for molecular properties prediction. The detailed statistics of the datasets are described in Appendix~\ref{app:datasets}. Without further highlighting, all error bars in the result tables are the standard deviations of multiple runs.

\begin{wraptable}[12]{L}{0.54\textwidth}
\vspace{-15pt}
\begin{center}

\caption{Empirical evaluation of the expressive power.}
\hspace{-17pt}
\resizebox{0.5\textwidth}{!}{
\begin{minipage}[t]{1.1\linewidth}
   \vspace{-5pt}
   \centering
\label{tab:expressive}
\begin{tabular}{l|l|cc|cc|cc}
\toprule
\multirow{2}{*}{Method}                   & \multirow{2}{*}{K} & \multicolumn{2}{l}{EXP (ACC)} & \multicolumn{2}{l}{SR (ACC)} & \multicolumn{2}{l}{CSL (ACC)} \\
    \cmidrule(l{2pt}r{2pt}){3-4}\cmidrule(l{2pt}r{2pt}){5-6} \cmidrule(l{2pt}r{2pt}){7-8} 
                                          &                    & SPD           & GD            & SPD           & GD           & SPD           & GD            \\
    \midrule
\multirow{4}{*}{\textbf{K-GIN}}           & K=1                & 50            & 50            & 6.67          & 6.67         & 12            & 12            \\
                                          & K=2                & 50            & 50            & 6.67          & 6.67         & 32            & 22.7          \\
                                          & K=3                & 100           & 66.9          & 6.67          & 6.67         & 62            & 42            \\
                                          & K=4                & 100           & 100           & 6.67          & 6.67         & 92.7          & 62.7          \\
\midrule
\multirow{4}{*}{\textbf{KP-GIN}} & K=1                & 50            & 50            & 100           & 100          & 22            & 22            \\
                                          & K=2                & 100           & 100           & 100           & 100          & 52.7          & 52.7          \\
                                          & K=3                & 100           & 100           & 100           & 100          & 90            & 90            \\
                                          & K=4                & 100           & 100           & 100           & 100          & 100           & 100 \\
    \bottomrule
\end{tabular}
\end{minipage}
}
\end{center}
\vspace{-8pt}
\hspace{-5pt}
\end{wraptable}

\textbf{Empirical evaluation of the expressive power}: 
For empirical evaluation of the expressive power, we conduct the ablation study on hop $K$ for both normal $K$-hop GNNs and KP-GNN. For $K$-hop GNNs, we implement K-GIN which uses GIN~\citep{xu2018powerful} as the base encoder. For KP-GNN, we implement KP-GIN. The results are shown in Table~\ref{tab:expressive}. Based on the results, we have the following conclusions: 1) $K$-hop GNNs with both two kernels have expressive power higher than the 1-WL test as it shows the perfect performance on the EXP dataset and performance better than a random guess on the CSL dataset. 2) Increasing $K$ can improve the expressive for both two kernels. 3) $K$-hop GNNs cannot distinguish any strong regular graphs in SR25 dataset, which is aligned with Theorem~\ref{thm:khop_limitation}. 4) KP-GNN has much higher expressive power than normal $K$-hop GNNs by showing better performance on every dataset given the same $K$. Further, it achieves perfect results on the SR25 dataset even with $K=1$, which demonstrates its ability on distinguishing distance regular graphs.

\begin{table*}[h]
    \vspace{-8pt}
\caption{Simulation dataset result. The top two are highlighted by \first{First}, \second{Second}. }
   \vspace{-4pt}
  \resizebox{1.0\textwidth}{!}{
  \begin{minipage}[t]{1.28\linewidth}

    \centering
    \label{tab:simulation}
  \begin{tabular}{l|ccc|ccc|cccc}
   \toprule
     \multirow{2}{*}{Method} & 
     \multicolumn{3}{c}{Node Properties ($\log_{10}$(MSE))}&  \multicolumn{3}{c}{Graph Properties ($\log_{10}$(MSE))}&
     \multicolumn{4}{c}{Counting Substructures (MAE)} \\
    \cmidrule(l{2pt}r{2pt}){2-4}\cmidrule(l{2pt}r{2pt}){5-7} \cmidrule(l{2pt}r{2pt}){8-11} 
    &  SSSP & Ecc. & Lap. &  Connect. & Diameter & Radius &
    Tri. & Tailed Tri. & Star & 4-Cycle \\ 
    \midrule  
    \textbf{GIN} & -2.0000 & -1.9000 & -1.6000 & -1.9239 & -3.3079 & -4.7584 & 0.3569 & 0.2373 & 0.0224 & 0.2185  \\
    \midrule  
    \textbf{PNA} &\first{-2.8900} & \first{-2.8900}& -3.7700 &  -1.9395 & 3.4382 & -4.9470 & 0.3532& 0.2648 & 0.1278 & 0.2430 \\
    \textbf{PPGN} &- &- &- & -1.9804 & -3.6147 & -5.0878 & \second{0.0089} & \second{0.0096} & \first{0.0148} & \first{0.0090} \\
    \textbf{GIN-AK+} & -& -& -&  \second{-2.7513} & \second{-3.9687} & -5.1846 & 0.0123 &  0.0112 & \second{0.0150} & \second{0.0126} \\
    \midrule
    \textbf{K-GIN+} & -2.7919 & -2.5938& \second{-4.6360} &  -2.1782 & \first{-3.9695} & \second{-5.3088} & 0.2593 & 0.1930 & 0.0165  & 0.2079 \\
    \textbf{KP-GIN+}  & \second{-2.7969} & \second{-2.6169}& \first{-4.7687}& \first{-4.4322}  & -3.9361 & \first{-5.3345} & \first{0.0060} & \first{0.0073} & 0.0151 & 0.0395\\
  \bottomrule
\end{tabular}
\end{minipage}
}
\vspace{-10pt}
\end{table*}

\textbf{Effectiveness on node/graph properties and substructure prediction}: To evaluate the effectiveness of KP-GNN on node/graph properties and substructure prediction, we compare it with several existing models. For the baseline model, we use GIN~\citep{xu2018powerful}, which has the same expressive power as the 1-WL test. For more powerful baselines, we use GIN-AK+~\citep{zhao2022from}, PNA~\citep{corso2020principal}, and PPGN~\citep{maron2019provably}. For normal $K$-hop GNNs, we implement K-GIN+, and for KP-GNN, we implement KP-GIN+. The results are shown in Table~\ref{tab:simulation}. Baseline results are taken from~\cite{zhao2022from} and~\cite{corso2020principal}. We can see KP-GIN+ achieve SOTA on a majority of tasks. Meanwhile, K-GIN+ also gets great performance on node/graph properties prediction. These results demonstrate the capability of KP-GNN to identify various properties and substructures. We leave the detailed results on counting substructures in Appendix~\ref{app:ablation}

\begin{table*}[h]
\vspace{-5pt}
\begin{center}

\caption{TU dataset evaluation result.}
\resizebox{0.9\textwidth}{!}{
\begin{minipage}[t]{0.95\linewidth}
   \vspace{-5pt}
\label{tab:tu}
  \begin{tabular}{lccccc}
   \toprule
    Method & MUTAG & D\&D & PTC-MR & PROTEINS & IMDB-B\\
    \midrule  
    \textbf{WL} & 90.4{\small$\pm$5.7} &  79.4{\small$\pm$0.3} & 59.9{\small$\pm$4.3} &75.0{\small$\pm$3.1} & 73.8{\small$\pm$3.9}  \\
    \midrule
    \textbf{GIN} & 89.4{\small$\pm$5.6} & - & 64.6{\small$\pm$7.0} & 75.9{\small$\pm$2.8} & 75.1{\small$\pm$5.1}\\
    \textbf{DGCNN} & 85.8{\small$\pm$1.7} & 79.3 {\small$\pm$0.9} & 58.6 {\small$\pm$2.5} & 75.5{\small$\pm$0.9} & 70.0{\small$\pm$0.9} \\
    \midrule
    \textbf{GraphSNN} & 91.24{\small$\pm$2.5} & 82.46{\small$\pm$2.7} & 66.96{\small$\pm$3.5} & 76.51{\small$\pm$2.5} & 76.93{\small$\pm$3.3}\\
    \textbf{GIN-AK+} & 91.30{\small$\pm$7.0} & - & 68.20{\small$\pm$5.6} & 77.10{\small$\pm$5.7} & 75.60{\small$\pm$3.7}\\
    \midrule
    \textbf{KP-GCN} & 91.7{\small$\pm$6.0} & 79.0{\small$\pm$4.7} & 67.1{\small$\pm$6.3} & 75.8{\small$\pm$3.5} & 75.9{\small$\pm$3.8}\\
    \textbf{KP-GraphSAGE} & 91.7{\small$\pm$6.5} & 78.1{\small$\pm$2.6} & 66.5{\small$\pm$4.0} & 76.5{\small$\pm$4.6} & 76.4{\small$\pm$2.7} \\
    \textbf{KP-GIN} & 92.2{\small$\pm$6.5} & 79.4{\small$\pm$3.8} & 66.8{\small$\pm$6.8} & 75.8{\small$\pm$4.6} &  76.6{\small$\pm$4.2}\\
    \midrule
   \textbf{GIN-AK+$^*$} & 95.0{\small$\pm$6.1} & OOM & 74.1{\small$\pm$5.9} & 78.9{\small$\pm$5.4} & 77.3{\small$\pm$3.1} \\
   \textbf{GraphSNN$^*$} & 94.70{\small$\pm$1.9} & \textbf{83.93{\small$\pm$2.3}} & 70.58{\small$\pm$3.1} & 78.42{\small$\pm$2.7} & 78.51{\small$\pm$2.8} \\ 
     \textbf{KP-GCN$^*$} & \textbf{96.1{\small$\pm$4.6}} & 83.2{\small$\pm$2.2} & \textbf{77.1{\small$\pm$4.1}} & 80.3{\small$\pm$4.2} & 79.6{\small$\pm$2.5}\\
    \textbf{KP-GraphSAGE$^*$} & \textbf{96.1{\small$\pm$4.6}} & 83.6{\small$\pm$2.4}& 76.2{\small$\pm$4.5} & \textbf{80.4{\small$\pm$4.3}} & 80.3{\small$\pm$2.4} \\
    \textbf{KP-GIN$^*$} & 95.6{\small$\pm$4.4} & 83.5{\small$\pm$2.2} & 76.2{\small$\pm$4.5} & 79.5{\small$\pm$4.4} & \textbf{80.7{\small$\pm$2.6}} \\
 
  \bottomrule
\end{tabular}
\end{minipage}
}
\end{center}
\vspace{-10pt}
\end{table*}

\textbf{Evaluation on TU datasets}: For baseline models, we select: 1) graph kernel-based method: WL subtree kernel~\citep{shervashidze2011weisfeiler}; 2) vanilla GNN methods: GIN~\citep{xu2018powerful} and DGCNN~\citep{zhang2018end}; 3) advanced GNN methods: GraphSNN~\citep{wijesinghe2022a} and GIN-AK+~\citep{zhao2022from}. For the proposed KP-GNN, we implement GCN~\citep{kipf2017semisupervised}, GraphSAGE~\citep{hamilton2017inductive}, and GIN~\citep{xu2018powerful} using the KP-GNN framework, denoted as KP-GCN, KP-GraphSAGE, and KP-GIN respectively. The results are shown in Table~\ref{tab:tu}. For a more fair and comprehensive comparison, we report the results from two different evaluation settings. The first setting follows~\citet{xu2018powerful} and the second setting follows~\citet{wijesinghe2022a}. We denote the second setting with $^*$ in the table. We can see KP-GNN achieves SOTA performance on most of datasets under the second setting and still comparable performance to other baselines under the first setting.

\begin{table}[t]
\vspace{-15pt}
\centering
   \caption{QM9 results. The top two are highlighted by \first{First}, \second{Second}.}
\hspace{-25pt}
  \resizebox{0.9\textwidth}{!}{
  \begin{minipage}[t]{0.95\textwidth}

   \label{tab:qm9}
  \begin{tabular}{l|ccccc|cc}
    \toprule
    \textbf{Target} &\textbf{DTNN}&\textbf{MPNN}&\textbf{Deep LRP} &\textbf{PPGN} &\textbf{N-1-2-3-GNN}& \textbf{KP-GIN+} & \textbf{KP-GIN$'$}\\
    \midrule
    $\mu$ & \second{0.244} &  0.358 & 0.364 & \first{0.231}& 0.433 & 0.367 & 0.358 \\
    $\alpha$ & 0.95 &  0.89 & 0.298 & 0.382 & 0.265 & \second{0.242} & \first{0.233} \\
    $\varepsilon_{\text{HOMO}}$ & 0.00388 &  0.00541 & 0.00254 & 0.00276 & 0.00279 & \second{0.00247} & \first{0.00240} \\
    $\varepsilon_{\text{LUMO}}$ & 0.00512 &  0.00623 & 0.00277&0.00287 & 0.00276 & \second{0.00238} & \first{0.00236}\\
    $\Delta \varepsilon$ & 0.0112 &  0.0066 & 0.00353 &0.00406 & 0.00390 & \second{0.00345}  & \first{0.00333}\\
    $\langle R^2 \rangle$ & 17.0 &  28.5 & 19.3 & 16.7 & 20.1 & \first{16.49} & \second{16.51} \\
    ZPVE & 0.00172 &  0.00216 & 0.00055 & 0.00064 &\first{0.00015} & 0.00018 & \second{0.00017} \\
    $U_0$ & 2.43 &  2.05 & 0.413 & 0.234 & 0.205 & \second{0.0728} & \first{0.0682}\\
    $U$ & 2.43 &  2.00 & 0.413 & 0.234 & 0.200 & \first{0.0553} & \second{0.0696}\\
    $H$ & 2.43 & 2.02 & 0.413 & 0.229 & 0.249 & \first{0.0575} & \second{0.0641}\\
    $G$ & 2.43 & 2.02 & 0.413 & 0.238 & 0.253 &\second{0.0526} & \first{0.0484} \\
    $C_v$ & 0.27 &  0.42 & 0.129 & 0.184 & \first{0.0811} & 0.0973 & \second{0.0869} \\ 
  \bottomrule
\end{tabular}
\end{minipage}
}
\vspace{-18pt}
\end{table}

\begin{wraptable}[11]{L}{0.42\textwidth}
\vspace{-14pt}
\caption{ZINC result.}
   \vspace{-5pt}
\resizebox{0.47\textwidth}{!}{
\begin{minipage}[t]{0.5\textwidth}
\label{tab:zinc}
\begin{tabular}{l|c|c}
\toprule
Method & \# param. & test MAE\\
\midrule
\textbf{MPNN} & 480805 & 0.145{\small$\pm$0.007}\\
\textbf{PNA} & 387155 & 0.142{\small$\pm$0.010}\\
\textbf{Graphormer} & 489321 & 0.122{\small$\pm$0.006}\\
\textbf{GSN} & \textasciitilde 500000 & 0.101{\small$\pm$0.010}\\
\textbf{GIN-AK+} & - & 0.080{\small$\pm$0.001}\\
\textbf{CIN} & - & \textbf{0.079{\small$\pm$0.006}}\\
\midrule
\textbf{KP-GIN+} & 499099 & 0.111{\small$\pm$0.006} \\
\textbf{KP-GIN$'$} & 488649 & 0.093{\small$\pm$0.007} \\
  \bottomrule

\end{tabular}
\end{minipage}
}
\end{wraptable}

\textbf{Evaluation on molecular prediction tasks}:  For QM9 dataset, we report baseline results of DTNN and MPNN from~\citep{wu2018moleculenet}. We further select Deep LRP~\citep{chen2020can}, PPGN~\citep{maron2019provably}, and Nested 1-2-3-GNN~\citep{zhang2021nested} as baseline models. For the ZINC dataset, we report results of MPNN~\citep{gilmer2017neural} and PNA~\citep{corso2020principal} from~\cite{ying2021do}. We further pick Graphormer~\citep{ying2021do}, GSN~\citep{bouritsas2021improving}, GIN-AK+~\citep{zhao2022from}, and CIN~\citep{bodnar2021weisfeiler}. For KP-GNN, we choose KP-GIN+ and KP-GIN$'$. The results of the QM9 dataset are shown in Table~\ref{tab:qm9}. We can see KP-GNN achieves SOTA performance on most of the targets. The results of the ZINC dataset are shown in Table~\ref{tab:zinc}. Although KP-GNN does not achieve the best result, it is still comparable to other methods.

\begin{wraptable}[8]{L}{0.48\textwidth}
\vspace{-10pt}
\caption{Running time (s/epoch).}
   \vspace{-5pt}
\resizebox{0.5\textwidth}{!}{
\begin{minipage}[t]{0.5\textwidth}
\label{tab:running_time}
  \begin{tabular}{llcc}
   \toprule
     Method & D\&D  & ZINC & Graph property \\
    \midrule  
    \textbf{GIN} & 1.10 & 3.59 & 1.02 \\
    \textbf{K-GIN} & 3.94 & 6.44 & 1.67 \\
    \textbf{KP-GIN} & 4.19 & 7.38 & 1.94 \\
    \textbf{KP-GIN+} & 4.28 & 6.74 & 1.93 \\
  \bottomrule
\hspace{-5pt}
\end{tabular}
\end{minipage}
}
\end{wraptable}

\textbf{Running time comparison}: In this section, we compare the running time of KP-GNN to 1-hop message passing GNN and $K$-hop message passing GNN. We use GIN~\citep{xu2018powerful} as the base model. We also include the KP-GIN+. All models use the same number of layers and hidden dimensions for a fair comparison. The results are shown in Table~\ref{tab:running_time}. We set $K=4$ for all datasets. We can see the computational overhead is almost linear to $K$. This is reasonable as practical graphs are sparse and the number of $K$-hop neighbors is far less than $n$ when using a small $K$. 

\section{Conclusion}
In this paper, we theoretically characterize the power of $K$-hop message passing GNNs and propose the KP-GNN to improve the expressive power by leveraging the peripheral subgraph information at each hop. Theoretically, we prove that $K$-hop GNNs can distinguish almost all regular graphs but are bounded by the 3-WL test. KP-GNN is able to distinguish many distance regular graphs. Empirically, KP-GNN achieves competitive results across all simulation and real-world datasets.  

\section{Acknowledgement}
This work is partially supported by NSF grant CBE-2225809 and NSF China (No. 62276003).

\medskip

\bibliography{references}
\bibliographystyle{unsrtnat}

\newpage
\appendix

\section{More about the $K$-hop kernel and $K$-hop message passing}
\label{app:khop_more}
\begin{figure*}[h]
\centering
\includegraphics[width=0.95\textwidth]{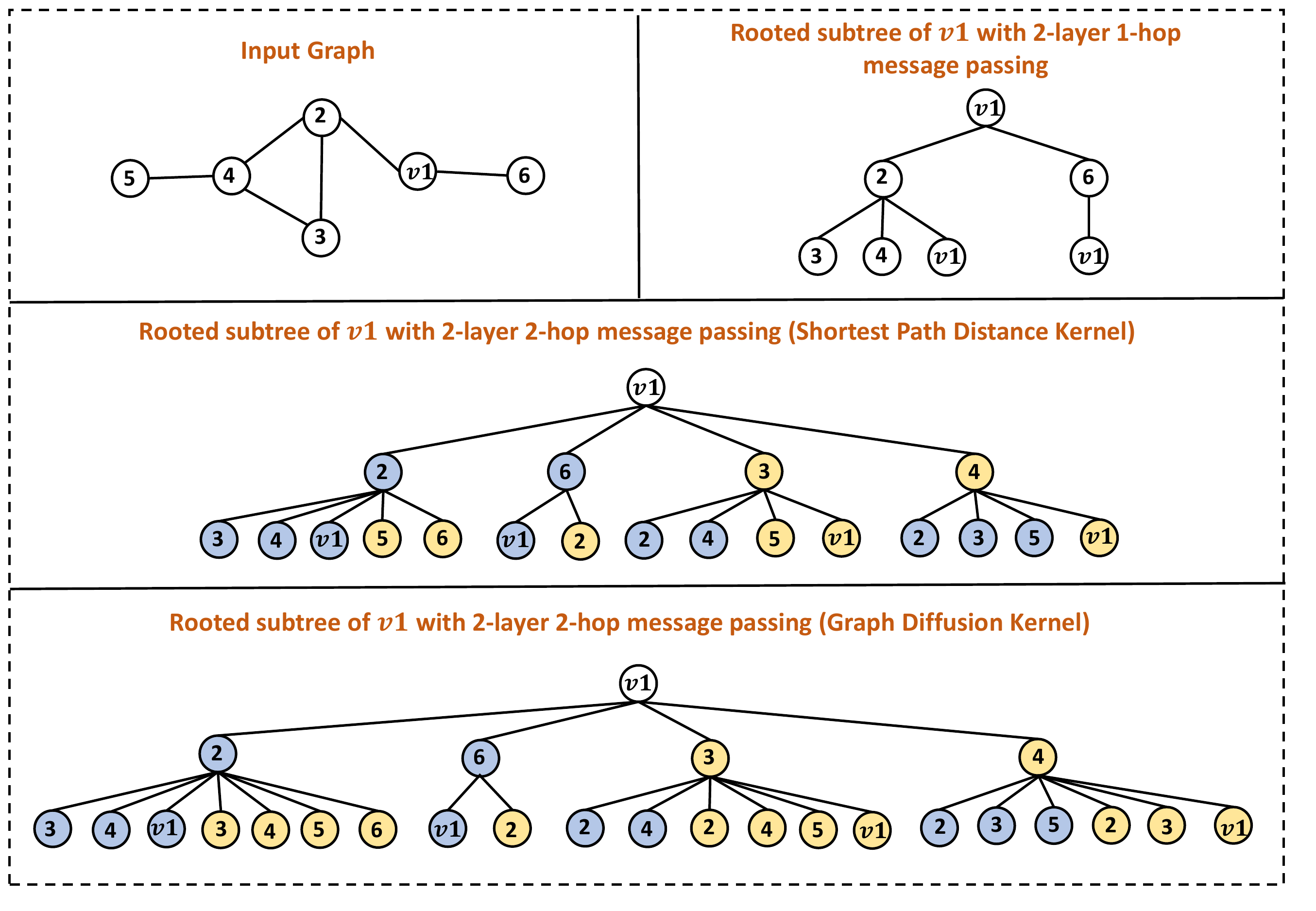}
\vspace{-5pt}
\caption{The rooted subtree of node $v1$ with 1-hop message passing and $K$-hop message passing. Here we assume that $K=2$ and the number of layers is 2. }
 \vspace{-15pt}
\label{fig:khopMP}
\end{figure*}
In this section, we further discuss two different types of K-hop kernel and K-hop message passing.
\subsection{More about $K$-hop kernel}
 First, recall the shortest path distance kernel and graph diffusion kernel defined in Definition~\ref{def:spd} and~\ref{def:gd}. Given two definitions, the first thing we can conclude is that the $K$-hop neighbors of node $v$ under two different kernels will be the same, namely $\mathcal{N}^{K,spd}_{v,G}=\mathcal{N}^{K,gd}_{v,G}$ as both two kernels capture all nodes that can be reached from node $v$ within the distance of $K$. Second, we have $\mathcal{N}^{1,spd}_{v,G}=Q^{1,spd}_{v,G}=\mathcal{N}^{1,gd}_{v,G}=Q^{1,gd}_{v,G}$, which means the neighbor set is same for both the shortest path distance kernel and the graph diffusion kernel when $K=1$. The third thing is that $Q^{k,spd}_{v,G}$ will not always equal to $Q^{k,gd}_{v,G}$ for some $k$. Since for shortest path distance kernel, one node will only appear in at most one of $Q^{k,spd}_{v,G}$ for $k=1,2,...,K$. Instead, nodes can appear in multiple $Q^{k,gd}_{v,G}$. This is the key reason why the choice of the kernel can affect the expressive power of $K$-hop message passing. 

\subsection{More about $K$-hop message passing}
Here, we use an example shown in Figure~\ref{fig:khopMP} to illustrate how K-hop message passing works and compare it with 1-hop message passing. The input graph is shown on the left top of the figure. Suppose we want to learn the representation of node $v1$ using 2 layer message passing GNNs. First, if we perform 1-hop message passing, it will encode a 2-height rooted subtree, which is shown on the right top of the figure. Note that each node is learned using the same set of parameters, which is indicated by filling each node with the same color (white in the figure). Now, we consider performing a 2-hop message passing GNN with the shortest path distance kernel. The rooted subtree of node $v1$ is shown in the middle of the figure. we can see that at each height, both 1st hop neighbors and 2nd hop neighbors are included. Furthermore, different sets of parameters are used for different hops, which is indicated by filling nodes in the different hops with different colors (blue for 1st hop and yellow for 2nd hop). Finally, at the bottom of the figure, we show the 2-hop message passing GNN with graph diffusion kernel. It is easy to see the rooted subtree is different from the one that uses the shortest path distance kernel, as nodes can appear in both the 1st hop and 2nd hop of neighbors. 

\section{Proof of injectiveness of Equation (\ref{eq:khop_mp})}\label{app:injective}

In this section, we formally prove that Equation (\ref{eq:khop_mp}) is an injective mapping of the neighbor representations at different hops. As here each layer $l$ is doing exactly the same procedure, we only need to prove it for one iteration. Therefore we ignore the superscript and rewrite Equation (\ref{eq:khop_mp}) as:

\begin{align}
\begin{aligned}
m^{k}_{v}=\text{MES}_{k}(\{\!\!\{(h_{u},e_{uv})|u\in Q^{k,t}_{v,G}&\}\!\!\}),~~
h^{k}_{v}=\text{UPD}_{k}(m^{k}_{v},h_{v}),\\
\hat{h}_{v}=\text{COMBINE}(\{\!\!\{h^{k}_{v}&|k=1,2,...,K\}\!\!\}).
\end{aligned}
\end{align}

Next, we state the following proposition:

\begin{proposition}
There exist injective functions $\text{MES}_k,\text{UPD}_k,k=1,2,...,K$, and an injective multiset function $\text{COMBINE}$, such that $\hat{h}_{v}$ is an injective mapping of $\big\{\!\!\big\{ (k, \{\!\!\{(h_{u},e_{uv})|u\in Q^{k,t}_{v,G}\}\!\!\}, h_v)  ~|~ k=1,2,...,K \big\}\!\!\big\}$.
\end{proposition}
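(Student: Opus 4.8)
The plan is to build the injective functions stage by stage and then invoke the elementary fact that a composition of injective maps is injective. Throughout I would rely on the standard Deep Sets / GIN fact (Lemma 5 of \citet{xu2018powerful}, see also \citet{zaheer2017deep}): since the node features and edge features are drawn from a countable space, for any bound on cardinality there exists an injective function on multisets of that bounded size over a countable set. Every multiset appearing in the statement---the per-hop neighbor multisets $\{\!\!\{(h_u,e_{uv}) \mid u \in Q^{k,t}_{v,G}\}\!\!\}$ and the $K$-element multiset combined at the end---is bounded in size and drawn from a countable domain, so this fact is available at each stage.

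First I would handle $\text{MES}_k$. Its argument is a bounded multiset over the countable product space of node-feature/edge-feature pairs, so the Deep Sets fact yields an injective $\text{MES}_k$; hence $m^k_v$ is an injective encoding of the $k$-th hop neighbor multiset. Next I would construct $\text{UPD}_k$ so that $h^k_v$ injectively encodes the triple $(k, m^k_v, h_v)$---for instance, by composing an injective pairing of the countable inputs $(m^k_v, h_v)$ with an injective tag for the index $k$. Because $m^k_v$ already injectively encodes the neighbor multiset, $h^k_v$ then injectively encodes $(k, \{\!\!\{(h_u,e_{uv}) \mid u \in Q^{k,t}_{v,G}\}\!\!\}, h_v)$.

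This tagging step is the one I expect to be the main obstacle, or at least the only delicate point. The $\text{COMBINE}$ function receives only the \emph{multiset} $\{\!\!\{h^k_v \mid k=1,\ldots,K\}\!\!\}$, which discards the hop index unless $k$ has been baked into each $h^k_v$: two different hops producing coincidentally equal representations would otherwise collapse into one, and the per-hop structure would be irrecoverable. Once the tag is present, the elements carry the distinct indices $1,\ldots,K$, so the multiset is in bijection with the indexed family, and no such collapse can occur.

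Finally I would apply the Deep Sets fact a second time to the bounded multiset $\{\!\!\{h^k_v \mid k=1,\ldots,K\}\!\!\}$ over the countable space of tagged representations, obtaining an injective $\text{COMBINE}$ so that $\hat{h}_v$ faithfully encodes this multiset. Composing the three injective maps, and using that each $h^k_v$ injectively encodes its tagged triple, shows that $\hat{h}_v$ is an injective image of $\{\!\!\{(k, \{\!\!\{(h_u,e_{uv}) \mid u \in Q^{k,t}_{v,G}\}\!\!\}, h_v) \mid k=1,\ldots,K\}\!\!\}$, which is exactly the claim.
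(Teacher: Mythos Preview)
Your proposal is correct and follows essentially the same approach as the paper: both invoke the Deep Sets/GIN existence results for injective multiset functions, identify the crucial step of tagging each per-hop output with its hop index $k$ before passing the results to the order-blind $\text{COMBINE}$, and conclude by composing injective maps. The only cosmetic difference is that the paper first merges $\text{MES}_k$ and $\text{UPD}_k$ into a single $\phi_k$ and places the $k$-tag there (e.g., by writing $k$ into a dedicated output coordinate), whereas you place the tag inside $\text{UPD}_k$; either way the argument is the same.
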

\begin{proof}
The existence of injective message passing ($\text{MES}_k$,$\text{UPD}_k$) and multiset pooling ($\text{COMBINE}$) functions are well proved in~\citep{xu2018powerful}. So below we prove the injectiveness of $\hat{h}_v$. First, we combine $\text{MSE}_{k}$ and $\text{UPD}_{k}$ together into $\phi_k$:

\begin{align}
\begin{aligned}
h^{k}_{v}=\phi_k(\{\!\!\{(h_{u},e_{uv})|u\in Q^{k,t}_{v,G}\}\!\!\},h_v).
\end{aligned}
\end{align}

Note that $\phi_k$ is still injective as the composition of injective functions is injective. Next, we need to prove that $h^{k}_{v}$ is an injective mapping of $(k, \{\!\!\{(h_{u},e_{uv})|u\in Q^{k,t}_{v,G}\}\!\!\}, h_v)$. To prove it, we rewrite the function $\phi_k(\cdot)$ into $\phi(k)(\cdot)$, that is, $\phi$ is an injective function taking $k$ as input and outputs a function $\phi_k=\phi(k)$. We let $\phi(k_1)(x_1)$ and $\phi(k_2)(x_2)$ output different values for $k_1\neq k_2$ given any input $x_1,x_2$, e.g., always let the final output dimension be $k$. Then, we can rewrite $h^{k}_{v}$ as

\begin{align}
\begin{aligned}
h^{k}_{v}&=\phi(k)(\{\!\!\{(h_{u},e_{uv})|u \in Q^{k,t}_{v,G}\}\!\!\},h_v)\\
&=\psi(k, \{\!\!\{(h_{u},e_{uv})|u\in Q^{k,t}_{v,G}\}\!\!\}, h_v),~
\end{aligned}
\end{align}

where we have composed two injective functions $\phi(\cdot)$ and $\phi(k)(\cdot)$ into a single one $\psi(\cdot)$. Since given different $k$, $\phi(k)(\cdot)$ always outputs distinct values for any input, and given fixed $k$, $\phi(k)(\cdot)$ always outputs different values for different $(\{\!\!\{(h_{u},e_{uv})|u\in Q^{k,t}_{v,G}\}\!\!\}, h_v)$, the resulting $\psi(\cdot)$ always outputs different values for different $(k, \{\!\!\{(h_{u},e_{uv})|u\in Q^{k,t}_{v,G}\}\!\!\}, h_v)$, i.e., it injectively maps $(k, \{\!\!\{(h_{u},e_{uv})|u\in Q^{k,t}_{v,G}\}\!\!\}, h_v)$. Thus, we have proved that $h^{k}_{v}$ is an injective mapping of $(k, \{\!\!\{(h_{u},e_{uv})|u\in Q^{k,t}_{v,G}\}\!\!\}, h_v)$.

Finally, since COMBINE is an injective multiset function, we conclude that its output $\hat{h}_{v}$ is an injective mapping of $\big\{\!\!\big\{ (k, \{\!\!\{(h_{u},e_{uv})|u\in Q^{k,t}_{v,G}\}\!\!\}, h_v)  ~|~ k=1,2,...,K \big\}\!\!\big\}$.
\end{proof}
\section{Proof of Theorem~\ref{thm:khop_regular} and simulation result}
\label{app:khop_regular}
\subsection{Proof of Theorem~\ref{thm:khop_regular}}
Here we restate Theorem~\ref{thm:khop_regular}: Consider all pairs of $n$-sized $r$-regular graphs, let $ 3 \leq r < (2log2n)^{1/2}$ and $\epsilon$ be a fixed constant. With at most $K=\lfloor(\frac{1}{2}+\epsilon)\frac{\log{2n}}{\log{(r-1)}} \rfloor$, there exists a 1 layer $K$-hop message passing GNN using the shortest path distance kernel that distinguishes almost all $1-o(n^{-1/2})$ such pairs of graphs.

As we state in the main paper, 1 layer $K$-hop GNN is equivalent to inject node configuration for each node label. Therefore, it is sufficient to show that given node configuration $A^{K,spd}_{v,G}$ for all $v \in V$, we can distinguish almost every pair of regular graphs. First, we introduce the following lemma.
\begin{lemma}
\label{lem:node_configuration}
For  two graphs $G^{(1)}=(V^{(1)},E^{(1)})$ and $G^{(2)}=(V^{(2)},E^{(2)})$ that are randomly and independently sampled from $n$-sized $r$-regular graphs with $ 3 \leq r < (2log2n)^{1/2}$. We pick two nodes $v_1$ and $v_2$ from two graphs respectively.  Let $K=\lfloor(\frac{1}{2}+\epsilon)\frac{\log{2n}}{\log{(r-1)}} \rfloor$ where $\epsilon$ is a fixed constant, $A^{K,spd}_{v_1,G^{(1)}}=A^{K,spd}_{v_2,G^{(2)}}$ with the probability at most $o(n^{-3/2})$.
\end{lemma}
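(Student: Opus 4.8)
The plan is to reduce the matching probability to a single--configuration ``collision'' quantity and then show that the law of one node configuration has a super-polynomially small maximal atom. Since $G^{(1)}$ and $G^{(2)}$ are drawn independently from the same distribution, and the uniform distribution on $r$-regular graphs is invariant under vertex relabeling, the configurations $A^{K,spd}_{v_1,G^{(1)}}$ and $A^{K,spd}_{v_2,G^{(2)}}$ are i.i.d. Writing $p_a := \Pr[A^{K,spd}_{v,G}=a]$ for a single random graph, this gives
\begin{equation}
\Pr\big[A^{K,spd}_{v_1,G^{(1)}}=A^{K,spd}_{v_2,G^{(2)}}\big]=\sum_a p_a^2 \le \max_a p_a .
\end{equation}
So it suffices to prove $\max_a p_a = o(n^{-3/2})$, i.e.\ that no single profile $(a^{1,spd},\dots,a^{K,spd})$ of hop sizes is attained with probability exceeding $o(n^{-3/2})$.

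Second, I would expose the neighborhood of $v$ by breadth-first search in the configuration (pairing) model, revealing half-edges level by level. If the ball of radius $K$ were a tree then $a^{k,spd}_{v,G}=r(r-1)^{k-1}$ deterministically, so the randomness in the profile comes entirely from \emph{collisions}: frontier half-edges at level $k-1$ that pair to already-exposed vertices instead of spawning fresh ones. Letting $c_k$ be the number of collisions at level $k$, the profile $a^{k,spd}_{v,G}$ is, up to lower-order multi-collision corrections, a deterministic function of $(c_1,\dots,c_k)$, so the configuration is essentially encoded by the vector $(c_1,\dots,c_K)$. A routine first-moment computation in the pairing model gives $\mathbb{E}[c_k]\approx \mu_k := C\,(r-1)^{2k}/n$, which grows geometrically in $k$; in particular $\mu_k \gtrsim 1$ precisely on a window $W=\{k : k_0 \le k \le K\}$ with $k_0\approx \tfrac12\log_{r-1}n$, of width $|W|\approx \epsilon\log_{r-1}n=\Theta(\log n)$.

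Third, and this is the crux, I would establish a conditional anti-concentration bound. Conditioned on $\mathcal{F}_{k-1}$, the $\sigma$-algebra generated by the exploration through level $k-1$, the count $c_k$ is close to a sum of many weakly dependent indicators with conditional mean $\approx\mu_k$, so a conditional local-CLT / Littlewood--Offord estimate should yield $\max_j \Pr[c_k=j \mid \mathcal{F}_{k-1}] \lesssim \mu_k^{-1/2}$ whenever $\mu_k\gtrsim 1$. Multiplying these bounds along $W$ gives
\begin{equation}
\max_a p_a \ \lesssim\ \prod_{k\in W}\mu_k^{-1/2}\ =\ \exp\!\Big(-\Theta\big((\log n)^2\big)\Big),
\end{equation}
the exponent being \emph{quadratic} in $\log n$ because the window has $\Theta(\log n)$ levels and the $\mu_k$ grow geometrically across it. This is $o(n^{-A})$ for every fixed $A$, hence in particular $o(n^{-3/2})$, which together with the first display finishes the lemma. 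The key structural point is that the product over a logarithmic number of levels, not any single level, is what drives the maximal atom below every polynomial rate.

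The main obstacle is making the third step rigorous rather than heuristic. One must (i) upgrade the i.i.d.\ local-CLT intuition to an anti-concentration bound for $c_k$ that holds \emph{conditionally} on $\mathcal{F}_{k-1}$ and \emph{uniformly} over the exposed history, which is delicate since the conditional mean is itself a random quantity concentrated near $\mu_k$; (ii) show the window of levels with $\mu_k\gtrsim 1$ has width $\Theta(\log n)$ with high probability, using the hypotheses $3\le r<(2\log 2n)^{1/2}$ and the choice of $K$ to guarantee that the ball neither degenerates (here $r\ge 3$ ensures supercritical branching) nor saturates all of $V$ before level $K$; and (iii) control the lower-order corrections --- multi-collisions, the map from $(c_1,\dots,c_K)$ back to the profile, and residual correlations between the $c_k$ --- so that the per-level anti-concentration factors may legitimately be multiplied.
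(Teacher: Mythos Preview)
Your plan is sound and tracks the paper's proof (which adapts Bollob\'as, \emph{Ann.\ Discrete Math.}~13 (1982), Theorem~6): BFS exploration in the configuration model, a per-level anticoncentration bound of order $n^{1/2}/(r-1)^{k}$ on the $k$-th hop size, and a product over the window $k_0\approx\tfrac12\log_{r-1}n\le k\le K$. The per-level estimate you need is exactly Bollob\'as's Lemma~7, a balls-in-boxes bound giving $\max_{l}\Pr\!\big[|Q^{k+1,spd}_{v}|=l\,\big|\,\mathcal{F}_{k}\big]\le c_0\,s_k^{1/2}/t_k$; citing it lets you argue directly on $|Q^{k,spd}|$ and sidestep your obstacle~(iii) about inverting the collision-to-profile map. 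Where you diverge from the paper is the opening reduction: the paper places \emph{both} nodes inside a single $2n$-vertex configuration $G_c=G^{(1)}\sqcup G^{(2)}$ and bounds $\Pr\!\big[|Q^{k+1}_{j}|=|Q^{k+1}_{i}|\big]$ conditionally level by level, whereas you exploit the independence of $G^{(1)},G^{(2)}$ to factor the matching probability as $\sum_a p_a^2\le\max_a p_a$ and reduce to a single-graph maximal-atom bound. Your reduction is arguably cleaner---it avoids having to argue that the two-node analysis survives on the non-uniform law of $G_c$---while the paper's route has the advantage that the entire argument is already written down in Bollob\'as. Two omissions you should fill in: first, the transfer from the pairing model back to simple $r$-regular graphs costs a factor $O(1/\Pr[\text{simple}])=O(n^{1/2})$ under the hypothesis $r<(2\log 2n)^{1/2}$, which your super-polynomial bound easily absorbs; second, the ``nice'' event you allude to in~(ii) is the paper's event~$A$ (at most a few dispensable edges in the early stages of the exploration), whose complement has probability $o(n^{-2})$ by elementary first-moment counts---and it is this $o(n^{-2})$, not the product term, that ultimately limits the final bound to $o(n^{-3/2})$ after the simple-graph conditioning.
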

\begin{proof}
This Lemma can be obtained based on Theorem 6 in~\cite{BOLLOBAS198233} with minor corrections. Here we state the proof. 

We first introduce the configuration model~\cite{bollobas_probabilistic_1980} of $n$-sized $r$-regular graphs. Suppose we have $n$ disjoint sets of items, $W_i, i \in \{1,2,...,n\}$, where each set has $r$ items and corresponds to one node in the configuration model. A configuration is a partition of all $nr$ items into $\frac{nr}{2}$ pairs. Denote by $\Omega$ the set of configurations and turn it into a probability space with each configuration the same probability. Turns out among all configurations in $\Omega$, given $r< (2logn)^{1/2}$, there are about $\exp(-\frac{r^2-1}{4})$ or $\Omega ( n^{-1/2})$ portion of them are simple $r$-regular graphs~\cite{bollobas_probabilistic_1980}. Then for these configurations, If there is a pair of items with one item from set $W_i$ and another item from set $W_j$, then there is an edge between node $i$ and $j$ in the corresponding $r$-regular graph.

Let $l_0=\lfloor(\frac{1}{2}+\epsilon)\frac{\log{n}}{\log{(r-1)}} \rfloor$, We first look at two nodes randomly selected from the configuration model and consider the following procedure to generate a graph. Let node $i$ and node $j$ be the selected nodes. In the first step, we select all the edges that directly connect to node $i$ and node $j$. Then we have all nodes that are at a distance of $1$ to $i$ or $j$. In the second step, we select all the edges that connect to nodes at a distance of $1$ to either node $i$ or node $j$. Doing this iteratively for $n-1$ steps and we end up with the union of components of $i$ and $j$ in a random configuration. 

We call an edge is \textit{indispensable} if it is the first edge that ensures a node $w$ is at a certain distance to $\{i,j\}$. Similarly, an edge is \textit{dispensable} if 1) both two ends of the edge are connected to either $i$ or $j$; 2) nodes in both two ends of the edge already have at least one edge. Note that edges with both two ends in the same node are dispensable. As the first $k-1$ edges selected so far can connect to at most $k+1$ nodes, the probability that the $k$-th edge selected is dispensable is at most:

\begin{equation*}
\frac{(k+1)(r-1)}{(n-k-1)r} \approx \frac{k}{n-k}.
\end{equation*}
Therefore the probability that more than 2 of the first $k_o=\lfloor n^{1/6} \rfloor$ edges are dispensable is at most:
\begin{equation}
\label{eq:dispensable1}
{k_o \choose 3} {\left(\frac{k_o}{n-k_o}\right)}^3=o(n^{-2}).
\end{equation}
 The probability that more than $l_1=\lfloor n^{1/8}\rfloor$ of the first $k_1=\lfloor n^{6/13}\rfloor$ edges are dispensable is at most 

\begin{equation}
\label{eq:dispensable2}
{k_1 \choose l_1+1}\left( \frac{k_1}{n-k_1}\right)^{l_1+1}=o(n^{-2}).
\end{equation}

The probability that more than $l_2=\lfloor n^{5/13 }\rfloor $ of the first $k_2=\lfloor n^{2/3}\rfloor$ edges are dispensable is at most 
\begin{equation}
\label{eq:dispensable3}
{k_2 \choose l_2+1}\left( \frac{k_2}{n-k_2}\right)^{l_2+1}=o(n^{-2}).
\end{equation}

Now, let $A$ be the event that at most 2 of the first $k_o$, at most $l_1$ of the first $k_1$, and at most $l_2$ of the first $k_2$ edges be dispensable. Given Equation~(\ref{eq:dispensable1})-(\ref{eq:dispensable3}), we know the probability of $A$ is $1-o(n^{-2})$. 

\textbf{Disscussion}: Briefly speaking, event $A$ means that at the first few $k$ steps of generation, the edge will reach almost as many nodes as possible and the number of nodes at distance $\{1, 2, ..., k\}$ will be certainly $r, r(r-1), ..., r(r-1)^{k-1}$ for either $W_i$ and $W_j$. Which means $A^{k,spd}_{i,G}=A^{k,spd}_{j,G}$ with probability close to 1. Nevertheless, it also allows later members of the node configuration to be different.  

Now, we consider another way of generating a graph with a configuration model. Similar to the above, suppose we have selected all the edges connecting to at least one node with a distance less than $k$ from $\{i,j\}$ after the $k$-th step. At the $k+1$-th step, we first select all the edges with one node at a distance of $k$ from node $i$. Next, we select all the edges with both two ends at a distance of $k$ from node $j$. It is easy to see that after these two procedures, the only edges left for completing the $k+1$-th step are the edges that have one end at the distance of $k$ from the node $j$ and another end at the distance of $k+1$ from the node $j$. Suppose there are $t_k$ edges in the nodes at distance $k$ from node $j$ that have not been generated so far and there are $s_k$ nodes that have not generate any edge yet. Then the final procedure of completing the $k+1$-th step is to connect $t_k$ edges to $s_k$ nodes. This procedure goes on for every $k$ to generate the final graph.

Now let us assume that $A$ holds. It is easily seen that then for $k \leq l_0$, we have:
\begin{equation}
\label{eq:tksk}
t_k\geq (r-1)^{k-3}  \quad \textrm{and} \quad  s_k\geq n/2,
\end{equation}

where both two bounds are rather crude. Now, after the first two procedures of $k+1$-th step, we have all the nodes at the distance of $k+1$ from node $i$, which means we already determined $Q^{k+1,spd}_{i,G}$. If $|Q^{k+1,spd}_{i,G}|=|Q^{k+1,spd}_{j,G}|$, then the connection of $t_k$ edges must belong to $|Q^{k+1,spd}_{i,G}|$ nodes from totally $s_k$ nodes. The probability of $|Q^{k+1,spd}_{i,G}|=|Q^{k+1,spd}_{j,G}|$ condition on Equation~(\ref{eq:tksk}) is at most the maximum of the probability that 
$t_k$ edges connect to $l$ nodes from $s_k$ nodes with degree $r$. This probability is bounded by:

\begin{equation}
\label{eq:subset}
\underset{l}{max}\ P(|Q^{k+1,spd}_{j,G}|=l)\leq c_o \frac{s_k^{1/2}}{t_k},
\end{equation}

where we assume the $r\geq 3$ and $t_k\leq cs_k^{5/8}$ for some constant $c$ and $c_o$ is also a constant. The proof of this can be found in Lemma 7 of~\cite{bollobas_probabilistic_1980}. Given Equation~(\ref{eq:subset}), the probability that $A^{l,spd}_{i,G}=A^{l,spd}_{j,G}$ for $l \leq l_0$ is at most

\begin{equation*}
1-P(A)+\prod^{l_0}_{l=h} c_0\frac{n^{1/2}}{(r-1)^{l-3}},
\end{equation*}
where $h=\lfloor \frac{1}{2} \frac{log n}{log (r-1)}\rfloor +3$. Since $(r-1)^{l_0}\geq n^{(1+\epsilon)/2}$, the sum above is $o(n^{-2})$. Since there is at least $\Omega (n^{-1/2})$ of all the graphs generated by the configuration model that are simple $r$-regular graphs, there are at most $o(n^{-2}/n^{-1/2})=o(n^{-3/2})$ of probability that $A^{l_0,spd}_{i,G}=A^{l_0,spd}_{j,G}$.

Next, for any pair of $n$-sized $r$-regular graphs $G^{(1)}$ and $G^{(2)}$, we can combine these two graphs and generate a single regular graph with $2n$ nodes. Denote this combined graph as $G_c$ and $G_c$ has two disconnected components. It is easy to see that the above proof is still valid on $G_c$. This means that: suppose we randomly pick a node $v_1$ from the first component and node $v_2$ from another component. Then, given $3 \leq r < (2log2n)^{1/2}$ and $K=\lfloor(\frac{1}{2}+\epsilon)\frac{\log{2n}}{\log{(r-1)}} \rfloor$, we have $A^{K,spd}_{v_1,G_c} = A^{K,spd}_{v_2,G_c}$ with probability of $o(n^{-3/2})$. As node $v_1$ and $v_2$ are in two disconnected components $G^{(1)}$ and $G^{(2)}$. Therefore, it is easy to see $A^{K,spd}_{v_1,G_c}=A^{K,spd}_{v_1,G^{(1)}}$ and $A^{K,spd}_{v_2,G_c}=A^{K,spd}_{v_1,G^{(2)}}$, which completes the proof.

\end{proof}

Theorem~\ref{thm:khop_regular} is easy to prove with the aid of Lemma~\ref{lem:node_configuration}. Basically, we consider a node $v_1$ in graph $G^{(1)}$ and compare the $A^{K,spd}_{v_1,G^{(1)}}$ with $A^{K,spd}_{v_2,G^{(2)}}$ for all nodes $v_2 \in V^{(2)}$. The probability that $A^{K,spd}_{v_2,G^{(2)}} \neq A^{K,spd}_{v_1,G^{(1)}}$ for all possible $v_2$ is $1- o(n^{-3/2}n)=1-o(n^{-1/2})$. Therefore, with an injective readout function, we can guarantee that 1 layer $K$-hop message passing GNN can generate different embedding for two graphs. 

\subsection{Simulation experiments to verify Theorem~\ref{thm:khop_regular}}

\begin{figure*}[h]
\centering
\includegraphics[width=0.49\textwidth]{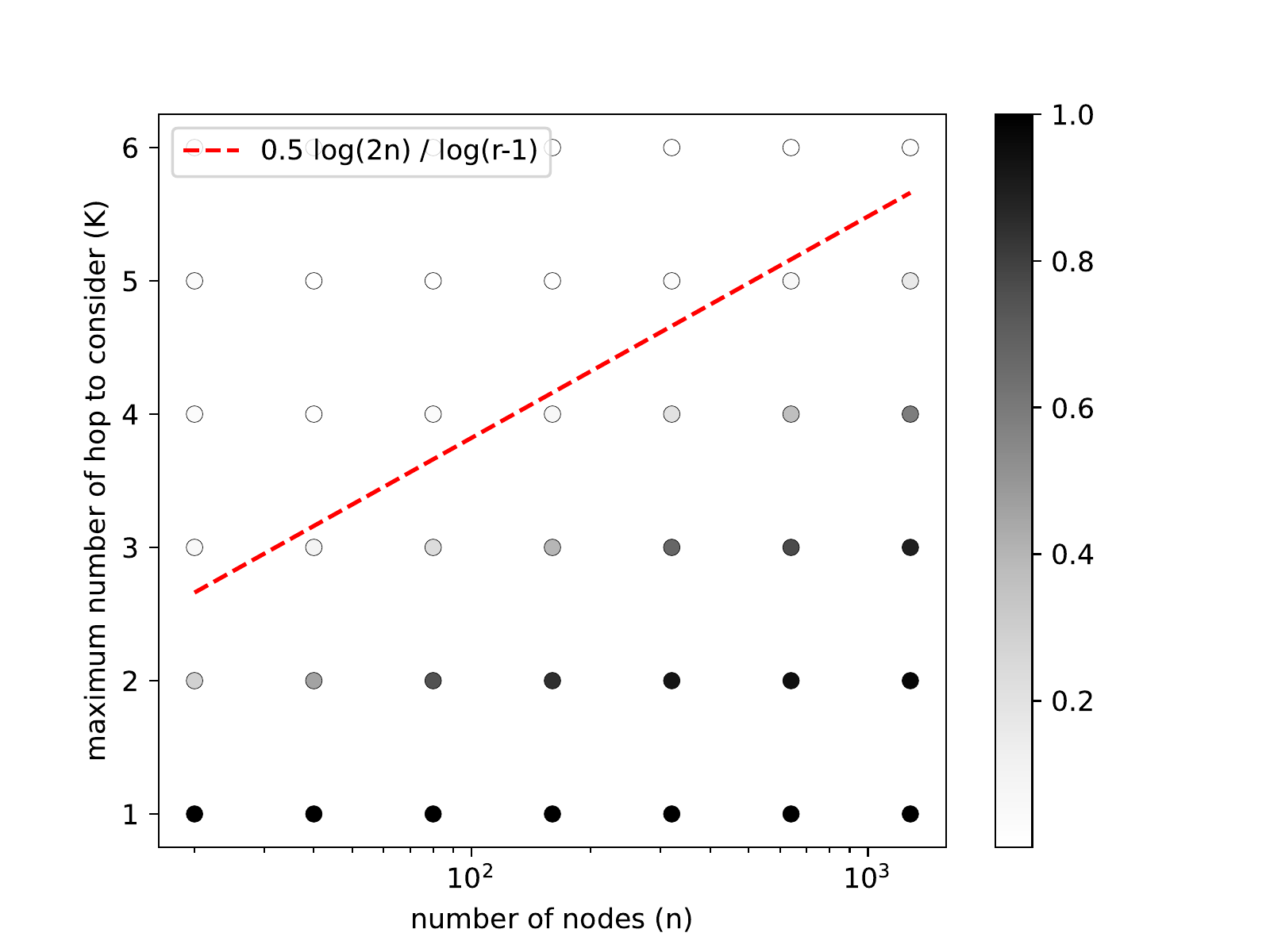}
\includegraphics[width=0.49\textwidth]{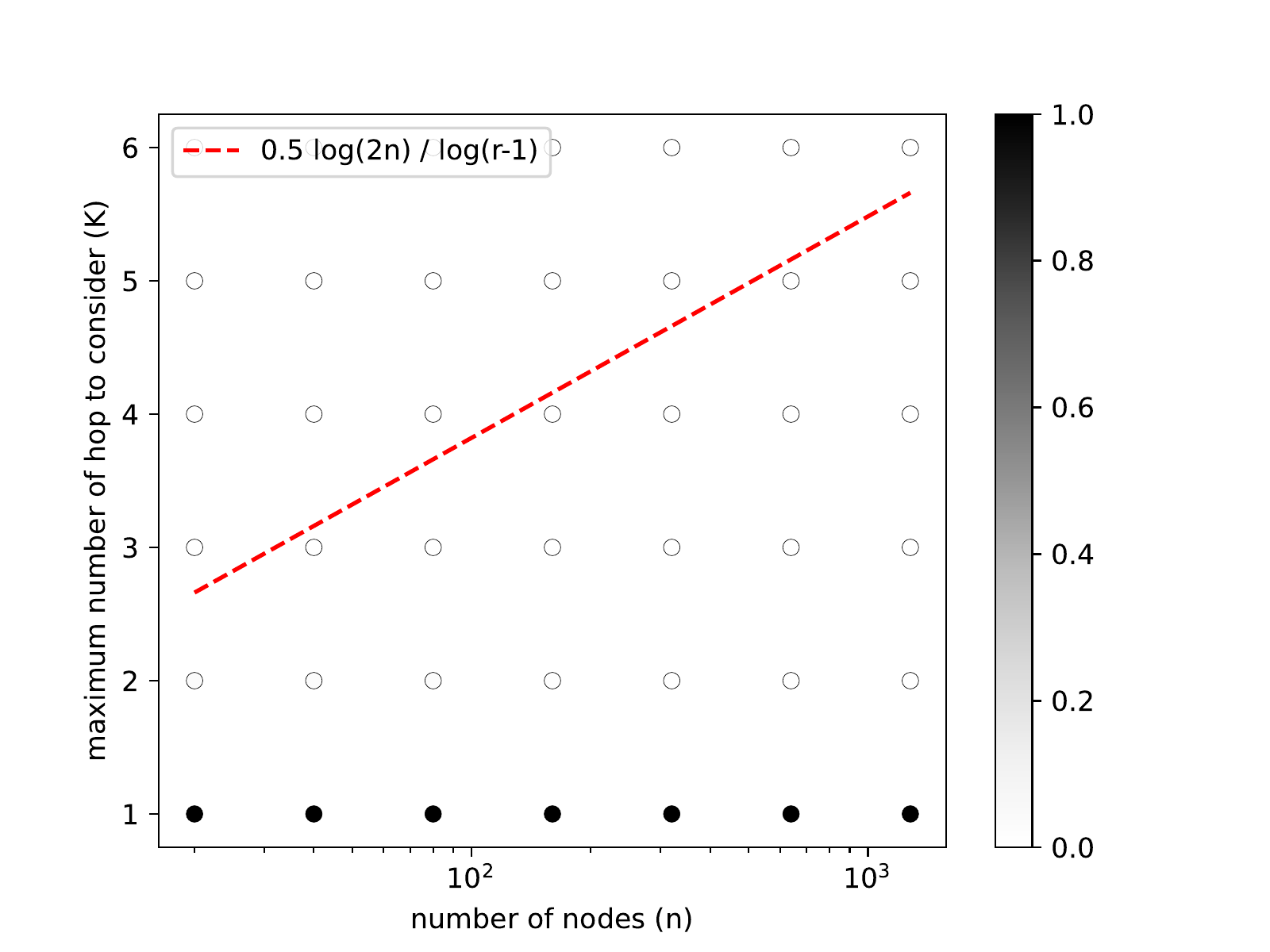}
\caption{Simulation results. The left side is the node-level result to verify Lemma~\ref{lem:node_configuration}. The right side is the graph-level result to verify Theorem~\ref{thm:khop_regular}.}
\label{fig:simulation}
\vspace{-15pt}
\end{figure*}
In this section, we conduct simulation experiments to verify both Lemma~\ref{lem:node_configuration} and Theorem~\ref{thm:khop_regular}. The results are shown in Figure~\ref{fig:simulation}. We randomly generate 100 $n$-sized 3 regular graphs with $n$ ranging from 20 to 1280. Then, we apply a 1-layer untrained K-GIN model on these graphs (1 as node feature) with $K$ range from 1 to 6. On the left side, we compare the final node representations for all nodes output by K-GIN, If the difference between two node representations $||h_v-h_u||_2$ is less than machine accuracy ($1e-10$), they are regarded as indistinguishable. The colors of the scatter plot indicate the portion of two nodes that are not distinguishable by K-GIN. The darker, the more indistinguishable node pairs. We can see that the result matches almost perfectly with Lemma~\ref{lem:node_configuration}, where $K$ is larger than $\frac{1}{2}\frac{\log{2n}}{\log{(r-1)}}$, almost all nodes are distinguishable by K-GIN. On the right side, we compare the final graph representation output by K-GIN. We can see even with $K=2$, almost all graphs are distinguishable. This is because as long as there exists one single node from one graph that has a different representation from all nodes in another graph, 1 layer K-hop message passing with an injective readout function can distinguish two graphs. 

\section{General $K$-hop color refinement and discussion on existing $K$-hop message passing GNNs}\label{app:khop_compare}
In this section, we introduce a general $K$-hop color refinement algorithm and use this algorithm to characterize the expressive power of existing $K$-hop methods further.
\subsection{General $K$-hop color refinement algorithm}
It is well-known that 1-WL test updates the label of each node in the graph by color refinement algorithm, which iteratively aggregates the label of its neighbors. Here, we extend it and define a more general color refinement algorithm. First, we denote $[n]=\{0,1,...,n\}$ and introduce \textit{refinement configuration}. 
\begin{definition}
Given $L$ and $K$, the refinement configuration $\mathbf{C}^{L,K}=(C^{0,K},C^{1,K},...,C^{L,K})$, where $C^{l,K}=(C^{l,K}_0,C^{l,K}_1,...,C^{l,K}_K)$ and $C^{l,K}_k \subseteq [l]$ for any $l \in [L] $ and $k \in [K]$.
\end{definition}
Briefly speaking, refinement configuration defines how the color refinement algorithm aggregates information from neighbors of each hops at each iteration $l$ given the maximum iteration of $L$ and the maximum number of hop $K$. Given the refinement configuration $\mathbf{C}^{L,K}$, we define general color refinement algorithm $\textbf{CR}(\mathbf{C}^{L,K})$:
\begin{align}
\begin{aligned}
\label{eq:general_cr}
R^{0}_{v,G}&=\text{LABEL}(v),\\
R^{l+1}_{v,G}=\text{HASH}((\{\!\!\{\{\!\!\{R^{s}_{u,G}|u\in Q^{0,t}_{v,G}\}\!\!\}|s \in &C^{l,K}_0\}\!\!\},...,\{\!\!\{\{\!\!\{R^{s}_{u,G}|u\in Q^{K,t}_{v,G}\}\!\!\}|s \in C^{l,K}_K\}\!\!\})),
\end{aligned}
\end{align}
where LABEL function assign the initial color to node. Equation~(\ref{eq:general_cr}) define the general color refinement algorithm. Then, given different refinement configurations, we can end up with different procedures for the algorithm. Specifically, we define the following refinement configurations:

\begin{definition}
The 1-WL refinement configuration is defined as $\mathbf{C}^{L,K}_{\text{1-WL}}=(C^{0,K}_{\text{1-WL}},C^{1,K}_{\text{1-WL}},...,C^{L,K}_{\text{1-WL}})$, where $C^{l,K}_{\text{1-WL},k}=\{l\}$ for $k=0,1$, and $C^{l,K}_{\text{1-WL},k}=\emptyset$ for others.
\end{definition}

\begin{definition}
The $K$-hop refinement configuration is defined as $\mathbf{C}^{L,K}_{\text{K-hop}}=(C^{0,K}_{\text{K-hop}},C^{1,K}_{\text{K-hop}},...,C^{L,K}_{\text{K-hop}})$, where $C^{l,K}_{\text{K-hop},k}=\{l\}$ for all $k \in [K]$.
\end{definition}

\begin{definition}
The GINE+ refinement configuration is defined as $\mathbf{C}^{L,K}_{\text{GINE+}}=(C^{0,K}_{\text{GINE+}},C^{1,K}_{\text{GINE+}},...,C^{L,K}_{\text{GINE+}})$, where $C^{l,K}_{\text{GINE+},k}=\{l\}$ for $k=0$, and $C^{l,K}_{\text{GINE+},k}=\{l-k+1\}$ for $k=1,2,...,l+1$, $C^{l,K}_{\text{GINE+},k}= \emptyset$ if $k>l+1$. 
\end{definition}
It is easy to see that $\textbf{CR}(\mathbf{C}^{L,K}_{\text{1-WL}})$ is exactly the same as the color refinement algorithm in 1-WL test. Next, we can analyze the expressive power of the general color refinement algorithm given different refinement configurations. We say $\mathbf{C}^{L,K}_{1} \succeq \mathbf{C}^{L,K}_{2}$ if $\textbf{CR}(\mathbf{C}^{L,K}_{1})$ is at least equally powerful as $\textbf{CR}(\mathbf{C}^{L,K}_{2})$ in terms of expressive power. Then, we have the following properties for the general refinement algorithm.
\begin{property}
\label{property:layer}
$\mathbf{C}^{L+1,K}\succeq \mathbf{C}^{L,K}$.
\end{property}
\begin{property}
\label{property:cover}
If $C^{l,K}_{1,k} \subseteq C^{l,K}_{2,k}$ for any $k \in [K]$ and $l\in [L]$, then $\mathbf{C}^{L,K}_{1}\succeq \mathbf{C}^{L,K}_{2}$.
\end{property}
These two properties are easy to validate, as given the injective HASH function, an algorithm with more information is always at least equally powerful as an algorithm with less information. Moreover, we have the following proposition.
\begin{proposition}
\label{pro:color_inject}
If for any $l \in [L]$ and any $k \in [K]$, $\mathbf{C}^{l,K}_{1}$ and $\mathbf{C}^{l,K}_{2}$ satisfy $i \in C^{l,K}_{1,k}$ and $j \in C^{l,K}_{2,k}$ if and only if $i \geq j$, then $\mathbf{C}^{L,K}_{1}\succeq \mathbf{C}^{L,K}_{2}$.
\end{proposition}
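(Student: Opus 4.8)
The plan is to show that the node partition produced by $\textbf{CR}(\mathbf{C}^{L,K}_1)$ is at least as fine as the one produced by $\textbf{CR}(\mathbf{C}^{L,K}_2)$; equivalently, that there is a graph-independent map sending the final color $R^{L}_{v,G}$ computed under $\mathbf{C}_1$ to the one computed under $\mathbf{C}_2$. Once this is in place, $\mathbf{C}^{L,K}_1\succeq \mathbf{C}^{L,K}_2$ follows by contraposition: if two graphs get identical final color multisets under $\mathbf{C}_1$, applying the map componentwise gives identical multisets under $\mathbf{C}_2$, so anything $\mathbf{C}_2$ distinguishes is already distinguished by $\mathbf{C}_1$. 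I read the hypothesis as a \emph{domination of iteration indices}: for every $(l,k)$ the sets $C^{l,K}_{1,k}$ and $C^{l,K}_{2,k}$ are matched so that each index $j$ used by $\mathbf{C}_2$ is paired with an index $i\geq j$ used by $\mathbf{C}_1$ (when every set is a singleton, as in the $1$-WL, $K$-hop, and GINE+ configurations, this is exactly the stated condition). I write $R^{l,(1)}$ and $R^{l,(2)}$ for the two colorings.

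The key enabling fact I would prove first is a \textbf{refinement-monotonicity lemma} for a single run: inside $\textbf{CR}(\mathbf{C}_1)$, for every $i\geq j$ the coloring $R^{i,(1)}$ refines $R^{j,(1)}$, i.e. $R^{i,(1)}_{v,G}$ determines $R^{j,(1)}_{v,G}$. It suffices to show $R^{l+1,(1)}$ refines $R^{l,(1)}$ and compose. This uses the hop-$0$ term: since $Q^{0,t}_{v,G}=\{v\}$ and, by the natural convention satisfied by every configuration in the paper, $l\in C^{l,K}_{1,0}$, the inner multiset for $k=0$ contains $R^{l,(1)}_{v,G}$, so by injectivity of HASH the new color $R^{l+1,(1)}_{v,G}$ determines $R^{l,(1)}_{v,G}$. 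This is the standard "colors only get finer" property of color refinement, transcribed to the general update of Equation~(\ref{eq:general_cr}).

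With this in hand I would induct on $l$ with the strengthened hypothesis: $R^{l,(1)}_{v,G}$ determines $R^{j,(2)}_{v,G}$ for every $j\leq l$, through a map independent of $(v,G)$. The base case $l=0$ is immediate, as both colorings start from $\text{LABEL}(v)$. In the step, the part $j\leq l$ follows from monotonicity ($R^{l+1,(1)}$ determines $R^{l,(1)}$) composed with the inductive hypothesis. The substantive part is $j=l+1$: here $R^{l+1,(2)}_{v,G}$ is an injective HASH of, for each hop $k$, the nested multiset $\{\!\!\{\{\!\!\{R^{s,(2)}_{u,G}\mid u\in Q^{k,t}_{v,G}\}\!\!\}\mid s\in C^{l,K}_{2,k}\}\!\!\}$. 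For each $s=j'\in C^{l,K}_{2,k}$ pick its paired index $i'\in C^{l,K}_{1,k}$ with $j'\leq i'\leq l$. By the inductive hypothesis at level $i'$, each $R^{i',(1)}_{u,G}$ determines $R^{j',(2)}_{u,G}$, so $\{\!\!\{R^{j',(2)}_{u,G}\mid u\in Q^{k,t}_{v,G}\}\!\!\}$ is determined by $\{\!\!\{R^{i',(1)}_{u,G}\mid u\in Q^{k,t}_{v,G}\}\!\!\}$, which is one of the components hashed into $R^{l+1,(1)}_{v,G}$ (because $i'\in C^{l,K}_{1,k}$) and hence is recoverable from it by injectivity of HASH. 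Collecting over all $k$ shows $R^{l+1,(1)}_{v,G}$ determines every component of $R^{l+1,(2)}_{v,G}$, and therefore $R^{l+1,(2)}_{v,G}$ itself. Taking $l=L$, $j=L$ yields the required map.

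The main obstacle I anticipate is making the index-domination hypothesis precise and threading the iteration indices correctly through the induction: the colors aggregated at step $l$ originate from possibly several earlier levels $s$, and the argument only closes because every such level lies in $[l]$ (so the inductive hypothesis applies) and because monotonicity lets me trade $\mathbf{C}_1$'s higher index $i'$ for $\mathbf{C}_2$'s lower index $j'$ in the correct direction. The delicate point is the self-color inclusion $l\in C^{l,K}_{1,0}$ that drives monotonicity; I would record this as a standing convention (it holds for all configurations considered) and note that without it a configuration could in principle discard information between iterations, breaking the monotonicity step.
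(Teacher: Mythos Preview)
Your proof is correct and follows essentially the same inductive approach as the paper: both argue that the coloring produced by $\mathbf{C}_1$ refines the one produced by $\mathbf{C}_2$ at every level $l$, combining the index-domination hypothesis with within-run refinement monotonicity. You are more explicit than the paper about isolating the monotonicity lemma and the standing convention $l\in C^{l,K}_{1,0}$ that drives it; the paper's proof simply invokes ``injectiveness of HASH'' to assert that $R^{l,1}_{v_1}=R^{l,1}_{v_2}$ forces $R^{t,i}_{v_1}=R^{t,i}_{v_2}$ for all earlier $t$ and both $i$, which is exactly the combination of your monotonicity lemma with a (slightly weaker) version of your strengthened induction hypothesis.
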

\begin{proof}
Let $R^{l,i}_{v,G}$ denote the color refinement result after iteration $l$ for node $v$ in graph $G=(V,E)$ using refinement configuration $\mathbf{C}^{L,K}_{i}$. 
\begin{enumerate}
\item At iteration 1, if a node aggregates its neighbor's label from some hop, it can only aggregate the initial label of nodes, or namely $R^{0,i}_{v,G}$. Then, if $C^{1,K}_{1,k}=\{\!\!\{0\}\!\!\}$, $C^{1,K}_{2,k}$ can be either $\{\!\!\{0\}\!\!\}$ or $\emptyset$. If $C^{1,K}_{1,k}=\emptyset$, then $C^{1,K}_{2,k}=\emptyset$ It is trivial to see that if $R^{1,1}_{v_1,G}=R^{1,1}_{v_2,G}$ for any pair of nodes $v_1,v_2 \in G$, then $R^{1,2}_{v_1,G}=R^{1,2}_{v_2,G}$, which means $\mathbf{C}^{0,K}_{1}\succeq \mathbf{C}^{0,K}_{2}$ holds.

\item At iteration $l$ Assume $\mathbf{C}^{l-1,K}_{1}\succeq \mathbf{C}^{l-1,K}_{2}$ holds. 

\item At iteration $l+1$, the condition in the proposition means color refinement algorithm with $C^{l,K}_{2}$ can only aggregate results from earlier iteration than $C^{l,K}_{1}$ at any hops. Meanwhile, as $\mathbf{C}^{l-1,K}_{1}\succeq \mathbf{C}^{l-1,K}_{2}$ holds, as long as $R^{l,1}_{v_1,G}=R^{l,1}_{v_2,G}$, we have $R^{t,i}_{v_1,G}=R^{t,i}_{v_2,G}$ holds for any $t \in [l]$ and $i=1,2$ given the injectiveness of HASH function. This means that if $R^{l+1,1}_{v_1,G}=R^{l+1,1}_{v_2,G}$, then $R^{l+1,2}_{v_1,G}=R^{l+1,2}_{v_2,G}$. Therefore, $\mathbf{C}^{l,K}_{1}\succeq \mathbf{C}^{l,K}_{2}$ also holds. this completes the proof.
\end{enumerate}
\end{proof}
Given two properties and Proposition~\ref{pro:color_inject}, we have the following results.
\begin{theorem}
\label{thm:color_expressive}
$\mathbf{C}^{L,K}_{\text{K-hop}} \succeq \mathbf{C}^{L,K}_{\text{GINE+}} \succeq \mathbf{C}^{L,K}_{\text{1-WL}}\succeq \mathbf{C}^{0,K}_{\text{GINE+}} \succeq \mathbf{C}^{0,K}_{\text{1-WL}}$. 
\end{theorem}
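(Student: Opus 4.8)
The plan is to exploit transitivity of $\succeq$ (the relation ``at least as powerful as'' is clearly a preorder) and establish each of the four links in the chain separately, matching each to one of the three tools already in hand: Property~\ref{property:layer} (monotonicity in the number of layers $L$), Property~\ref{property:cover} (a configuration that aggregates a componentwise superset of information is at least as powerful), and Proposition~\ref{pro:color_inject} (aggregating each hop from a weakly-later, hence more refined, iteration is at least as powerful). By transitivity it suffices to prove $\mathbf{C}^{L,K}_{\text{K-hop}} \succeq \mathbf{C}^{L,K}_{\text{GINE+}}$, then $\mathbf{C}^{L,K}_{\text{GINE+}} \succeq \mathbf{C}^{L,K}_{\text{1-WL}}$, then $\mathbf{C}^{L,K}_{\text{1-WL}} \succeq \mathbf{C}^{0,K}_{\text{GINE+}}$, and finally $\mathbf{C}^{0,K}_{\text{GINE+}} \succeq \mathbf{C}^{0,K}_{\text{1-WL}}$.

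For the first link I would compare the two configurations position by position. At iteration $l$ and hop $k$, the K-hop scheme always aggregates the current colors $C^{l,K}_{\text{K-hop},k}=\{l\}$, whereas GINE+ aggregates the earlier colors $\{l-k+1\}$ for $1\le k\le l+1$ and nothing for $k>l+1$. Wherever GINE+ aggregates, K-hop reads from the weakly-later index $l\ge l-k+1$, so Proposition~\ref{pro:color_inject} applies; for the residual hops $k>l+1$, K-hop aggregates strictly more, which only increases power by the information-monotonicity underlying Property~\ref{property:cover}. For the second link, note that at hops $k=0,1$ GINE+ uses index $l-k+1=l$, identical to 1-WL, while at hops $k\ge 2$ GINE+ aggregates precisely what 1-WL discards; hence the 1-WL configuration is a componentwise subset of the GINE+ configuration at every iteration, and $\mathbf{C}^{L,K}_{\text{GINE+}}\succeq \mathbf{C}^{L,K}_{\text{1-WL}}$ follows from Property~\ref{property:cover}.

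The remaining two links hinge on a single observation: at $L=0$ the GINE+ and 1-WL schemes coincide, since GINE+ reduces to $C^{0,K}_{\text{GINE+},k}=\{0\}$ for $k=0,1$ and $\emptyset$ otherwise, exactly matching 1-WL. This gives $\mathbf{C}^{0,K}_{\text{GINE+}} = \mathbf{C}^{0,K}_{\text{1-WL}}$, so the fourth link holds with equality, and the third link reduces to $\mathbf{C}^{L,K}_{\text{1-WL}} \succeq \mathbf{C}^{0,K}_{\text{1-WL}}$, obtained by applying Property~\ref{property:layer} $L$ times and chaining with transitivity. The main obstacle is the first link, $\mathbf{C}^{L,K}_{\text{K-hop}} \succeq \mathbf{C}^{L,K}_{\text{GINE+}}$: unlike the others it is not a clean subset relation, because K-hop and GINE+ read different iterations at each hop, so neither Property~\ref{property:cover} alone nor a superficial counting argument settles it. The real content lives in the inductive step of Proposition~\ref{pro:color_inject} — that a collision of current-iteration colors forces collisions at all earlier iterations by injectivity of HASH — which I would extend so that it simultaneously absorbs the extra hops $k>l+1$ where only K-hop aggregates.
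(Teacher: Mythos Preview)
Your decomposition and the tools you invoke for each link match the paper's proof essentially step for step: Properties~\ref{property:layer}--\ref{property:cover} for the ``easy'' comparisons and Proposition~\ref{pro:color_inject} for $\mathbf{C}^{L,K}_{\text{K-hop}} \succeq \mathbf{C}^{L,K}_{\text{GINE+}}$. The only point where you diverge is your worry about the hops $k>l+1$; in fact no extension is needed, since the base case in the proof of Proposition~\ref{pro:color_inject} already allows $C^{l,K}_{2,k}=\emptyset$ while $C^{l,K}_{1,k}\neq\emptyset$, so the paper simply applies the proposition as stated.
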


\begin{proof}
Using the Property~\ref{property:layer} and Property~\ref{property:cover} of general color refinement algorithm, it is easy to prove that $\mathbf{C}^{L,K}_{\text{K-hop}}\succeq \mathbf{C}^{L,K}_{\text{1-WL}}\succeq \mathbf{C}^{0,K}_{\text{GINE+}} \succeq \mathbf{C}^{0,K}_{\text{1-WL}}$ and $\mathbf{C}^{L,K}_{\text{GINE+}}\succeq \mathbf{C}^{L,K}_{\text{1-WL}}$. The only thing left is the comparison between $\mathbf{C}^{L,K}_{\text{K-hop}}$ and $\mathbf{C}^{L,K}_{\text{GINE+}}$. For any $l \in [L]$, $C^{l,K}_{\text{K-hop},k}=\{\!\!\{l\}\!\!\}$ for all $k \in [K]$. Instead, $C^{l,K}_{\text{GINE+},k}=\{\!\!\{l-k+1\}\!\!\}$ for $k=1,2,...,l+1$ and $C^{l,K}_{\text{GINE+},0}=\{\!\!\{l\}\!\!\}$. Then it is easy to see that $C^{l,K}_{\text{K-hop},k}$ and $C^{l,K}_{\text{GINE+},k}$ satisfy the condition of Proposition~\ref{pro:color_inject} and thus $\mathbf{C}^{L,K}_{\text{K-hop}}\succeq \mathbf{C}^{L,K}_{\text{GINE+}}$ holds.
\end{proof}
Theorem~\ref{thm:color_expressive} provide a general comparison between different color refinement configurations. Based on Theorem~\ref{thm:color_expressive}, we can actually extend the Proposition~\ref{pro:khop_power} in the main paper as
\begin{corollary}
$L$ layer $K$-hop message passing GNNs defined in Equation~(\ref{eq:khop_mp}) with the shortest path distance kernel is at least equally powerful as $L$ layer GINE+~\cite{brossard2020graph}. $L$ layer GINE+ is strictly more powerful than $L$ layer 1-hop message passing GNNs. 
\end{corollary}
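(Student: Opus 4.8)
The plan is to reduce the corollary to the abstract comparison already established in Theorem~\ref{thm:color_expressive} by identifying each of the three GNN families with its corresponding general color refinement procedure, and then to supply a single separating example for the strict half of the claim. Concretely, I would first argue three correspondences: a proper $L$-layer $K$-hop message passing GNN with the shortest path distance kernel has exactly the expressive power of $\textbf{CR}(\mathbf{C}^{L,K}_{\text{K-hop}})$; an $L$-layer GINE+~\cite{brossard2020graph} has exactly the power of $\textbf{CR}(\mathbf{C}^{L,K}_{\text{GINE+}})$; and an $L$-layer 1-hop message passing GNN has exactly the power of $\textbf{CR}(\mathbf{C}^{L,K}_{\text{1-WL}})$, the last being the classical 1-WL equivalence of~\cite{xu2018powerful,morris2019weisfeiler}.

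For the $K$-hop correspondence the match is direct: in Equation~(\ref{eq:khop_mp}) every hop $k=0,1,\dots,K$ at layer $l$ aggregates the most recent representations $h^{l-1}_u$, which is precisely the pattern $C^{l,K}_{\text{K-hop},k}=\{l\}$ for all $k$; invoking the injectiveness of a proper GNN (so that MES, UPD and COMBINE realize an injective HASH) makes the equivalence two-sided. For GINE+ I would write out its layerwise recurrence and observe that the $k$-hop contribution entering layer $l+1$ is a representation delayed by $k-1$ iterations, i.e. it reads $R^{l-k+1}$ from the $k$-th hop, matching $C^{l,K}_{\text{GINE+},k}=\{l-k+1\}$ exactly (with hops beyond $l+1$ unused). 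With the three identifications in hand, the first sentence of the corollary is immediate from $\mathbf{C}^{L,K}_{\text{K-hop}}\succeq\mathbf{C}^{L,K}_{\text{GINE+}}$ and the ``at least equally powerful'' part of the second sentence from $\mathbf{C}^{L,K}_{\text{GINE+}}\succeq\mathbf{C}^{L,K}_{\text{1-WL}}$, both delivered by Theorem~\ref{thm:color_expressive}.

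It remains to upgrade ``at least'' to ``strictly'' for GINE+ versus 1-hop message passing. Here I would exhibit a concrete non-isomorphic pair that GINE+ separates but 1-WL cannot. A pair of $n$-sized $r$-regular graphs, such as example 2 in Figure~\ref{fig:regular1}, is never distinguished by 1-hop message passing since all nodes share the same degree. On the other hand, for $L\ge 2$ the GINE+ configuration activates the second hop through $C^{1,K}_{\text{GINE+},2}=\{0\}$, so GINE+ aggregates the initial labels of exactly-$2$-hop neighbors and thereby recovers the number of second-hop neighbors; since the two regular graphs in the example differ in this count at the chosen nodes, GINE+ assigns them different colors. Combined with the ``at least'' direction this yields strict dominance (for $L=1$ the two configurations coincide, so the separation genuinely requires $L\ge 2$).

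The main obstacle I anticipate is the GINE+ correspondence rather than the WL or $K$-hop ones: one must recall GINE+'s precise staggered update, confirm that its ``delay by the hop index'' mechanism is faithfully encoded by $C^{l,K}_{\text{GINE+},k}=\{l-k+1\}$, and check the boundary behavior (hops exceeding $l+1$ contributing nothing, the self term $k=0$ using $R^l$). Once this term-by-term match is verified and the injectiveness of proper GNNs transfers abstract refinement dominance to actual model expressiveness, the rest reduces to quoting Theorem~\ref{thm:color_expressive} and the one regular-graph example.
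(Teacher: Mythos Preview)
Your proposal is correct and follows essentially the same approach as the paper: identify each GNN family with its refinement configuration, then invoke Theorem~\ref{thm:color_expressive}. The paper's own argument is a single sentence asserting this correspondence is trivial, whereas you spell out the GINE+ matching and add an explicit separating example (example~2 of Figure~\ref{fig:regular1}) for the strict part; the paper leaves strictness implicit via its reference to Proposition~\ref{pro:khop_power}. Your caveat that the GINE+ and 1-WL configurations coincide at $L=1$, so the strict separation genuinely requires $L\ge 2$, is a correct observation that the paper does not mention.
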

The above Corollary is trivial to prove as corresponding models with injective message and update functions have at most the same expressive power as general color refinement algorithm with corresponding refinement configurations and permutation invariant readout function. 
However, one remaining question given the general color refinement algorithm is the comparison of the expressive power between $K$ layer 1-hop GNNs and 1 layer $K$-hop GNNs. Here we show that:
\begin{proposition}
\label{pro:khop1_1hopk}
Assume we use the shortest path distance kernel for $K$-hop message passing GNNs. There exists pair of graphs that can be distinguished by 1 layer $K$-hop message passing GNNs but not $K$ layer 1-hop message passing GNNs and vice versa. 
\end{proposition}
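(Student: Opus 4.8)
The plan is to prove the statement by exhibiting one graph pair for each of the two directions, since the claim is an \emph{incomparability} result rather than a dominance of one model over the other. The starting point is two equivalences already established above. First, as explained right after Proposition~\ref{pro:khop_power}, a single layer of $K$-hop message passing with the shortest path distance kernel is equivalent to injecting the node configuration $A^{K,spd}_{v,G}=(a^{1,spd}_{v,G},\dots,a^{K,spd}_{v,G})$ into each (initially identical) node label; hence, with an injective readout, a $1$-layer $K$-hop GNN distinguishes two graphs \emph{if and only if} the multisets $\{\!\!\{A^{K,spd}_{v,G^{(1)}}\}\!\!\}$ and $\{\!\!\{A^{K,spd}_{v,G^{(2)}}\}\!\!\}$ differ. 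Second, a proper $K$-layer $1$-hop message passing GNN is as powerful as $K$ rounds of the $1$-WL color refinement, so it distinguishes two graphs if and only if $K$ iterations of $1$-WL produce different color multisets. Note that these two procedures correspond to general color refinement algorithms with \emph{different} numbers of iterations $L$ (one iteration for the $K$-hop model, $K$ iterations for the $1$-hop model), so the monotonicity tools of Theorem~\ref{thm:color_expressive}, which compare configurations sharing the same $L$, do not order them; incomparability must therefore be demonstrated by explicit examples.

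For the direction ``$1$-layer $K$-hop distinguishes but $K$-layer $1$-hop fails,'' the natural source of examples is regular graphs, on which $1$-WL is powerless. Concretely, I would take $G^{(1)}=C_6$ (the $6$-cycle) and $G^{(2)}=2K_3$ (two disjoint triangles): both are $2$-regular on $6$ nodes, so every iteration of $1$-WL assigns the same color to every node and the two graph-level color multisets coincide, meaning no number of $1$-hop layers can separate them. On the other hand, with $K=2$ each node of $C_6$ has configuration $(2,2)$ whereas each node of $2K_3$ has configuration $(2,0)$, so the node-configuration multisets differ and a single $2$-hop layer separates the pair; the same construction works for every $K\ge 2$, and the pair in Example 2 of Figure~\ref{fig:regular1} is an alternative.

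For the opposite direction, ``$K$-layer $1$-hop distinguishes but $1$-layer $K$-hop fails,'' I would exploit the fact that the node configuration records only the \emph{sizes} of the distance shells and discards the degree pattern \emph{inside} each shell, information that $1$-WL does propagate. The gadget is a node $u$ of degree $3$ whose three neighbors realize the degree multiset $\{2,3,4\}$ in $G^{(1)}$ and $\{3,3,3\}$ in $G^{(2)}$: the two multisets have the same sum, so by arranging the local neighborhood to be triangle-free with no shared second neighbors one keeps the distance-$2$ shell size equal in both graphs (both yield $a^{2,spd}_{u}=\sum(d_w-1)=6$), while after two rounds of $1$-WL the color of $u$ differs between the graphs, since its round-$2$ color is the multiset of its neighbors' degree-colors. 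I would then embed this gadget into a pair of graphs that is globally balanced, so that the full node-configuration multisets $\{\!\!\{A^{K,spd}_{v,G^{(1)}}\}\!\!\}$ and $\{\!\!\{A^{K,spd}_{v,G^{(2)}}\}\!\!\}$ coincide (forcing the $1$-layer $K$-hop model to fail) while $K$-round $1$-WL still separates the graph-level color multisets. The main obstacle is exactly this global balancing: matching every node's shell-size vector across the two graphs up to $K$ hops while preserving a $1$-WL-detectable difference requires careful bookkeeping, and it is the only step that is more than routine; the two equivalences and the regular-graph example of the first direction are immediate.
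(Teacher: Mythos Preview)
Your first direction is correct and essentially identical to the paper's: both use a pair of regular graphs (you take $C_6$ versus $2K_3$; the paper takes Example~2 of Figure~\ref{fig:regular1}, which you also mention as an alternative), exploiting that $1$-WL is blind on regular graphs while the $2$-hop shell sizes differ.

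For the second direction you have the right diagnosis---node configuration records only the shell \emph{sizes} and forgets the degree pattern inside each shell, which is exactly what $1$-WL propagates---but you have not actually proved anything. The ``global balancing'' you defer is the entire content of this direction: you must produce two graphs whose full multisets $\{\!\!\{A^{K,spd}_{v,G^{(i)}}:v\in V^{(i)}\}\!\!\}$ coincide, not just match the configuration at the gadget node $u$. Your $\{2,3,4\}$-versus-$\{3,3,3\}$ gadget is a reasonable seed, but embedding it while equalizing \emph{every} node's shell vector is where the work lies, and the proposal stops short of doing it. As written the argument is incomplete.

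The paper bypasses this construction entirely: it simply exhibits an explicit small pair of non-regular graphs (Figure~\ref{fig:khop1_1hopk}) for which one checks directly that the two multisets of $2$-hop node configurations agree, while two rounds of $1$-WL separate them. The verification is a finite case check. If you want to finish your route, you need to close the balancing step with a concrete pair; otherwise, producing a small hand-checked example (as the paper does) is the cleaner way to complete the proof.
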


\begin{wrapfigure}{r}{0.5\textwidth}
\centering
\vspace{-5pt}
\includegraphics[width=0.5\textwidth]{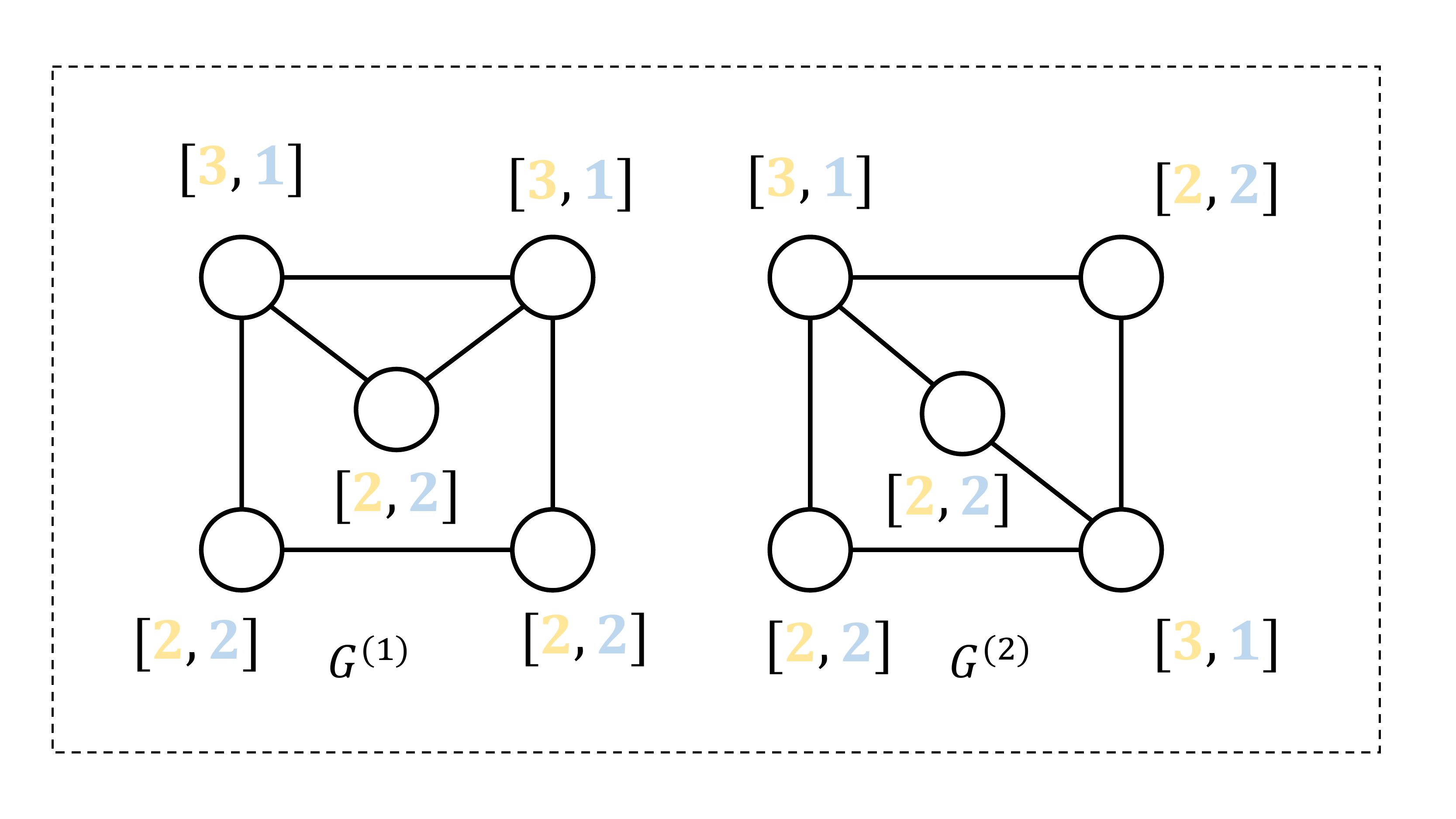}
\vspace{-20pt}
\caption{A pair of non-isomorphic graphs that can be distinguished by 2-layer 1-hop GNNs but not 1-layer 2-hop GNNs.}
 \vspace{-10pt}
\label{fig:khop1_1hopk}
\end{wrapfigure}

To prove Proposition~\ref{pro:khop1_1hopk}, we provide two examples. The first example is exactly example 2 in Figure~\ref{fig:regular1}. we know that a 1-hop GNN with 2 layers cannot distinguish two graphs as they are all regular graphs. Instead, a 1-layer 2-hop GNN is able to achieve that. From another direction, we show two graphs in Figure~\ref{fig:khop1_1hopk}. We know that 1 layer 2-hop message passing GNN is equivalent to inject node configuration up to 2-hop into nodes. As we show in Figure~\ref{fig:khop1_1hopk}, two graphs have the same node configuration set, which means an injective readout function will produce the same representation. Therefore, two graphs cannot be distinguished by 1 layer 2-hop message passing GNNs. Instead, it is easy to validate that these two graphs can be distinguished by 2 layer 1-hop message passing GNNs. 

\subsection{Discussion on existing $K$-hop models}
\textbf{GINE+}~\citep{brossard2020graph}: GINE+ tries to increase the representation power of graph convolution by increasing the kernel size of convolution. However, at $l$-th layer of GINE+, it only aggregates information from the neighbors of hop $1,2,...,l$ after $l-1, l-2, ..., 0$ layer, which means that after $L$ layer of convolution, the GINE+ still has a receptive field of size $L$. As we discussed in the previous section, $L$ layer $K$-hop message passing GNNs with the shortest path distance kernel is at least equally powerful as $L$ layer GINE+.

\textbf{Graphormer}~\citep{ying2021do}: Graphormer introduce a new way to apply transformer architecture~\citep{vaswani2017attention} on graph data. In each layer of Graphormer, the shortest path distance is used as spatial encoding to adjust the attention score between each node pair. Although the Graphormer does not apply the $K$-hop message passing directly, the attention mechanism (each node can see all the nodes) with the shortest path distance feature implicitly encodes a rooted subtree similar to the $K$-hop message passing with the shortest path distance kernel. To see it clearly, suppose we have $K$-hop message passing with $K=\infty$ and graphs only have one connected component. It will aggregate all the nodes at each layer, which is similar to Graphormer. Meanwhile, the injective message and update function implicitly encode the shortest path distance to each node in aggregation. Then it is trivial to see that Graphormer is actually a special case of $K$-hop message passing with the shortest path distance kernel. 

\textbf{Spectral GNNs}: spectral-based GNNs serve as an important type of graph neural network and gain lots of interest in recent years. Here we only consider one layer as spectral-based GNNs usually only use 1 layer. the general spectral GNNs can be written as:
\begin{align}
\label{eq:spectral}
\begin{aligned}
Z=\phi \left(\sum_{k=0}^K\alpha_k\rho(\hat{L}^k) \varphi (x)\right),~
\end{aligned}
\end{align}

where $\phi$ and $\varphi$ are typically multi-layer perceptrons (MLPs), $\hat{L}$ is normalized Laplacian matrix, $\alpha_k$ is weight for each spectral basis, and $\rho$ is an element-wise function of matrix. In normal spectral GNNs, $\rho$ is always an identity mapping function. We can see spectral GNNs have a close relationship to $K$-hop message passing with graph diffusion kernel as $\hat{L}^k$ actually compute the $Q^{k,gd}_{v,G}$ for each node by only keeping element in the matrix that is larger than 0. As $K$ in Equation~(\ref{eq:spectral}) can be greater than 1, it looks like spectral GNNs fit Proposition 1 as well. However, according to Proposition 4.3 in~\cite{wang2022powerful}, all such spectral-based methods have expressive power no more than the 1-WL test. This seems like a discrepancy. The reason lies in the $\rho$ function in Equation~(\ref{eq:spectral}). $K$-hop message passing with graph diffusion kernel can be regarded as using a non-linear $\rho$ function shown in the following:
\begin{align}
\begin{aligned}
\label{eq:non_linear}
\rho(x)=\left\{ 
    \begin{array}{lc}
        1 &x > 0 \\
        0 &others\\
    \end{array}
\right.
\end{aligned}
\end{align}
This non-linear function is the key difference between the normal spectral GNNs and $K$-hop message passing and it endows normal spectral GNNs with extra expressive power than the 1-WL test.

\textbf{\textbf{MixHop}~\citep{abu2019mixhop}, \textbf{GPR-GNN}~\citep{chien2021adaptive}, and \textbf{MAGNA}~\citep{wang2021multihop}}: All three methods extend the scope of 1-hop message passing by considering multiple graph diffusion step simultaneously. However, all these methods use normal spectral GNNs as the base encoder and thus they obey Proposition 4.3 in~\cite{wang2022powerful} instead of Proposition 1. It can be seen as a "weak" version of $K$-hop message passing with graph diffusion kernel.
\section{Comparison between $K$-hop message passing and Distance Encoding}
\label{app:khop_de}
In this section, we further discuss the connection and difference between $K$-hop message passing and Distance Encoding~\citep{li2020distance}. Here we assume that the kernel of $K$-hop message passing is the shortest path distance kernel and Distance Encoding with the shortest path distance as distance feature. 

\begin{figure*}[t]
\centering
\includegraphics[width=0.95\textwidth]{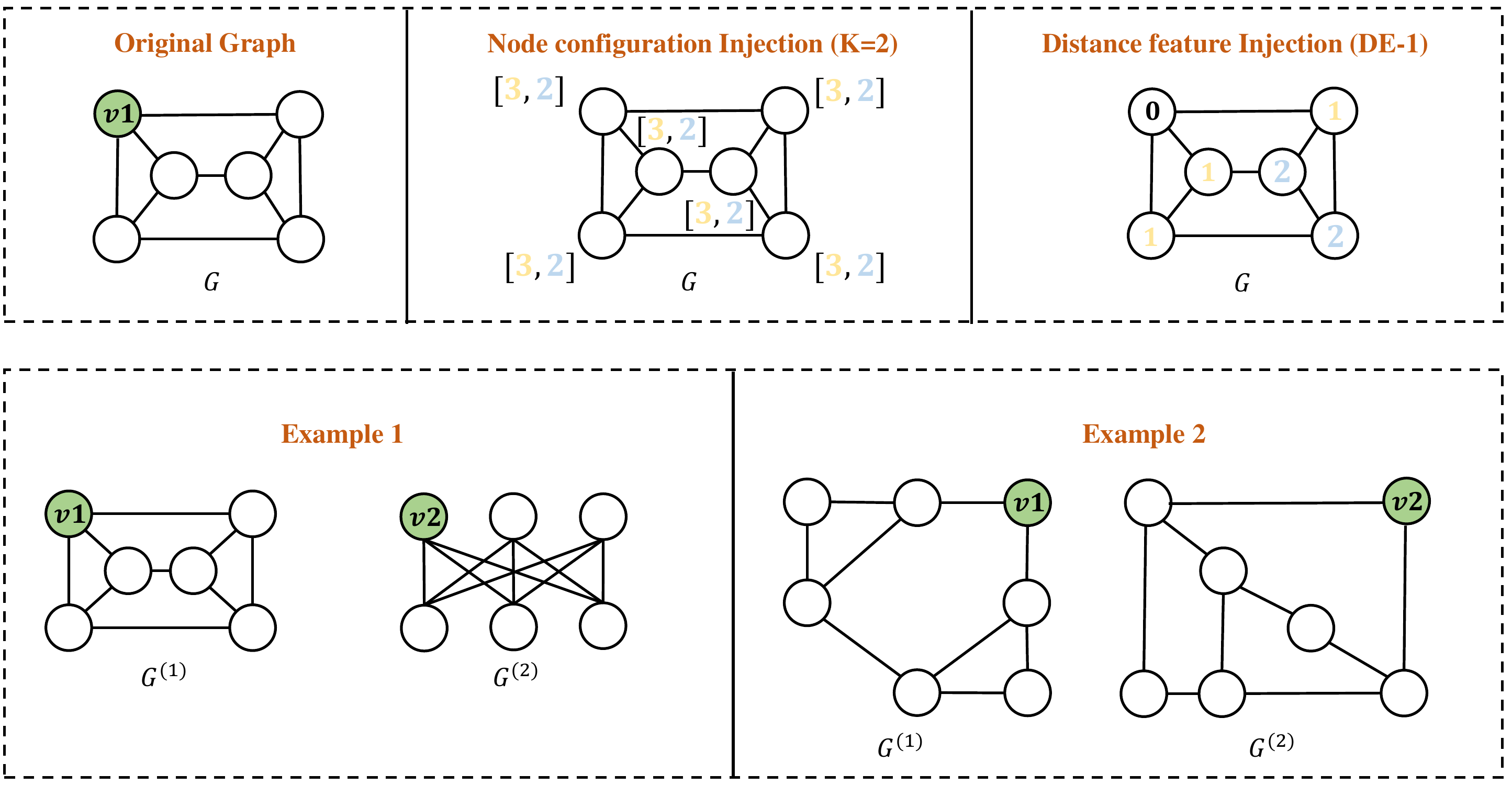}
\vspace{-5pt}
\caption{The upper part: graph with node configuration as the injected label and DE-1 as the injected label on the center node. The bottom part: two pairs of non-isomorphic graphs where the node pair in example 1 can be distinguished by DE-1 and the node pair in example 2 can be distinguished by $K$-hop message passing.}
 \vspace{-15pt}
\label{fig:khop_de}
\end{figure*}

To simplify the discussion, suppose there are two graphs $G^{(1)}=(V^{(1)},E^{(1)})$ and $G^{(2)}=(V^{(2)},E^{(2)})$. We pick two nodes $v_1$ and $v_2$ from each graph respectively and learn the representation for these two nodes. First, let us consider what 1 layer $K$-hop message passing and DE-1 without message passing. As we stated in the main paper, 1 layer of $K$-hop message passing actually injects each node with the label of node configuration. Instead, DE-1 injects each node with the distance to the center node ($v_1$ and $v_2$ here). Then we can see there is a clear difference between the two methods even if they all implicitly or explicitly use the distance feature as shown in the upper part of Figure~\ref{fig:khop_de}. Next, it is easy to see that applying $L+1$ layer $K$-hop message passing is equivalent to applying $L$ layer $K$-hop message passing on a graph with node configuration as the initial label. Applying $L$ layer DE-1 is equivalent to applying $L$ layer 1-hop message passing on a graph with the distance to the center node as the initial label. In the following, we show that the two methods cannot cover each other in terms of distinguishing different nodes:
\begin{proposition}
\label{pro:khop_de}
For two non-isomorphic graphs $G^{(1)}=(V^{(1)},E^{(1)})$ and $G^{(2)}=(V^{(2)},E^{(2)})$, we pick two nodes $v1$ and $v_2$ from each graph respectively. Then there exist pairs of graphs that nodes $v_1$ and $v_2$ can be distinguished by $K$-hop message passing with the shortest path distance kernel but not DE-1, and vice versa.
\end{proposition}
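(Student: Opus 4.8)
The plan is to prove the proposition constructively, by exhibiting two pairs of graphs---one witnessing each direction of the separation---and verifying the claim on each using the two reformulations established just above. Concretely, I would rely on two facts already derived: (i) a proper $K$-hop message passing GNN with the shortest path distance kernel produces, at node $v$, a representation that injectively encodes the node configuration $A^{K,spd}_{v,G}$, and more generally (with $L+1$ layers) the result of running $L$-layer $K$-hop refinement on the node-configuration-labeled graph; and (ii) DE-1 produces, at node $v$, exactly the output of ordinary $1$-hop message passing read out at $v$ on the graph whose initial labels are the shortest path distances to $v$. These give two separately computable ``signatures'', and the whole argument reduces to evaluating them on each example pair (the bottom part of Figure~\ref{fig:khop_de}).

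First I would handle the direction where DE-1 distinguishes $v_1,v_2$ but $K$-hop cannot (example 1). The observation is that the node configuration is a purely structural, \emph{unrooted} quantity: $K$-hop is blind to $v_1,v_2$ precisely when their configurations and all iterated $K$-hop refinements agree, which is automatic when the two nodes sit in symmetric positions with the same hop counts. On such a pair, labeling every node by its distance to the target breaks the symmetry, and the distance labels propagate back to $v$ under $1$-hop message passing in genuinely different patterns for $v_1$ and $v_2$. I would present the concrete pair, tabulate $A^{K,spd}_{v_1,G^{(1)}}=A^{K,spd}_{v_2,G^{(2)}}$, and then trace the first one or two rounds of $1$-hop refinement on the distance-labeled graphs to exhibit a discrepancy at the target node.

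Next I would handle the reverse direction (example 2), where $K$-hop distinguishes but DE-1 fails. Here I would choose a pair with $A^{K,spd}_{v_1,G^{(1)}}\neq A^{K,spd}_{v_2,G^{(2)}}$, so that by fact (i) a single layer of $K$-hop already separates the two nodes. The mechanism that keeps DE-1 blind is that $a^{k}=|Q^{k,spd}_{v,G}|$ counts \emph{distinct} nodes at distance $k$, whereas DE-1 only sees the distance-labeled \emph{rooted subtree} at $v$ (walks with multiplicity); these can coincide even when the number of distinct distance-$k$ nodes differs, because distinct leaves of the subtree may be identified with the same vertex in one graph but not the other. I would verify this by computing $h_v$ under distance-labeled $1$-hop message passing for both graphs and checking equality, while pointing to the mismatching entry of the node configuration to close the $K$-hop side.

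The main obstacle is this second direction. Because the DE-1 labels are themselves shortest path distances, they encode much of the information that drives the node configuration---in particular the degree $a^{1}$ of $v$ is seen immediately, and counts at later hops typically reach $v$ within a few rounds---so it is delicate to arrange a genuine configuration gap that nonetheless never propagates back to the target under $1$-hop message passing. The crux of the verification is therefore a careful check that the distance-labeled rooted subtree at $v$ is identical across the two graphs despite the configuration mismatch; I expect this to require exploiting a local identification so that the extra or missing $k$-th hop neighbors sit exactly where the $1$-hop backflow to $v$ merges them indistinguishably.
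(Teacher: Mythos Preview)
Your proposal is correct and takes essentially the same approach as the paper: exhibit the two concrete pairs of graphs from Figure~\ref{fig:khop_de} and verify each direction by direct computation of the two invariants (node configuration for $K$-hop, distance-labeled $1$-hop refinement for DE-1). The paper's own argument is in fact terser---it simply points to the two examples, recalls that example~1 was already shown to defeat the spd $K$-hop kernel, and asserts the remaining checks are easy---so your plan to tabulate configurations and trace the refinement explicitly is just a more detailed execution of the same proof-by-example.
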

To prove the Proposition~\ref{pro:khop_de}, we provide two examples in the lower part of Figure~\ref{fig:khop_de}. Example 1 in the figure is exactly the same pair of regular graphs in example 1 of Figure~\ref{fig:regular1}. As discussed before, $K$-hop message passing with the shortest path distance kernel cannot distinguish node $v_1$ and $v_2$. However, after injecting the distance feature to the center node in two graphs and performing 2 layers of message passing, DE-1 is able to assign different representations for two nodes. In example 2, DE-1 cannot distinguish nodes $v_1$ and $v_2$ in the two graphs, even if they are not regular graphs. Instead, using $K$-hop message passing with only $K=2$, the two nodes will get different representations. We omit the detailed procedure here as it is easy to validate. Using these two examples, we have shown that $K$-hop message passing and DE-1 are not equivalent to each other even if they all use the shortest path distance feature. The root reason is that although DE-1 uses the distance feature, it still aggregates information only from 1-hop neighbors in each iteration, while $K$-hop message passing directly aggregates information from all $K$-hop neighbors. That is, their ways of using distance information are different. However, we also notice that in example 2, if we label the graph with DE-1 on another pair of nodes, two nodes can be distinguished, which means DE-1 is able to distinguish these two graphs. \textbf{However, to achieve that, DE-1 need to label the graph $n$ times and run the message passing on all $n$ labeled graphs. Instead, $K$-hop message passing only needs to run the message passing once, which is both space and time efficient.}

Besides DE-1, ~\citet{li2020distance} also proposed the DEA-GNN which is at least no less powerful than DE-1. The DEA-GNN extends DE-1 by simultaneously aggregating all other nodes in the graph but the message is encoded with the distance to the center node. This can be seen as exactly performing $K$-hop message passing. In other words, DEA-GNN is the combination of $K$-hop message passing and DE-1. Therefore it is easy to see:
\begin{proposition}
The DEA-GNN~\citep{li2020distance} is at least equally powerful as $K$-hop message passing with the shortest path distance kernel.
\end{proposition}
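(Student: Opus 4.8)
The plan is to prove the inequality by an explicit \emph{simulation} argument, exploiting the structural observation already highlighted in the text: DEA-GNN is the combination of $K$-hop message passing with the shortest path distance kernel and the DE-1 distance-to-center labeling. Concretely, I would show that any proper $K$-hop message passing GNN using the spd kernel can be reproduced exactly by a suitably configured DEA-GNN, and then argue that the extra DE-1 information DEA-GNN carries can only increase its discriminative power. The starting point is that DEA-GNN aggregates, in a single step, \emph{all} other nodes of the graph, but tags every incoming message with the shortest path distance from its source $u$ to the center node $v$. Hence the global multiset that DEA-GNN aggregates at $v$ admits a canonical partition, by this distance tag, into the submultisets $\{\!\!\{ (h_u, \mathrm{spd}(u,v)) \mid u \in Q^{k,spd}_{v,G} \}\!\!\}$ for $k=0,1,\dots$, which are precisely the per-hop neighbor multisets that $K$-hop message passing aggregates under the spd kernel (Definition~\ref{def:spd}).

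The first step is to make this partition rigorous and route it through the injectiveness result of Appendix~\ref{app:injective}. Because each node of $G$ lies in exactly one $Q^{k,spd}_{v,G}$ under the spd kernel, the distance tag is an injective encoding of the hop index $k$; therefore a DEA-GNN whose message and update functions are injective can dispatch the distance-$k$ contributions to a hop-specific transformation and injectively encode the collection $\{\!\!\{ (k, \{\!\!\{ h_u \mid u \in Q^{k,spd}_{v,G} \}\!\!\}) \mid k \}\!\!\}$. I would then choose the DEA-GNN's hop-wise message, update, and aggregation functions to coincide with the $\text{MES}^{l}_{k}$, $\text{UPD}^{l}_{k}$, and $\text{COMBINE}^{l}$ of the target $K$-hop spd GNN, while letting the injective functions ignore (or treat as a constant) the additional DE-1 distance-to-center label. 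By an induction over layers this yields identical node embeddings at every layer, establishing that DEA-GNN realizes every function that $K$-hop spd message passing realizes.

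The second step upgrades ``can realize'' to ``at least as powerful.'' Here I would invoke the principle used repeatedly in the paper: augmenting node labels with an additional injectively-computed feature — in this case the DE-1 distance-to-center label that DEA-GNN adds on top of $K$-hop aggregation — can only refine the induced colorings and hence never decreases the ability to separate non-isomorphic structures. Consequently DEA-GNN, being the combination of $K$-hop spd message passing with DE-1 labeling, is at least equally powerful as $K$-hop spd message passing, which is the desired conclusion.

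The main obstacle I anticipate lies in the first step: rigorously arguing that DEA-GNN's single \emph{global} aggregation, equipped only with the distance tag, faithfully reconstructs the hop-partitioned multiset structure that $K$-hop message passing aggregates separately per hop. This hinges on the distance tag injectively encoding the hop index under the spd kernel together with the injectiveness lemma of Appendix~\ref{app:injective}; once those two facts are in place, the remaining work — matching the message, update, and combine functions and verifying that the extra DE-1 label is harmless under injective processing — is routine bookkeeping.
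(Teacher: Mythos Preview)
Your proposal is correct and follows essentially the same approach as the paper: the paper's entire justification is the single observation that ``DEA-GNN is the combination of $K$-hop message passing and DE-1,'' after which it simply declares the proposition ``easy to see.'' Your simulation argument is a faithful and more rigorous unpacking of that same idea---partitioning DEA-GNN's distance-tagged global aggregation into the per-hop multisets of the spd kernel and noting that the extra DE-1 label cannot decrease discriminative power.
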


\section{Proof of Theorem~\ref{thm:khop_limitation}}\label{app:khop_limitation}
In this section, we prove Theorem~\ref{thm:khop_limitation}. We first restate Theorem~\ref{thm:khop_limitation}: The expressive power of a proper $K$-hop message passing GNN of any kernel is bounded by 3-WL test. Our proof is inspired by the recent results in SUN~\citep{Frasca2022UnderstandingAE}, which bound all subgraph-based GNN with the 3-WL test by proving that all such methods can be implemented by 3-IGN. Here we prove that $K$-hop message passing can also be implemented by 3-IGN. We will not discuss the detail of 3-IGN and all its operations. Instead, we directly follow all the definitions and notations and refer readers to Appendix B of~\cite{Frasca2022UnderstandingAE} for more details.

\textbf{$K$-hop neighbor extraction}: To implement the $K$-hop message passing, we first implement the extraction of $K$-hop neighbors. The key insight is that we can use the $l$-th  channel in $X_{iij}$ to store whether node $j$ is the neighbor of node $i$ at $l$-th hop, which is similar to the extraction of ego-network. Here we suppose $d\ge K$. Same as all node selection policies, we first lift the adjacency $A$ to a three-way tensor $\mathcal{Y}\in \mathbb{R}^{n^3\times d}$ using broadcasting operations:

\begin{align}
X_{i i i}^{(0)}=\boldsymbol{\beta}_{i, i, i} A_{i i}. \\
X_{j i i}^{(0)}=\boldsymbol{\beta}_{*, i, i} A_{i i}. \\
X_{i i j}^{(0)}=\boldsymbol{\beta}_{i, i, j} A_{i j}. \\
X_{i j i}^{(0)}=\boldsymbol{\beta}_{i, j, i} A_{i j}. \\
X_{k i j}^{(0)}=\boldsymbol{\beta}_{*, i, j} A_{i j}.
\end{align}

Now, $X_{i i j}^{(0),1}$ store the 1-hop neighbors of node $i$. Next, $l$-th hop neighbor of node $i$ is computed and stored in $X_{i i j}^{,l}$. To get neighbor of $l$-th hop for $l=2,3,...,K$, we first copy it into $d+1$ channels:

\begin{align}
    X_{i j j}^{(1)}=\boldsymbol{\kappa}_{: d /(d+1)}^{: d} X_{i j j}^{(0)}+\boldsymbol{\kappa}_{d+1:d+1}^{l: l} \boldsymbol{\beta}_{i, j, j} X_{i i j}^{(0)}.
\end{align}

The $d+1$-th channel is used to compute high-order neighbors. Next, we extract all $K$ hop neighbors by iteratively following steps $K-1$ times and describe the generic $l$-th step. We first broadcast the current reachability pattern into $X_{ijk}$, writing it into the $l$-th channel:

\begin{align}
\label{eq:12}
    X_{i j k}^{(l, 1)}=X_{i j k}^{(l-1)}+\boldsymbol{\kappa}_{l: l}^{d+1:d+1} \boldsymbol{\beta}_{i, *, j} X_{i j j}^{(l-1)}.
\end{align}

Then, a logical AND is performed to get the new reachability. Then write back the results into the $l$-th channel:

\begin{align}
\label{eq:13}
    X_{i j k}^{(l, 2)}=\boldsymbol{\varphi}_{l:l /(d+1)}^{(\wedge) 1, l} X_{i j k}^{(l, 1)}.
\end{align}

Next, we get all nodes that can be reached within $l$ hops by performing pooling, clipping, and copying back to $d+1$ channel:

\begin{align}
X_{i j j}^{(l, 3)} &=\boldsymbol{\kappa}_{:d/(d+1)}^{:d}X_{ijj}^{(l,2)}+\boldsymbol{\kappa}^{l:l}_{d+1:}\boldsymbol{\beta}_{i,j,j}\boldsymbol{\pi}_{i,j}X_{ijk}^{(l,2)}. \\
X_{i j j}^{(l,4)} &=\left[\begin{array}{ll} \boldsymbol{\kappa}_{:d/(d+1)}^{:d} & \boldsymbol{\varphi}_{d+1: d+1}^{(\downarrow) d+1: d+1} \end{array}\right]  X_{i j j}^{(l, 3)}.
\end{align}

Now $X_{ijj}^{(l,4),d+1}$ save all the nodes that can be reached at $l$-th hop. Finally, we extract the $l$-th hop neighbor and copy it into $l$-th channel. For graph diffusion kernel, the current result is itself result for graph diffusion kernel, which means we only need to copy it:

\begin{align}
X_{iij}^{(l)}=X_{iij}^{(l,4)}+\boldsymbol{\kappa}_{l:l}^{d+1:d+1}\boldsymbol{\beta}_{i,i,j}X_{ijj}^{(l,4)}.
\end{align}

For the shortest path distance kernel, we need to nullify all the nodes that already existed in the previous hops. To achieve this, we need to first compute if a node exists in the previous hops and then nullify it:

\begin{align}
X_{iij}^{(l,5)}&=X_{iij}^{(l,4)}+\boldsymbol{\kappa}_{l:l}^{d+1:d+1}\boldsymbol{\beta}_{i,i,j}X_{ijj}^{(l,4)}.\\
X_{iij}^{(l,6)}&=\sum_{i=1}^{l-1}\boldsymbol{\kappa}_{d+1:d+1}^{i:i}X_{iij}^{(l,5)}.\\
X_{iij}^{(l)}&=X_{iij}^{(l,4)}+\boldsymbol{\varphi}_{l: l}^{(\wedge) l,d+1}\boldsymbol{\varphi}_{d+1}^{(!)d+1}X_{iij}^{(l,6)}.
\end{align}

Where $\boldsymbol{\varphi}_{b}^{(!)a}$ is logical not function that output 0 if input is not 0 and vice versa for channel $a$ and write result into channel $b$. Here we omit the detailed implementation as it is easy to implement using ReLu function. Finally, we bring all other orbits tensors to the original dimensions:

\begin{align}
X_{iii}&=\boldsymbol{\kappa}_{:d}^{:d}X_{iii}^{(l)}.\\
X_{ijj}&=\boldsymbol{\kappa}_{:d}^{:d}X_{ijj}^{(l)}.\\
X_{iij}&=\boldsymbol{\kappa}_{:d}^{:d}X_{iij}^{(l)}.\\
X_{iji}&=\boldsymbol{\kappa}_{:d}^{:d}X_{iji}^{(l)}.\\
X_{ijk}&=\boldsymbol{\kappa}^{:1}_{:1/(d)}X_{ijk}^{(l)}.
\end{align}

Now, we have successfully implemented $K$-hop neighbor extraction algorithm using 3-IGN.

\textbf{$K$-hop message passing}:
To implement the message and update function for each layer, we use the same base encoder as~\cite{morris2019weisfeiler}. Other types of base encoders and combine functions can be implemented in a similar way. We follow the same procedure of implementing the base encoder of DSS-GNN as stated in~\cite{Frasca2022UnderstandingAE}. Note here for each hop the procedure is similar therefore we state the generic $l$-th hop. The key insight is that, in $K$-hop message passing, we are actually not working on each subgraph but the original graph, which means all the operations can be implemented in the orbit tensor $X_{iii}$ and $X_{iij}$.

\textit{Message broadcasting}: This procedure is similar to the base encoder of DSS-GNN but here we only need to broadcast $X_{iij}$. However, since we need to perform message passing for $K$ times, we need to broadcast it $K$ times:

\begin{align}
X_{iij}^{(t,1)}=\boldsymbol{\kappa}_{:d/((K+1)d)}^{:d}X_{iij}^{(t)}+\sum_{i=1}^{K}\boldsymbol{\kappa}^{:}_{id+1:(i+1)d/(K+1)d}\boldsymbol{\beta}_{iij}X_{ijj}^{(t)}.
\end{align}

\textit{Message sparsification and aggregation}: Similar to DSS-GNN, here we need to nullify the message from nodes that are not $l$-th hop neighbors. Here we define the following function:

\begin{align}
    f_{i i j}^{\odot}\left(X_{aab}^{(t, 1)}\right)_l= \begin{cases}\mathbf{0}_{d} & \text { if } X_{a a b}^{(t, 1), l}=0, \\ X_{aab}^{(t, 1),l(d+1):(l+1)d } & \text { otherwise. }\end{cases}
\end{align}

The function is used to nullify the message for $l$-th hop. It is easy to validate the existence of such function following the same procedure in~\cite{Frasca2022UnderstandingAE} therefore we omit the detail. Next, the message sparsification can be implemented by:

\begin{align}
X_{iij}^{(t,3)}=\left[\begin{array}{llll}\boldsymbol{\kappa}^{:d}_{:d} & \boldsymbol{\varphi}_{1}^{\left(\odot_{i j j}\right):}&...& \boldsymbol{\varphi}_{K}^{\left(\odot_{i j j}\right):}\end{array}\right] X_{i i j}^{(t, 2)}.
\end{align}

Then, the message function for $K$-hop message passing is:

\begin{align}
X_{iii}^{(t,4)}=\boldsymbol{\kappa}^{:d}_{:d}X_{iii}^{(t,3)}+\boldsymbol{\kappa}^{d+1:}_{d+1:}\boldsymbol{\beta}_{iii}\boldsymbol{\pi}_{i}X^{(t,3)}_{ijj}.
\end{align}

\textit{Update} Then, the update function is implemented using linear transformation. In $K$-hop message passing, each hop needs an independent parameter set. Here in order to operate on constructed tensor, we define $W_{t}^{l}=\left[W_{t,1}^{l}|| W_{t,2}^{l}||...||W_{t,K}^{l}\right]$, where $W_{t,i}^{l}=\boldsymbol{0_d}$ if $i\neq l$. Then, the update function is:

\begin{align}
    X_{iii}^{(t,5)}=\sum_{l=1}^{K}\sigma\left(W_{t}^{l}X_{iii}^{(t,4)}\right).
\end{align}

\textit{Combine}: Finally, we implement the sum combine function. The combine function is can be implemented by a simple MLP:

\begin{align}
        X_{iii}^{(t+1)}=\boldsymbol{\varphi}_{d}^{\boldsymbol{f}}(X_{iii}^{(t,4)}).\\
        X_{jii}^{(t+1)}=\boldsymbol{\beta}_{jii}X_{iii}^{t+1}.
\end{align}

Now, we successfully implement both $K$-hop neighbor extraction and $K$-hop message passing layer. Note that other parts like the readout function can be easily implemented and we omit the detail. This means the expressive power of $K$-hop message passing with either graph diffusion kernel or shortest path distance kernel is bounded by 3-IGN. Based on the Theorem 2 in~\cite{Geerts2020TheEP}, we conclude the Theorem~\ref{thm:khop_limitation}. Further, it is intuitive that all methods that use the node pair distance feature are also bounded by 3-WL as this feature can be computed by 3-IGN. We leave the formal proof in our future work.
\section{Proof and discussion of Proposition~\ref{pro:kp_drg}}
\label{app:kp_expressive}
Proposition~\ref{pro:kp_drg} seems intuitive and easy to prove. However, it also unclear how powerful are KP-GNN. So here we still give a more detailed discussion on it. we first give the definition of \textit{distance regular graph}.
\begin{definition}
\textit{A distance regular graph is a regular graph such that for any two nodes $v$ and $u$, the number of nodes with the distance of $i$ from $v$ and distance of $j$ from $u$ depends only on $i$, $j$ and the distance between $v$ and $u$.}
\end{definition}
Furthermore, we only consider the connected distance regular graphs with no multi-edge or self-loop. Such graphs can be characterized by \textit{intersection array}. 
\begin{definition}
\textit{The intersection array of a distance regular graph with diameter $d$ is an array of integers $(b_0,b_1,...,b_{d-1};c_1,c_2,...,c_d)$, where for all $1\leq j \leq d$, $b_j$ gives the number of neighbors of $u$ with distance $j+1$ from $v$ and $c_j$ gives the number of neighbors of $u$ with distance $j-1$ from $v$ for any pair of $(v,u)$ in graph with distance $j$.}
\end{definition}
Given the definition of distance regular graph and intersection array, we can propose the first lemma. 
\begin{lemma}
\label{lem:drg_sg}
\textit{Given a distance regular graph $G$ with intersection array $(b_0,b_1,...,b_{d-1};c_1,c_2,...,c_d)$. Pick a node $v$ from $G$, the peripheral subgraph $G^{j,spd}_{v,G}$ is a $n$-sized $r$-regular graph with $n=|Q^{j,spd}_{v,G}|$ and $r=b_0-b_j-c_j$}
\end{lemma}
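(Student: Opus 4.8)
The plan is to show that the induced subgraph $G^{j,spd}_{v,G}$ is regular by proving that every one of its vertices has the same internal degree, namely $b_0-b_j-c_j$. The node count is immediate from the definition: the vertex set of $G^{j,spd}_{v,G}$ is exactly $Q^{j,spd}_{v,G}$, so $n=|Q^{j,spd}_{v,G}|$ (a quantity which, by distance regularity, depends only on $j$). Hence all the work is in pinning down the degree.

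First I would fix an arbitrary vertex $u\in Q^{j,spd}_{v,G}$, so that the distance from $v$ to $u$ is $j$, and recall that $u$ has total degree $b_0$ in $G$, since a distance regular graph is regular with valency $b_0$. The key observation is purely metric: for any neighbor $w$ of $u$, the triangle inequality forces the distance from $v$ to $w$ to differ from the distance from $v$ to $u$ by at most $1$, so that distance lies in $\{j-1,j,j+1\}$. Thus the $b_0$ neighbors of $u$ partition into exactly three groups according to their distance from $v$.

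Next I would invoke the definition of the intersection array to count two of the three groups: among the neighbors of $u$, exactly $c_j$ lie at distance $j-1$ from $v$ and exactly $b_j$ lie at distance $j+1$ from $v$, and---crucially---these counts are identical for every choice of $u$ at distance $j$ from $v$. Subtracting from the total degree, the number of neighbors of $u$ at distance exactly $j$ from $v$ equals $b_0-b_j-c_j$. Since a neighbor of $u$ belongs to $Q^{j,spd}_{v,G}$ if and only if it is at distance $j$ from $v$, this count is precisely the degree of $u$ inside the induced peripheral subgraph $G^{j,spd}_{v,G}$.

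Because $u$ was arbitrary and $b_0-b_j-c_j$ is independent of $u$, every vertex of $G^{j,spd}_{v,G}$ has the same internal degree, so the peripheral subgraph is $r$-regular with $r=b_0-b_j-c_j$, which completes the argument. I do not anticipate a genuine obstacle: the statement is essentially an unpacking of the intersection-array definition together with the standard distance regular graph identity that the valency decomposes as $a_j+b_j+c_j=b_0$. The only points requiring care are confirming that neighbors of $u$ can sit only at distances $j-1,j,j+1$ (handled by the triangle inequality) and that the $b_j$ and $c_j$ counts are genuinely vertex-independent, which is exactly what distance regularity guarantees.
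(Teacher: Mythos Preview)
Your proposal is correct and follows essentially the same approach as the paper's proof: partition the neighbors of an arbitrary $u\in Q^{j,spd}_{v,G}$ according to their distance from $v$, use the intersection-array definition to count the $j-1$ and $j+1$ groups as $c_j$ and $b_j$, and subtract from the valency $b_0$ to obtain the internal degree $b_0-b_j-c_j$. If anything, your write-up is slightly more explicit in invoking the triangle inequality to justify that neighbors of $u$ can only sit at distances $j-1,j,j+1$, which the paper simply asserts.
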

\begin{proof}
Given a distance regular graph $G$ with intersection array $(b_0,b_1,...,b_{d-1};c_1,c_2,...,c_d)$, from the definition of intersection array, for node $v$ in $G$, $Q^{j,spd}_{v,G}$ is the node set that have distance $j$ from node $v$. Then, $b_j$ is the number of neighbors for each node in $Q^{j,spd}_{v,G}$ that have distance $j+1$ to node $v$. It is easy to see that these neighbors must belong to $Q^{j+1,spd}_{v,G}$, which means that $b_j$ is also the number of edge for a node in $Q^{j,spd}_{v,G}$ that connect to nodes in $Q^{j+1,spd}_{v,G}$. Similarly, $c_j$ is the number of edge for a node in $Q^{j,spd}_{v,G}$ that connect to nodes in $Q^{j-1,spd}_{v,G}$. For node $u \in Q^{j,spd}_{v,G}$, we know that the edges of $u$ must connect to either $Q^{j+1,spd}_{v,G}$, $Q^{j,spd}_{v,G}$, or $Q^{j-1,spd}_{v,G}$. Since the degree of node $u$ is $b_0$, then the number of edge that connect node $u$ to nodes in $Q^{j,spd}_{v,G}$ is $b_0-b_j-c_j$. The above statement holds for each $u\in Q^{j,spd}_{v,G}$, which means all nodes $u\in Q^{j,spd}_{v,G}$ have same node degree. Meanwhile, the node set of peripheral subgraph $G^{j,spd}_{v,G}$ is exactly $Q^{j,spd}_{v,G}$. Combine two statements, we can conclude that $G^{j,spd}_{v,G}$ is a $n$-sized $r$-regular graph with $n=|Q^{j,spd}_{v,G}|$ and $r=b_0-b_j-c_j$.
\end{proof}
Given the Lemma~\ref{lem:drg_sg}, we know that the peripheral subgraph of a node in any distance regular graph is itself a regular graph. 
Next, given two non-isomorphic distance regular graphs $G^{(1)}$ and $G^{(2)}$ with the same intersection array, there are total $d$ pairs of regular peripheral subgraphs for any pair of nodes $v_1$ and $v_2$. If the KP-GNN can distinguish two regular graphs at some hop $j\leq d$, then the KP-GNN can distinguish $v_1$ and $v_2$ in graph $G^{(1)}$ and $G^{(2)}$. Meanwhile, it is easy to see that each node in a distance regular graph has the same local structure. Therefore, as long as $v_1$ can be distinguished from $v_2$, KP-GNN can distinguish two graphs. 

Now given the implementation of KP-GNN in Equation~\ref{eq:KP-GNN_imp}, if either the peripheral set $E(Q^{k,t}_{v_1,G^{(1)}})$ is different from $E(Q^{k,t}_{v_2,G^{(2)}})$ or peripheral configuration  $C^{k^{\prime}}_{j,G^{(1)}}$ is different from $C^{k^{\prime}}_{j,G^{(2)}}$. However, as peripheral subgraph is itself regular graphs, $E(Q^{k,t}_{v_1,G^{(1)}})$ must equal to $E(Q^{k,t}_{v_2,G^{(2)}})$. Therefore, KP-GNN can only distinguish two peripheral subgraph by $C^{k^{\prime}}_{j,G^{(1)}}$ and $C^{k^{\prime}}_{j,G^{(2)}}$.

It looks like we can use the result from Theorem~\ref{thm:khop_regular} to prove that Equation~\ref{eq:KP-GNN_imp} can distinguish almost all distance regular graphs. However, although the peripheral subgraphs of $v_1$ and $v_2$ are regular graphs with the same size and $r$, they are not randomly generated by the configuration model and do not satisfy Theorem~\ref{thm:khop_regular}. We leave the quantitative expressive power analysis of KP-GNN in our future works. 

\section{Time, space complexity and limitation of $K$-hop message passing and KP-GNN}
\label{app:complexity}
\subsection{Time and space complexity}
In this section, we analyze the time and space complexity. To simplify the analysis, we first consider the shortest path distance kernel, as the graph diffusion kernel has both time and space complexity no less than the shortest path distance kernel. Denote graph $G$ with $n$ node and $m$ edges. 

\textbf{Space complexity}: For both $K$-hop message passing and KP-GNN, as we only need to maintain one representation for each node, the space complexity is $O(n)$ like in vanilla $1$-hop message passing.

\textbf{Time complexity}: First, we analyze the time complexity of $K$-hop message passing GNNs. For each node, suppose in the worst case we extract neighbors from all nodes from all hops, the number of neighbors we need to aggregate from all hops is $n$ (all nodes in the graph). Then, $n$ nodes need at most $O(n^2)$ time complexity. Next, we analyze the time complexity of KP-GNN. The additional time complexity comes from counting the peripheral edges and $k^{\prime}$-configuration. Since the counting can be done in a preprocessing step and reused at each message passing iteration, it will be amortized to zero finally. So the practical time complexity is still $O(n^2)$.

\subsection{Discussion on the complexity}

From the above analysis, we know that $K$-hop GNNs and KP-GNN only need $O(n)$ space complexity, which is equal to vanilla message passing GNNs and much less than subgraph-based GNNs which require $O(n^2)$ space (due to maintaining a different representation for each node when appearing in different subgraphs). Thus, KP-GNN has a much better space complexity than subgraph-based GNNs, PPGN~\citep{maron2019provably} (also $O(n^2)$) and 3-WL-GNNs~\cite{maron2018invariant} ($O(n^3)$).

The worst-case time complexity of $K$-hop GNNs and KP-GNN is much higher than that of normal GNNs due to aggregating information from more than 1-hop nodes in each iteration. However, we also note that the larger receptive field could reduce the number of message passing iterations because 1-layer of $K$-hop message passing can cover the receptive field of $K$-layer 1-hop message passing. Furthermore, $K$-hop GNN and KP-GNN have better time complexity $O(n^2)$ than that of subgraph-based GNNs (which require $O(nm)$ for doing 1-hop message passing $O(m)$ in all $n$ subgraphs). For sparse graphs, we can already save a factor of $d_{\text{avg}}=m/n$ complexity. For dense graphs, the worst-case time complexity of subgraph-based GNNs becomes $O(n \cdot n^2) = O(n^3)$, and our time complexity advantage becomes even more significant.

\subsection{Limitations}
We discuss the limitation of the proposed KP-GNN from two aspects. 

\textbf{Stability}: Using K-hop instead of 1-hop can make the receptive field of a node increase with respect to $K$. For example, to compute the representation of a node with $L$ layer $K$-hop, GNN, the node will get information from all $LK$-hop neighbors. The increased receptive field can hurt learning, as mentioned in GINE+~\citep{brossard2020graph}. It is an intrinsic limitation that exists in all $K$-hop GNNs. GINE+~\citep{brossard2020graph} proposed a new way to fix the receptive field as $L$ by only considering $L-i$ layer representation of neighbor in $i$ hop during the aggregation. We also apply this approach and it helps mitigate the issue and shows great practical performance gain. Further, we propose a variant of KP-GNN named KP-GNN$'$, which only run KP-GNN at the first layer but 1-hop message passing at the rest of the layers. KP-GNN$'$ help mitigates stability issue and can be applied to large $K$ without causing the training of the model to fail. It achieves great results in various real-world datasets.

\textbf{Time complexity}: As we show above, we need $O(n^2)$ time complexity for KP-GNN, which is much higher than $O(m)$ of MPNN. However, this limitation exists for all subgraph-based methods like NGNN~\citep{zhang2021nested}, GNN-AK~\citep{zhao2022from}, and ESAN~\citep{bevilacqua2022equivariant} as they all require $O(nm)$ time complexity~\citep{Frasca2022UnderstandingAE}. This is the sacrifice for better expressive power. 

\section{Implementation detail of KP-GNN}\label{app:implementation}
In this section, we discuss the implementation detail of KP-GNN.

\textbf{Combine function}: 1-hop message passing GNNs do not have $\text{COMBINE}^l$ function. Here we introduce two different $\text{COMBINE}^l$ functions. The first one is the attention~\citep{luong-etal-2015-effective} based combination mechanism, which automatically learns the importance of representation for each node at each hop. The second one uses the well-known geometric distribution~\citep{wang2021multihop}. The weight of hop $i$ is computed based on $\theta_i=\alpha(1-\alpha)^i$, where $\alpha\in (0,1]$. The final representation is calculated by weighted summation of the representation of all the hops. In our implementation, the $\alpha$ is learnable and different for each feature channel and each hop.

\textbf{Peripheral subgraph information}: In the current implementation, KP-GNN will compute two pieces of information. The first one is the peripheral edge set $E(Q^{k,t}_{v,G})$. In our implementation, we compute the number of edges for each distinct edge type. The second one is $k^{\prime}$ configuration $C^{k^{\prime}}_k$, which contains node configuration and peripheral edges for all nodes in the peripheral subgraph. this is equivalent to running 1 layer of KP-GNN on each peripheral subgraph. Specifically, for node configuration, we compute the node configuration for each node and sum them up. For the peripheral edge set, we compute the total number of edges across all hops (do not consider edge type here). Finally, these two pieces of information are combined as $C^{k^{\prime}}_k$. All these steps are down in the preprocessing stage.

\textbf{KP-GCN, KP-GIN, and KP-GraphSAGE}: We implement KP-GCN, KP-GIN, and KP-GraphSAGE using the message and update function defined in GCN~\citep{kipf2017semisupervised}, GIN~\citep{xu2018powerful}, and GraphSAGE~\citep{hamilton2017inductive} respectively. In each hop, independent parameter sets are used and the computation strictly follows the corresponding model. However, increasing the number of $K$ will also increase the total number of parameters, which is not scalable to $K$. To avoid this issue, we design the $K$-hop message passing in the following way. Suppose the total hidden size of the model is $H$, the hidden size of each hop is $H/K$. In this way, the model size is still on the same scale even with large $K$. 

\textbf{KP-GIN+}: In a normal $K$-hop message passing framework, all $K$-hop neighbors will be aggregated for each node. It means that, after $L$ layers, the receptive field of GNN is $LK$. This may cause the unstable of the training as unrelated information may be aggregated. To alleviate this issue, we adapted the idea from GINE+~\cite{brossard2020graph}. Specifically, we implement KP-GIN+, which applies exactly the same architecture as GINE+ except here we add peripheral subgraph information. At layer $l$, GINE+ only aggregates information from neighbors within $l$-hop, which makes a $L$ layer GINE+ still have a receptive field of $L$. Note that in KP-GIN+, we use a shared parameter set for each hop. 

\textbf{KP-GIN$'$}: In KP-GIN$'$, we run a simple version of KP-GIN, which only uses KP-GIN at the first layer, but normal GIN for the rest of the layer. Although KP-GIN$'$ is weaker than KP-GIN from the expressiveness perspective. However, it is much more stable than KP-GIN in real-world datasets. Further, as we only have KP-GIN at the first layer, we can go larger $K$ without the cost of time complexity during training and inference. We observe great empirical results of KP-GIN$'$ on real-world datasets with less time complexity than normal KP-GNN.  

\textbf{Path encoding}: To further utilize the graph structure information on each hop, we introduce the path encoding to KP-GNN. Specifically, we not only count whether two nodes are neighbors at hop $k$, but also count the number of walks with length $k$ between two nodes. Such information can be obtained with no additional cost as the $A^k$ of a graph $G$ with adjacency $A$ is a walk-counter with length $k$. Then the information is added to the $AGG^{l,normal}_{k}$ function as additional features.

\textbf{Other implementation}: For all GNNs, we apply the Jumping Knowledge method~\citep{xu2018representation} to get the final node representation. The possible methods include sum, mean, concatenation, last, and attention. Batch normalization is used after each layer. For the pooling function, we implement mean, max, sum, and attention and different tasks may use different readout function. 

\section{Experiential setting details}\label{app:experiential}
\textbf{EXP dataset}: For both K-GIN and KP-GIN, we use a hidden size of $48$. The final node representation is output from the last layer and the pooling method is the summation. In the experiment, we use 10-fold cross-validation. For each fold, we use 8 folds for training, 1 fold for validation, and 1 fold for testing. We select the model with the best validation accuracy and report the mean results across all folds. The training epoch is set to 40. In this experiment, we do not use path encoding for a fair comparison. The learning rate is set to 0.001 and we use \textit{ReduceLROnPlateau} learning rate scheduler with patience of 5 and a reduction factor of 0.5. 

\textbf{SR25 dataset}: For both K-GIN and KP-GIN we use a hidden size of $64$. The final node representation is output from the last layer and the pooling method is the summation. For SR25 dataset, we directly train the validate the model on the whole dataset and report the best performance across 200 epochs. For each fold, we use 8 folds for training, 1 fold for validation, and 1 fold for testing. We select the model with the best validation accuracy and report the mean results across all folds. The training epoch is set to 200. In this experiment, we do not use path encoding for a fair comparison. The learning rate is set to 0.001. 

\textbf{CSL dataset}: For both K-GIN and KP-GIN we use a hidden size of $48$. The final node representation is output from the last layer and the pooling method is the summation. In the experiment, we use 10-fold cross-validation.

\textbf{Graph\&Node property dataset}: 
For graph and node property prediction tasks, we train models with independent 4 runs and report the mean results. For both K-GIN+ and KP-GIN+, the hidden size of models is set as 96. The final node representation is the concatenation of each layer. The pooling method is attention for graph property prediction tasks and sum for node property prediction tasks. The learning rate is 0.01 and we use \textit{ReduceLROnPlateau} learning rate scheduler with patience of 10 and a reduction factor of 0.5. We use the shortest path distance kernel. The maximum number of epochs for each run is 250. For KP-GIN+, we search $K$ from 3 to 6 with/without path encoding and report the best result. For K-GIN+, we search $K$ from 3 to 6 without path encoding and report the best result.  

\textbf{Graph substructure counting dataset}: 
For graph substructure counting tasks, we train models with independent 4 runs and report the mean results. For both K-GIN+ and KP-GIN+, the hidden size of models is set as 96. The final node representation is the concatenation of each layer. The pooling method is the summation. The learning rate is 0.01 and we use \textit{ReduceLROnPlateau} learning rate scheduler with patience of 10 and a reduction factor of 0.5. We use the shortest path distance kernel. The maximum number of epochs for each run is 250. For KP-GIN+, we search $K$ from 1 to 4 with/without path encoding and report the best result. For K-GIN+, we search $K$ from 1 to 4 without path encoding and report the best result.

\textbf{TU datasets}: For TU datasets, we use 10-fold cross-validation. We report results for both settings in~\cite{xu2018powerful} and~\cite{wijesinghe2022a}. For the first setting, we use 9 folds for training and 1 fold for testing in each fold. After training, we average the test accuracy across all the folds. Then a single epoch with the best mean accuracy and the standard deviation is reported. For the second setting, we still use 9 folds for training and 1 fold for testing in each fold but we directly report the mean best test results. For KP-GNN, we implement GCN~\citep{kipf2017semisupervised}, GraphSAGE~\citep{hamilton2017inductive} and GIN~\citep{xu2018powerful} version.  we search (1) the number of layer $\{2,3,4\}$, (2) the number of hop $\{2,3,4\}$, (3) the kernel of K-hop $\{spd,gd\}$ , and (4) the $COMBINE$ function $\{attention, geometric\}$. The hidden size is 33 when $K=3$ and 32 for the rest of the experiments. The final node representation is the last layer and the pooling method is the summation. The dropout rate is set as 0.5, the number of training epochs for each fold is 350 and the batch size is 32. The initial learning rate is set as $1e-3$ and decays with a factor of 0.5 after every 50 epochs. 

\textbf{QM9 dataset}: For QM9 dataset, we implement KP-GIN+ and KP-GIN$'$. For both two models, the hidden size of the model is 128. The final node representation is the concatenation of each layer and the pooling method is attention. The dropout rate is 0. For KP-GIN+, we use the shortest path distance kernel with $K=8$ and 8 layers. Meanwhile, we add the virtual node to the model. For KP-GIN$'$, we use the shortest path distance kernel with $K=16$ and 16 layers. Meanwhile, we add the residual connection. The maximum number of the peripheral edge is 6 and the maximum number of the component is 3. We also use additional path encoding in each layer. The learning rate is 0.001 and we use \textit{ReduceLROnPlateau} learning rate scheduler with patience of 5 and a reduction factor of 0.7. If the learning rate is less than $1e-6$, the training is stopped.

\textbf{ZINC datraset}: For ZINC dataset, we run the experiment 4 times independently and report the mean results. For each run, the maximum number of epochs is 500 and the batch size is 64. We implement the KP-GIN+ and KP-GIN$'$ for the ZINC dataset. For KP-GIN+, the hidden size is 104. The number of hops and the number of layers are both 8. For KP-GIN$'$, the hidden size is 96. The number of hops and the number of layers are 16 and 17 respectively. For both two models, we add the residual connection. The final node representation is the concatenation of each layer and the pooling method is the summation. We use the shortest path distance kernel. The initial learning rate is 0.001 and we use \textit{ReduceLROnPlateau} learning rate scheduler with patience of 10 and a reduction factor of 0.5. If the learning rate is less than $1e-6$, the training is stopped.
\section{Additional results}
\label{app:ablation}
In this section, we provide additional experimental results and discussion. The additional results on the counting substructure dataset are shown in Table~\ref{tab:ablation}. First, for K-GIN+, we observe a steady improvement for all tasks when we increase the K, which aligns with the theoretical results. Second, path encoding can hugely boost the performance of the counting substructure, which demonstrates the effectiveness of path encoding. Third, for KP-GIN+, most of the tasks achieve the best result with only K=1. This means we only need local peripheral subgraph information to count through substructures. Even though increasing the K would increase the expressive power monotonically from a theoretical point of view, it may add noise to the training process. Finally, path encoding does not show much effect on the KP-GNN model, which means information on path encoding is already encoded in the peripheral subgraph.

\begin{table*}[t]
\caption{Ablation study on counting substructure dataset. (* means add path encoding.)}
\vspace{-5pt}
\label{tab:ablation}
\begin{tabular}{@{}llllll@{}}

\toprule
\multicolumn{6}{c}{Counting substructures (MAE)} \\ \midrule
\multicolumn{1}{c}{model}                               & \multicolumn{1}{c}{K}  & \multicolumn{1}{c}{Triangle} & \multicolumn{1}{c}{Tailed Tri.} & \multicolumn{1}{c}{Star} & \multicolumn{1}{c}{4-Cycle} \\ \midrule
\multicolumn{1}{l|}{\multirow{4}{*}{\textbf{K-GIN+}}}   & \multicolumn{1}{l|}{1} & 0.4546 {\small$\pm$ 0.0107}              & 0.3665 {\small$\pm$  0.0004}                & 0.0412 {\small$\pm$ 0.0468}          & 0.3317 {\small$\pm$ 0.0121}             \\
\multicolumn{1}{l|}{}                                   & \multicolumn{1}{l|}{2} & 0.2938 {\small$\pm$ 0.0030}              & 0.2283 {\small$\pm$ 0.0012}                 & 0.0206 {\small$\pm$ 0.0095}          & 0.2330 {\small$\pm$ 0.0020}             \\
\multicolumn{1}{l|}{}                                   & \multicolumn{1}{l|}{3} & 0.2663 {\small$\pm$ 0.0088}             & 0.1998 {\small$\pm$ 0.0022}                 & 0.0174 {\small$\pm$ 0.0030}          & 0.2202 {\small$\pm$ 0.0061}             \\
\multicolumn{1}{l|}{}                                   & \multicolumn{1}{l|}{4} & 0.2593 {\small$\pm$ 0.0055}              & 0.1930 {\small$\pm$ 0.0033}                 & 0.0165 {\small$\pm$ 0.0041}          & 0.2079 {\small$\pm$ 0.0024}             \\ \midrule
\multicolumn{1}{l|}{\multirow{4}{*}{\textbf{K-GIN+*}}}  & \multicolumn{1}{l|}{1} & 0.4546 {\small$\pm$ 0.0107}              & 0.3665 {\small$\pm$ 0.0004}                 & 0.0412 {\small$\pm$ 0.0468}          & 0.3317 {\small$\pm$ 0.0121}             \\
\multicolumn{1}{l|}{}                                   & \multicolumn{1}{l|}{2} & 0.0132 {\small$\pm$ 0.0025}              & 0.0189 {\small$\pm$ 0.0049}                 & 0.0219 {\small$\pm$ 0.0045 }         & 0.0401 {\small$\pm$ 0.0027 }            \\
\multicolumn{1}{l|}{}                                   & \multicolumn{1}{l|}{3} & 0.0134 {\small$\pm$ 0.0020 }             & 0.0147 {\small$\pm$ 0.0017 }                & 0.0288 {\small$\pm$ 0.0062 }         & 0.0471 {\small$\pm$ 0.0033 }            \\
\multicolumn{1}{l|}{}                                   & \multicolumn{1}{l|}{4} & 0.0253 {\small$\pm$ 0.0085  }            & 0.0244 {\small$\pm$ 0.0028 }               & 0.0171 {\small$\pm$ 0.0035 }         & 0.0474 {\small$\pm$ 0.0025 }            \\ \midrule
\multicolumn{1}{l|}{\multirow{4}{*}{\textbf{KP-GIN+}}}  & \multicolumn{1}{l|}{1} & 0.0060 {\small$\pm$ 0.0008 }             & 0.0073 {\small$\pm$ 0.0020 }                & 0.0151 {\small$\pm$ 0.0022  }        & 0.2964 {\small$\pm$ 0.0080 }            \\
\multicolumn{1}{l|}{}                                   & \multicolumn{1}{l|}{2} & 0.0106 {\small$\pm$ 0.0015 }             & 0.0115 {\small$\pm$ 0.0017 }                & 0.0264 {\small$\pm$ 0.0064  }        & 0.0657 {\small$\pm$ 0.0034 }            \\
\multicolumn{1}{l|}{}                                   & \multicolumn{1}{l|}{3} & 0.0134 {\small$\pm$ 0.0020 }             & 0.0147 {\small$\pm$ 0.0017 }                & 0.0288 {\small$\pm$ 0.0062}          & 0.0471 {\small$\pm$ 0.0033 }            \\
\multicolumn{1}{l|}{}                                   & \multicolumn{1}{l|}{4} & 0.0125 {\small$\pm$ 0.0012}              & 0.0169 {\small$\pm$ 0.0028 }                & 0.0362 {\small$\pm$ 0.0113}          & 0.0761 {\small$\pm$ 0.0135 }            \\ \midrule
\multicolumn{1}{l|}{\multirow{4}{*}{\textbf{KP-GIN+*}}} & \multicolumn{1}{l|}{1} & 0.0068 {\small$\pm$ 0.0019}              & 0.0083 {\small$\pm$ 0.0019}                 & 0.0166 {\small$\pm$ 0.0041}          & 0.3063 {\small$\pm$ 0.0251}             \\
\multicolumn{1}{l|}{}                                   & \multicolumn{1}{l|}{2} & 0.0110 {\small$\pm$ 0.0016}              & 0.0110 {\small$\pm$ 0.0016 }                & 0.0255 {\small$\pm$ 0.0056}          & 0.0395 {\small$\pm$ 0.0018}             \\
\multicolumn{1}{l|}{}                                   & \multicolumn{1}{l|}{3} & 0.0117 {\small$\pm$ 0.0024}              & 0.0111 {\small$\pm$ 0.0025  }               & 0.0338 {\small$\pm$ 0.0077 }         & 0.0813 {\small$\pm$ 0.0136 }            \\
\multicolumn{1}{l|}{}                                   & \multicolumn{1}{l|}{4} & 0.0155 {\small$\pm$ 0.0037}              & 0.0168 {\small$\pm$ 0.0018 }                & 0.0422 {\small$\pm$ 0.0121 }         & 0.0538 {\small$\pm$ 0.0052 }            \\
\bottomrule
\end{tabular}
\end{table*}
\newpage
\section{Datasets Description and Statistics}\label{app:datasets}

\begin{table}[h]
 \setlength{\tabcolsep}{1pt}
    \caption{Dataset statistics.}\label{tab:dataset}
    \centering
    \begin{tabular}{lcccc}
    \toprule
    Dataset &  \#Tasks & \# Graphs & Ave. \# Nodes & Ave. \# Edges  \\
    \midrule  
    EXP  &  2  & 1200 & 44.4 & 110.2\\
    SR25  &  15  & 15 & 25 & 300\\
    CSL & 10 & 150 & 41.0 & 164.0\\
    Graph\&Node property & 3 & 5120/640/1280 & 19.5 & 101.1\\
    Substructure counting & 4 & 1500/1000/2500 &18.8 &62.6\\

    \midrule  
    MUTAG     &  2 & 188 & 	17.93 & 19.79\\
    D\&D & 2 & 1178 & 284.32 & 715.66\\
    PTC-MR    &  2 & 344 & 14.29 & 14.69 \\
    PROTEINS  & 2 & 1113 & 39.06 & 72.82\\
    IMDB-B    &  2 & 1000 & 19.77 & 96.53\\
    \midrule
    QM9 & 12 & 129433 & 18.0 & 18.6 \\
    ZINC  &  1  & 10000 / 1000 / 1000& 23.1 & 49.8\\
    \bottomrule
    \end{tabular}
\end{table}

\end{document}